\documentclass[11pt]{article}
\usepackage[margin=1in]{geometry}
\usepackage[T1]{fontenc}
\usepackage{lmodern}
\usepackage{graphicx}
\usepackage{booktabs}
\usepackage{amsmath,amssymb,amsthm}
\usepackage[numbers,sort&compress]{natbib}
\usepackage[colorlinks=true,linkcolor=blue,citecolor=blue,urlcolor=blue]{hyperref}
\usepackage{microtype}
\usepackage{placeins}

\newtheorem{theorem}{Theorem}
\newtheorem{proposition}{Proposition}
\newtheorem{corollary}{Corollary}
\newtheorem{lemma}{Lemma}
\newtheorem{assumption}{Assumption}
\theoremstyle{definition}
\newtheorem{definition}{Definition}
\theoremstyle{remark}
\newtheorem{remark}{Remark}

\title{The Kinetics of Training: A Driven-Nucleation Rate Law\\ for Emergence, Plasticity Loss, and Circuit Control in Language Models}
\author{Lei Dong\\ \small Independent Researcher \\ \small \texttt{dongleiit@hotmail.com}}
\date{July 2026}

\newcommand{\tstar}{t^{*}}
\newcommand{\eps}{\varepsilon}

\begin{document}
\maketitle

\begin{abstract}
A capability appears in a language model when the last parts of its circuit align in one stochastic attempt, and getting all but one right is worth nothing. We show this no-partial-credit joint alignment is the \textbf{rate-limiting step} of capability formation. Two fingerprints: in a shortcut-free apparatus a five-part circuit missing three waits as long as a three-part circuit missing three (1.19--1.37), so the wait counts \emph{missing} parts, not size; and on Pythia across seven capabilities and three scales, ablating one part leaves a median 17\% of the capability in 32 of 32 discriminating cells, where partial credit predicts 50--83\% ($p=2\times10^{-10}$), while a random non-part head leaves 100\%.

One rare event whose barrier grows with missing parts yields one rate equation --- sites $\times$ attempts $\times$ drive $\times\,e^{-\beta K}$, minus destruction --- read three ways, each preregistered with frozen constants. \emph{Forward}: a capability flat at baseline ignites at a step of our choosing once the mix passes a concentration floor (10/10 above, 0/12 below), and while still flat its arrival is datable from its precursor to 5\% median error on six held-out models. \emph{Backward}: the delay to learn a withheld capability grows with waiting until, past a critical step, it never ignites --- yet validation loss falls smoothly throughout, so standard monitors are blind to it. We locate the damage (heads commit to the base data) and isolate the cure: re-initializing only the query--key slices restores learnability (6/6) while the value slices do nothing (0/6). We prove the mechanism in a controlled gated-attention model, where occupation forces a finite deadline whose consequences need \emph{no mixing assumption}. \emph{Completed}: SGD's noise fails the fluctuation--dissipation test, so we install one and anneal, melt and pin circuits on schedule. Frozen factors predict never-run combinations to 1--8\%. Scope: conjunction circuits in transformers to 1.4B, two families.
\end{abstract}

\section{Introduction: training has a statics; it is missing its kinetics}
\label{sec:intro}

A substantial body of theory now describes learning in \emph{equilibrium} terms. Singular learning theory characterizes training stages as phase transitions of a Bayesian free energy \citep{watanabe2009algebraic,hoogland2024developmental,chen2023dynamical}; data-mixing thresholds have been derived as capacity-allocation optima \citep{gu2026mixing}; grokking has been modeled as a first-order transition \citep{rubin2024grokking} and as thermal escape from a metastable state \citep{ersoy2026noise}. This is, in the language of materials science, the \emph{equilibrium phase diagram} of training: which phases exist and which is preferred.

A phase diagram, however, answers none of the questions a practitioner asks. \emph{When} will this capability appear? \emph{How fast} will it grow, and can it be accelerated? \emph{Until when} can it still be learned --- and why does a network that has trained too long on the wrong data lose the ability to learn it at all \citep{dohare2024loss}? \emph{Which} circuit will a given perturbation destroy first? These are questions about \emph{rates}, and a free energy is not a rate. Metallurgy needed a second diagram --- the time--temperature--transformation (TTT) diagram --- before phase knowledge became process control. Training does not yet have one.

This paper constructs that missing kinetics. Its single object of study is the \emph{nucleation rate} of a computational circuit,
\begin{equation}
J \;=\; \underbrace{N}_{\text{sites}} \times \underbrace{\nu}_{\text{attempt freq.}} \times \underbrace{\sigma(c)}_{\text{supersaturation}} \times \underbrace{e^{-\beta K}}_{\text{barrier}} \;-\; \underbrace{D}_{\text{destruction}},
\label{eq:master}
\end{equation}
derived once from the stochastic dynamics of SGD (\S\ref{sec:theory}--\S\ref{sec:mathcore}) and then read three ways. Read \emph{forward} --- raising the drive $\sigma$ past a floor --- it is capability \emph{emergence}, with a derived clock that dates arrival before the capability is measurable (\S\ref{sec:emergence}). Read \emph{backward} --- as the site term $N$ is consumed by training itself --- it is \emph{loss of plasticity}, supplied with the quantitative law, horizon, mechanism, and repair that the plasticity literature identifies as missing (\S\ref{sec:plasticity}). And \emph{completed} --- injecting the temperature that SGD provably lacks --- it becomes \emph{process control}: annealing, selective dissolution, and quench-and-release schedules for individual circuits (\S\ref{sec:control}). Every factor in Eq.~\eqref{eq:master} is measured independently, frozen, and tested on never-run combinations (\S\ref{sec:synthesis}); every section of this paper either measures one factor or reads the equation in one direction.

Such a construction earns its keep only if it pays out on problems the field already cares about, in the field's own language. It pays out on three. \textbf{Loss of plasticity} is a recognized open problem: the phenomenon is established in \emph{Nature} \citep{dohare2024loss}, the warm-starting puzzle \citep{ash2020warm} stands, and the causes literature is explicitly partial --- intervening on any single proposed mechanism is insufficient \citep{lyle2024disentangling}. What exists are remedies without a law, a timetable, or a located mechanism; \S\ref{sec:plasticity} supplies each, quantitatively. \textbf{Capability-level data scheduling}: data-mixing methods \citep{xie2023doremi,liu2024regmix,ye2024mixing} optimize loss, not which capabilities form; the floor, the clock, and the shelf life answer ``for this mix and this size, will capability X emerge, when, and what is the deadline to still get it.'' \textbf{Emergence early-warning}: frontier safety frameworks poll evaluations and react \citep{ganguli2022predictability}; the precursor readout dates arrival while the capability itself is flat at baseline.

\textbf{Scope, stated up front.} Two model families (Pythia \citep{biderman2023pythia}, OLMo \citep{groeneveld2024olmo} --- the public suites with dense early checkpoints), scales to 1.4B, mechanism-level capabilities plus one task-level demonstration, causal constants from a controlled continued-training regime whose \emph{structure} (not constants) is the claim --- with the central causal results additionally reproduced from random initialization; and one explicitly unproved mathematical premise (Assumption~\ref{as:mixing}) that is flagged wherever it is used.

\section{The rate equation, forced step by step}
\label{sec:theory}

We separate throughout what is \emph{cited} (established results we do not re-prove), what is \emph{proved here} (under stated assumptions), and what is \emph{measured}. The derivation chain is short and each step forces the next.

\paragraph{Step 1 (cited): minibatch SGD, in the diffusion regime, is an SDE.}
With minibatches, the update is the full gradient plus zero-mean noise; in the diffusion-approximation regime (small steps, locally Gaussian batch noise) SGD is approximated by
$d\theta = -\nabla L(\theta)\,dt + \sqrt{2\eps\,\Sigma(\theta)}\,dW$
with explicit weak-error bounds \citep{li2017stochastic,mandt2017stochastic}. Production optimizers (Adam, clipping, schedules) deform this SDE without removing its two load-bearing features --- drift plus state-dependent noise --- but we make no stronger claim: the regime condition is part of the theory's stated scope (\S\ref{sec:scope}), and all causal constants are measured under plain SGD/AdamW where the approximation is controlled.

\paragraph{Step 2: why the barrier is exponential in $K$ --- two independent foundations.}
\emph{(A) Computational --- proven complexity theory for the tested task class.} For the family our premise tests instantiate --- $K$-wise conjunctions none of whose strict subsets carries signal, a property \emph{proved exactly} for our apparatus in Lemma~\ref{lem:nopartial} --- the exponential cost in $K$ is a theorem of learning theory, not an ansatz. Any statistical-query algorithm solving the $(n, k)$-sparse parity problem with $T$ queries of tolerance $\tau$ must satisfy $T/\tau^2 \ge \Omega(n^k)$ \citep{kearns1998efficient,blum1994weakly}; gradient methods with bounded gradient precision fall within the SQ framework (see the reduction discussed in \citealp[App.~A.1]{barak2022hidden} and references therein), and SGD provably operates near this limit, learning $k$-parities in $n^{O(k)}$ steps in the architectures where the guarantee is established \citep{barak2022hidden}. Complementing the cost bound, the staircase and leap theorems \citep{abbe2022merged,abbe2023leap} prove the dichotomy on which our rate-determining-step analysis rests: targets whose monomials each add at most one new coordinate (subset signal at every stage) are SGD-learnable in $O(d)$ samples --- an exact iff-characterization in the mean-field regime --- while a leap of size $\ell$ costs $\tilde\Theta(d^{\max(\ell-1, 1)})$ online-SGD steps where proven (Gaussian nested-monomial targets under layer-wise training; CSQ lower bounds in general; conjectured for vanilla SGD, with the leap-2 case established for standard minibatch SGD by \citealp{glasgow2024sgd}). These theorems govern the task class at large ambient dimension and in stated training regimes; our apparatus is a small-dimension member of the class whose defining no-subset-signal property holds exactly (Lemma~\ref{lem:nopartial}), and training is single-pass throughout (batch reuse can evade the leap barrier \citep{dandi2024benefit}).

\emph{What the theorems fix, and what they do not.} They fix the \emph{form} --- cost exponential in the number of jointly-required parts --- not the \emph{constant}. In the $n^{k}$ scaling, $n$ counts candidate coordinates over which the relevant support must be found; it is not a vocabulary size, and no reading of the bound assigns $\beta$ a universal value. Written as $e^{\beta}$ per part, $\beta$ is therefore a logarithm of an \emph{effective} candidate pool, to be measured per system exactly as an activation energy is measured, not derived, in chemical kinetics. The measured values are informative on precisely this point: in the toy the per-part cost is $\hat\beta = 1.75$ (summed family) and $\ge 2.40$ (parity family), i.e.\ effective pools of $\approx6$ and $\ge11$ --- the scale of the local window the design makes relevant, and far below the full context, showing the search is structured rather than blind. On real models the clock's $\beta \approx 0.23$ corresponds to an effective pool near unity: real circuit formation is not a blind combinatorial search at all, which is exactly what the staged picture of \S\ref{sec:rds} predicts --- cheap subset-signal-bearing stages first, leaving a small, heavily guided residual joint step. Consistently, per-part costs measured on real models by other protocols are of the same modest order and vary with circuit and regime ($\approx0.23$ from the clock; $0.47$--$0.62$ from regeneration; $\approx1.0$ from direct-$K$ control in prior work \citep{lei2026circuits}) --- a family- and circuit-indexed material constant, as the theory says it must be, and not a universal number the theorems could have supplied.
\emph{(B) Physical --- the nucleation reading, conditional on mixing.} For a diffusion of Step 1's form, the rate of a rare transition out of a metastable basin is, to leading exponential order, $J \asymp \exp(-\Delta S/\eps)$, with $\Delta S$ the quasipotential action of the least-improbable escape path --- Freidlin--Wentzell \citep{freidlin2012random,dembo1998large}; Kramers in the reversible case \citep{kramers1940brownian}. Given mixing on the formation timescale (Assumption~\ref{as:mixing}), large-deviation theory leaves no alternative leading-order form. \emph{Only this leg} requires that assumption, and only this leg supplies the thermodynamic vocabulary --- temperature, supersaturation, melting --- and hence the control operations of \S\ref{sec:control}. Rigorous escape-time and threshold analyses for online SGD exist in adjacent settings and scale exactly as this picture requires: sample complexity $N^{\max(1, k-1)}$ for information exponent $k$, with stretched-exponential trapping below threshold \citep{benarous2021online}. Three gaps between the classical theorems and production training are named rather than hidden: batch noise is not asymptotically small (the small-noise limit is a regime condition, whose diagnostic --- memorylessness --- is itself measured); adaptive optimizers carry moment state beyond a $\theta$-diffusion (our controlled measurements run under AdamW, so the measured signatures already include this deformation); and the landscape evolves during training (the quasi-stationarity clause of Assumption~\ref{as:mixing}).

\paragraph{Step 3 (proved here): for a $K$-part circuit, $\Delta S = \beta K$.}
This is the mathematical core of the paper --- the point where the generic exponential acquires its \emph{content} --- and it occupies \S\ref{sec:mathcore}.

\paragraph{Step 4 (measured): what multiplies the exponential.}
The prefactor factorizes as sites $\times$ attempt frequency $\times$ supersaturation under conditional independence (a mean-field approximation whose regime of validity, and measured breakdown at the boundary, are quantified in \S\ref{sec:synthesis}). The supersaturation has the measured form $\sigma(c) \propto (c - c_0)^{\gamma}$ above a floor concentration $c_0$ and is zero below it (\S\ref{sec:emergence}). The destruction term $D$ has two components with different supports: a \emph{reversion drift} on sub-critical embryos (always present --- it is what the drive must overcome, and what sets the floor), and a \emph{knockdown} component acting on formed structure, which is Arrhenius in an \emph{injected} temperature and dormant at native settings and practical learning rates (\S\ref{sec:control}).

\paragraph{The non-equilibrium branch (diagnosis cited; test and prescription ours).}
That SGD is out of equilibrium --- broken detailed balance, anisotropic state-dependent noise --- is established \citep{chaudhari2018stochastic,ziyin2026irreversibility}, and we do not claim the diagnosis. What we add is the \emph{operational} step: a fluctuation--dissipation \emph{qualification test} that any candidate ``temperature knob'' must pass, the measured result that all six native SGD knobs fail it (\S\ref{sec:notemp}), and the prescription that follows --- the drive is data, and a true temperature, if wanted for control, must be injected (\S\ref{sec:control}).

\section{Methods and apparatus}
\label{sec:methods}

All experiments fall into three apparatus classes; every number in the paper traces to one of them, and every protocol below was frozen (with pass/kill criteria) before the corresponding formal runs.

\textbf{Controlled toys (premise tests, EK suite; nose/melt/quench).} Models are 2-layer GPT-NeoX transformers (hidden 256, 4--16 heads, MLP $4\times$, rotary 25\%), trained with AdamW (lr $5\times10^{-4}$, $\beta=(0.9, 0.95)$, weight decay $0.01$, gradient clip $1.0$), batch 64, from seeded random initialization. \emph{Conjunction task} (EK-1/2/3): sequences of length 64 over a 64-token vocabulary follow a peaked Markov chain; with probability $\nu$ (the drive) a position is replaced by the sum mod 64 of the tokens at the $K$ fixed offsets --- the $K$-part target. The probe is next-token accuracy on fully-enabled sequences (256-sequence batch, evaluated every 10--25 steps); $\tstar$ = first crossing of $0.5$ (crossings at $0.3/0.5/0.7$ recorded). \emph{Open-loop query-parity task} (EK-4/5): sequences of length 96 over $\{0, 1\}$ plus query tokens; at query positions (gaps $\ge W{+}3$, guaranteeing every window contains only raw i.i.d.\ bits --- no feedback channel) the next token is the parity of the $K$ bits at $K$ of the $W$ window slots, the remaining slots being distractors. Parity makes every strict subset of the relevant bits exactly independent of the target. Score = accuracy at answer positions (chance $0.5$); $\tstar$ = first crossing of $0.75$ (crossings at $0.6/0.75/0.9$ recorded). \emph{Missing-parts grid} (EK-5): a neutral substrate (8{,}000 steps of constant-answer queries --- builds the readout, zero parts) is followed by a $+1$ curriculum, each stage trained to completion (score $\ge 0.9$ on three consecutive evaluations, plus 500 consolidation steps) and checkpointed; grid cells load the $s$-part substrate and train the $K$-part task with a 70/30 task/maintenance query mix (maintenance sustains the built parts; both the 30\% and 0\% variants are reported, with their respective identified artifacts, in \S\ref{sec:premise-tests}); slot roles are randomly permuted per seed so that slot difficulty averages out within every cell. \emph{Injected bath} (\S\ref{sec:control}): isotropic Gaussian weight noise of per-step scale $\sqrt{2 T \cdot \mathrm{lr}}$ added to all parameters --- the qualified temperature of Definition~\ref{def:fdt}.

\textbf{Statistics.} Waiting times are censoring-aware throughout: medians count censored runs as $+\infty$ (the Kaplan--Meier median, since all censoring occurs at the budget end; reported as undefined below 50\% formation), and means use the shifted-exponential maximum-likelihood estimator with right-censoring. The estimators were validated directly by rerunning every censored cell at doubled budget: the KM medians predicted the extended-budget measurements exactly in all three cases (the naive median-of-formed had been 15--19\% low). Calibration (smoke) seeds are excluded from all analyses.

\textbf{Public-checkpoint observation (\S\ref{sec:emergence}).} Mechanism-level probes are single forward passes at public Pythia and OLMo checkpoints in fp32 with eager attention (a stated numerical precondition: bf16 softmax underflow collapses the scores). Induction score = prefix-matching attention on repeated random tokens \citep{olsson2022context} (best head); precursor score = previous-token attention (best head); task probes are logit-difference accuracies with train/eval-disjoint content. $\tstar$ = first 0.5-crossing, log-interpolated between checkpoints; thresholds $0.3/0.5/0.7 \times$ linear/log interpolation give held-out median errors 4.4--13.0\% in all six variants. Clock constants were fit on five Pythia models, frozen, and only then evaluated on held-out models.

\textbf{Controlled continued training and from-scratch reproduction (\S\ref{sec:emergence}--\S\ref{sec:plasticity}).} Substrates are public Pythia models (70M--410M) or random initializations of the same architecture. The drive is supplied as repeated-token passages mixed into plain text at per-batch concentration $c$; the \emph{switch} replaces the mix at an experimenter-chosen step; \emph{aging} trains $S$ steps on plain data before the switch; \emph{resets} restore named weight groups (attention / MLP / embeddings) of an aged substrate to young values; the \emph{revival} sweep raises $c$ on a dead substrate. Arm sizes: switch/shelf/reset $n = 5$; floor sweep 0/12 and 4/5 at the critical concentrations; others $n = 2$--3 as flagged in \S\ref{sec:scope}. Every number is recomputable from released per-experiment JSON files and scripts.

\section{The mathematical core: a combinatorial barrier and its clock}
\label{sec:mathcore}

This section states the assumptions, proves the barrier and clock laws, and reports the experiments that test each assumption \emph{separately}. The logical status of every claim is marked, and the ledger now has three grades rather than one. \emph{Proved outright:} the no-partial-credit structure of the test apparatus (Lemma~\ref{lem:nopartial} --- Assumption~\ref{as:joint} is exact by construction there), and the exponential cost in $K$ for the task class, which stands on the query-complexity and leap theorems of \S\ref{sec:theory}(A) independently of any stochastic-analysis premise. \emph{Conditional theorems:} the nucleation-form propositions below, exact given Assumptions~\ref{as:mixing}--\ref{as:indep}; of these, only the \emph{physical reading} still leans on the unproven mixing clause, which no result of ours can currently discharge for transformer SGD. \emph{Measured:} every remaining premise, each falsifiable against a concrete alternative and tested in a dedicated preregistered experiment (\S\ref{sec:premise-tests}) independently of the conclusions it licenses.

\subsection{Setting and definitions}

\begin{definition}[Circuit, parts, order parameter]
\label{def:circuit}
A \emph{circuit} is a set of $K$ jointly necessary computational parts (attention-head functions) whose conjunction implements a capability. $K$ is defined operationally as the size of the \emph{minimal causal part set}: the smallest set of components each individually necessary at a just-post-ignition checkpoint (single-component ablation destroys the capability), machine-counted by the engine (\S\ref{sec:engine}), not assigned by hand. Its \emph{order parameter} $\phi(t)\in[0,1]$ is the normalized completeness (probe) score, and its formation time $\tstar$ is the first $0.5$-crossing of $\phi$ (threshold-robustness is quantified in \S\ref{sec:emergence}).
\end{definition}

\begin{assumption}[SDE regime and mixing --- the open frontier]
\label{as:mixing}
On the formation timescale, (a) the SDE approximation of Step 1 holds, and (b) the dynamics within the pre-formation basin is sufficiently mixing that the quasi-stationary escape theory of \citet{freidlin2012random} applies. Part (b) is \emph{not proved for transformer SGD}; it is the single unproved premise on which the propositions below are conditional, we flag it rather than hide it, and \S\ref{sec:scope} states what would change if it were established (every proposition below becomes an unconditional theorem).
\end{assumption}

\begin{assumption}[Simultaneous joint alignment]
\label{as:joint}
Formation of a $K$-part circuit requires the $K$ sub-alignments to be present \emph{simultaneously} in an attempt window; partial progress on a strict subset does not accumulate into the crossing. (Empirically discriminated from the alternative --- sequential, ``archivable'' assembly --- in \S\ref{sec:premise-tests}: the two models make opposite predictions, and the data select this one.)
\end{assumption}

\begin{lemma}[Zero drift on subset-supported directions --- exact in the test apparatus]
\label{lem:nopartial}
In the parity apparatus of \S\ref{sec:methods}, let the window bits $(b_i)$ be i.i.d.\ uniform and independent of the remaining context $Z$, let $R$ ($|R| = K$) be the relevant slots, and for any $S$ with $R \not\subseteq S$ let $\mathcal{A}_S = \sigma((b_i)_{i \in S}, Z)$ be the information available to a strict-subset feature. Then:
\textbf{(a)} $\mathbb{E}[Y \mid \mathcal{A}_S] = \tfrac12$ a.s.;
\textbf{(b)} among all $\mathcal{A}_S$-measurable predictors, the population cross-entropy is \emph{globally minimized} at chance ($p \equiv \tfrac12$) --- no descent below chance exists anywhere in the subset-supported predictor class;
\textbf{(c)} (Fourier form) the target's expansion over the window bits consists of the \emph{single} character $\chi_R$: $\mathbb{E}[(2Y{-}1)\chi_T] = 0$ for every $T \neq R$, so the population gradient's correlation with any partial character vanishes identically;
\textbf{(d)} (stationarity) on the chance manifold, the projection of the population loss gradient onto the $\mathcal{A}_S$-measurable function subspace is zero, for every strict subset $S$ simultaneously.
Assumption~\ref{as:joint} is therefore not an idealization in the apparatus but \emph{exact by construction}, at the level of the dynamics and not merely of the information.
\end{lemma}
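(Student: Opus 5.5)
The whole lemma reduces to one elementary fact: since $R \not\subseteq S$, some relevant slot $j \in R\setminus S$ exists, and its bit $b_j$ is uniform and independent of $\mathcal{A}_S$. I would write $Y = b_j \oplus W$ with $W = \bigoplus_{i\in R\setminus\{j\}} b_i$. Here $W$ is a function of bits other than $b_j$. Conditioning on the larger $\sigma$-algebra $\mathcal{G} = \sigma(\mathcal{A}_S, (b_i)_{i\neq j})$, which is independent of $b_j$ by the i.i.d.\ and $Z$-independence hypotheses, gives $\Pr(Y=1\mid\mathcal{G}) = \Pr(b_j = 1 - W\mid \mathcal{G}) = \tfrac12$. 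The tower property then yields (a) for $\mathcal{A}_S \subseteq \mathcal{G}$. One point needs care: the query token's identity and its position live in $Z$. I would state explicitly that the window bits are i.i.d.\ conditionally on the query position. This holds by the gap construction of \S\ref{sec:methods}, which guarantees that the window contains only raw bits, and I expect it to be the only real modelling subtlety.

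For (b), take any $\mathcal{A}_S$-measurable predictor $p$. The population cross-entropy is $\mathbb{E}[\ell(p,Y)] = \mathbb{E}\bigl[\mathbb{E}[\ell(p,Y)\mid\mathcal{A}_S]\bigr]$. By (a), the inner conditional expectation equals $-\tfrac12\log p - \tfrac12\log(1-p)$. This is pointwise $\ge \log 2$, with equality iff $p = \tfrac12$, by Gibbs' inequality or strict convexity. Hence the global minimum over the class is attained exactly at $p \equiv \tfrac12$ almost surely.

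For (c), I would use the Walsh basis $\chi_T = \prod_{i\in T}(1-2b_i)$ over the $W$ window bits. Then $2Y-1 = -\chi_R$ (up to the sign convention), and orthonormality of characters under the uniform measure gives $\mathbb{E}[(2Y-1)\chi_T] = -\mathbb{E}[\chi_R\chi_T] = -\mathbb{E}[\chi_{R\triangle T}] = 0$ for $T\neq R$. The same argument holds after additionally conditioning on $Z$, by independence.

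For (d), I would use that the gradient of the cross-entropy with respect to the logit $f$ is $\sigma(f) - Y$. On the chance manifold ($\sigma(f)=\tfrac12$), the functional gradient is $g = \tfrac12 - Y$. Its $L^2$ projection onto the subspace of $\mathcal{A}_S$-measurable functions is the conditional expectation $\mathbb{E}[g\mid\mathcal{A}_S] = \tfrac12 - \mathbb{E}[Y\mid\mathcal{A}_S] = 0$ by (a), and this holds for every strict $S$ at once. For parameter gradients, the chain rule gives $\partial_\theta L = \mathbb{E}[g\,\partial_\theta f]$. So whenever $f$'s sensitivity $\partial_\theta f$ is $\mathcal{A}_S$-measurable, the gradient vanishes by the tower property. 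I would phrase the dynamical conclusion in exactly this form, which supports the closing remark that Assumption~\ref{as:joint} holds at the level of the dynamics.

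The main obstacle is stating (d) precisely enough to be true. I would restrict the claim to the function-space projection together with subset-supported parameter directions, rather than all parameters. I would also verify the conditional-independence hypothesis on $Z$ for the actual sampling scheme.
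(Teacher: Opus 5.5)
Your proposal is correct and follows the paper's proof part for part. In (a) you use a relevant bit outside $S$ that is uniform and independent of $\mathcal{A}_S$; you isolate one bit $b_j$ and condition on an enlarged $\sigma$-algebra, where the paper uses the XOR over all of $R\setminus S$ directly. In (b) you condition on $\mathcal{A}_S$ and minimize $\tfrac12\ell(p,1)+\tfrac12\ell(p,0)$ pointwise, in (c) you use character orthonormality, and in (d) you apply the tower property to the functional gradient $p-Y$ against $\mathcal{A}_S$-measurable directions, exactly as the paper does. The sign in $2Y-1=\pm\chi_R$ is only a convention. Your chain-rule extension to parameter directions with $\mathcal{A}_S$-measurable sensitivity is a harmless addition beyond what the paper states.
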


\begin{proof}
(a) $Y = \bigoplus_{i \in S \cap R} b_i \oplus \bigoplus_{j \in R \setminus S} b_j$; the second factor is uniform and independent of $\mathcal{A}_S$ since $R \setminus S \neq \emptyset$. (b) Conditioning on $\mathcal{A}_S$, $\mathbb{E}[\ell(p, Y) \mid \mathcal{A}_S] = \tfrac12\ell(p, 1) + \tfrac12\ell(p, 0)$, uniquely minimized at $p = \tfrac12$; take expectations. (c) $2Y - 1 = \chi_R$, and characters are orthonormal under the uniform measure. (d) For any $\mathcal{A}_S$-measurable function-space direction $g$, the cross-entropy functional gradient component is $\mathbb{E}[(p - Y) g] = \mathbb{E}[(p - \mathbb{E}[Y \mid \mathcal{A}_S]) g] = \mathbb{E}[(p - \tfrac12) g]$, which vanishes at $p \equiv \tfrac12$.
\end{proof}

\begin{remark}[Dynamical reading]
Population drift can neither build nor be rewarded for any strict-subset feature (b, d); all progress toward the target flows through the joint-$K$ channel (c); minibatch fluctuation is the only mover of subset features and is unrewarded. This is precisely the ``no feedback from guessed indices unless the subset is exactly correct'' property that \citet{barak2022hidden} identify for sparse parity, here established for our apparatus by construction --- the leap-$K$ property \citep{abbe2023leap} instantiated.
\end{remark}

\begin{remark}
For real circuits, Assumption~\ref{as:joint} is an empirical property rather than a construction, and it is measured, not presumed: pre-ignition, the target capability's own curve is flat to slopes of $10^{-6}$--$10^{-5}$/step and a placebo sweep of 200 random heads carries no predictive signal (42.5\% error = guessing; \S\ref{sec:emergence}) --- the behavioral face of ``no partial credit.'' Real assembly is \emph{staged} precisely where partial credit does exist (parts with independent utility form early, \S\ref{sec:rds}); the assumption governs the rate-limiting final step, which is where it is tested.
\end{remark}

\begin{assumption}[Part independence, with a measured failure domain]
\label{as:indep}
Away from the capacity wall, the $K$ sub-alignment events in an attempt window are approximately independent. This is the mean-field idealization of the theory; its failure \emph{is itself measurable} (Corollary~\ref{cor:dependence}) and is measured in \S\ref{sec:premise-tests}.
\end{assumption}

\subsection{The barrier law}

\begin{proposition}[Additivity of the combinatorial barrier]
\label{prop:additivity}
Under Assumptions~\ref{as:mixing}--\ref{as:indep}, let the $i$-th sub-alignment be a rare event of per-attempt probability $p_i \asymp e^{-s_i/\eps}$. Then the per-attempt formation probability satisfies
\[
P(\text{formation in one attempt}) \;=\; \prod_{i=1}^{K} p_i \;\asymp\; \exp\!\Big(-\tfrac{1}{\eps}\sum_{i=1}^{K} s_i\Big),
\]
so the escape action is additive, $\Delta S = \sum_i s_i$; for exchangeable parts ($s_i \equiv s$),
\[
J \;\propto\; \nu\, e^{-\beta K}, \qquad \beta := s/\eps .
\]
\end{proposition}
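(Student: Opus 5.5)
The plan is to reduce the statement to three ingredients already laid out above: the joint-alignment requirement (Assumption~\ref{as:joint}), approximate independence of the sub-alignment events (Assumption~\ref{as:indep}), and the attempt-rate reading of quasi-stationary escape (Assumption~\ref{as:mixing}). First fix one attempt window and let $E_i$ be the event that the $i$-th sub-alignment is present in it. By Assumption~\ref{as:joint}, formation in that window is exactly the event $\bigcap_{i=1}^K E_i$. The reason is that progress on a strict subset does not carry into the crossing, so no union over staged histories contributes. In the parity apparatus this step is exact by Lemma~\ref{lem:nopartial}(b,d): subset-supported directions have zero drift, so no other path to formation exists. Assumption~\ref{as:indep} then gives
\[
P\Bigl(\bigcap_{i=1}^K E_i\Bigr)=\prod_{i=1}^K p_i .
\]
Where independence is only approximate, I would carry a multiplicative correction $1+\delta$ with $\delta$ subexponential in $1/\eps$. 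This correction is the quantity Corollary~\ref{cor:dependence} later measures.

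The second step is the exponential bookkeeping. Write $p_i = C_i(\eps)e^{-s_i/\eps}$ with each $C_i$ subexponential, i.e.\ $\eps\log C_i\to0$; this is the meaning of $\asymp$ in Freidlin--Wentzell, where each $s_i$ is the quasipotential of the corresponding single sub-alignment. Then
\[
\prod_i p_i = \Bigl(\prod_i C_i\Bigr)e^{-\sum_i s_i/\eps}.
\]
For fixed $K$, a finite product of subexponential factors is still subexponential. So $-\eps\log P\to\sum_i s_i$, which identifies $\Delta S=\sum_i s_i$ at leading exponential order.

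The third step converts the per-attempt probability into a rate. Under Assumption~\ref{as:mixing}(b), the pre-formation basin re-equilibrates between attempts, so successive windows are approximately i.i.d. Formation is then a geometric (asymptotically exponential) waiting time with rate $J=\nu P$, where $\nu$ is the attempt frequency. Setting $s_i\equiv s$ gives
\[
J\propto\nu\,e^{-Ks/\eps}=\nu\,e^{-\beta K}.
\]
The remaining prefactors $N$ and $\sigma$ are absorbed into the proportionality here and are treated separately in Step~4.

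The main obstacle is the interface between the first two steps. The Freidlin--Wentzell exponent of the joint event is, in principle, the minimal action over \emph{joint} escape paths, and that can be smaller than $\sum_i s_i$ if the paths interact. Independence is exactly what rules this out: it makes the joint rate function additive. So the statement's honest content is conditional, and the proof should say this explicitly. To make the step rigorous, I would first try to show that if the sub-alignments live in disjoint parameter blocks (distinct heads) and the noise covariance is block-diagonal, the quasipotential is additive across blocks. Additivity would then follow from independence rather than being assumed. Failing that, I would state the additivity as holding at the level of Assumption~\ref{as:indep} and leave it there.
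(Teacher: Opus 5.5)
Your proposal is the paper's proof with the bookkeeping made explicit. Assumption~\ref{as:joint} makes formation the event $\bigcap_i E_i$, and Assumption~\ref{as:indep} makes its probability $\prod_i p_i$. Logarithms then add the exponents (your subexponential prefactors $C_i$ are the right way to read $\asymp$), exchangeability gives $sK$, and Assumption~\ref{as:mixing} converts the per-attempt probability into a rate with prefactor $\nu$. Your i.i.d.-windows argument for that last step is an elementary version of the paper's appeal to Freidlin--Wentzell.

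One caution on the optional strengthening you sketch. Disjoint parameter blocks with block-diagonal noise do not by themselves make the quasipotential additive: the drift must also separate across blocks. A no-partial-credit target is exactly a coupled drift, since the pull on one part scales with the other parts' masses (Lemma~\ref{lem:weakcoupling}). So falling back on Assumption~\ref{as:indep}, as the paper does, is the correct resolution.
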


\begin{proof}
By Assumption~\ref{as:joint} the formation event in an attempt window is the intersection $\bigcap_i E_i$ of the $K$ sub-alignment events; by Assumption~\ref{as:indep} the probability of the intersection is the product of the probabilities. Taking logarithms, the large-deviation exponents add --- the elementary additivity of independent rare-event costs that underlies Cram\'er's and Sanov's theorems \citep{dembo1998large}. Exchangeability collapses the sum to $sK$. Under Assumption~\ref{as:mixing}, the escape rate inherits this action through the Freidlin--Wentzell principle (Step 2), giving $J \propto \nu e^{-\beta K}$ with the attempt frequency $\nu$ as prefactor.
\end{proof}

\begin{remark}
The mathematical content is deliberately elementary; the \emph{scientific} content is the identification (Assumption~\ref{as:joint}) and the independence idealization (Assumption~\ref{as:indep}), both of which are tested as separate experimental questions in \S\ref{sec:premise-tests} rather than bundled into a fit. This is why the exponential-in-$K$ form is \emph{derived, not chosen}: given the premises, no other form is available, and the premises are checked independently of the conclusion.
\end{remark}

\begin{corollary}[Deviation from linearity measures part dependence]
\label{cor:dependence}
If the parts share a common enabling cause (positive association), $P(\bigcap_i E_i) \ge \prod_i P(E_i)$, so $\Delta S \le sK$: the barrier grows \emph{sublinearly} in $K$. If forming parts consume a shared resource (negative interaction, e.g.\ capacity near the wall), the barrier grows \emph{superlinearly}. The measured departure of $\ln \tstar$ from linearity in $K$ is therefore a direct, signed measure of part dependence --- the theory's own error bar, not an anomaly.
\end{corollary}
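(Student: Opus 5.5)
The plan is to reuse the proof of Proposition~\ref{prop:additivity} with Assumption~\ref{as:indep} removed. Assumption~\ref{as:joint} still makes formation in an attempt window the intersection $\bigcap_i E_i$. Write $P(\bigcap_i E_i) = \rho_K \prod_i P(E_i)$, where $\rho_K$ is the joint-to-product ratio; $\rho_K\equiv 1$ is exactly the independent case. The effective action is then
\[
\Delta S_{\mathrm{eff}} = -\eps \ln P\Big(\bigcap_i E_i\Big) = \sum_i s_i - \eps\ln\rho_K ,
\]
and under Assumption~\ref{as:mixing} this quantity, not $\sum_i s_i$, enters the Freidlin--Wentzell rate in Step 2. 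For exchangeable parts this gives $\ln \tstar = \mathrm{const} + \beta K - \ln \rho_K$. So the departure of $\ln\tstar$ from linearity in $K$ is, up to sign, $\ln\rho_K$. The rest of the proof is to show that the sign of $\ln\rho_K$ is fixed by the sign of the dependence between parts.

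\emph{Positive case.} I would formalize ``shared enabling cause'' as follows: conditional on a latent variable $Z$ (for example a data or initialization state), the events $E_i$ are independent, and each $P(E_i\mid Z)=f_i(Z)$ is nondecreasing in $Z$. Then
\[
P\Big(\bigcap_i E_i\Big) = \mathbb{E}\Big[\prod_i f_i(Z)\Big] \ge \prod_i \mathbb{E}[f_i(Z)],
\]
by Harris/Chebyshev's association inequality for comonotone functions. This is applied inductively, since a product of nonnegative nondecreasing functions is again nondecreasing. Hence $\rho_K\ge1$ and $\Delta S\le sK$. To get sublinearity in the sense of the statement, I would also show that $\ln\rho_K$ grows with $K$. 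For a common cause, adding a part multiplies $\rho$ by $\mathbb{E}[\prod_{i\le K+1} f_i]/(\mathbb{E}[\prod_{i\le K} f_i]\,\mathbb{E}[f_{K+1}])\ge1$. This ratio is again $\ge1$ by association, applied under the tilted measure $\propto \prod_{i\le K} f_i$, which stochastically increases $Z$. So $\ln\rho_K$ is nondecreasing, and the increments of $\Delta S$ are at most $s$.

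\emph{Negative case.} I would model ``shared resource'' as negative association: either functions $f_i$ that are monotone in opposite directions within the conditional representation, or the Joag-Dev--Proschan notion of negative association, for example via a capacity constraint like sampling without replacement. The same inequalities then reverse, giving $\rho_K\le1$, increments at least $s$, and superlinear growth. The final sentence of the statement then follows immediately: once a baseline per-part $s$ is fixed, the signed curvature of $\ln\tstar$ in $K$ reads off $\ln\rho_K$ and hence the sign of the dependence.

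The main obstacle is that ``common enabling cause'' and ``shared resource'' are informal, and the sublinear/superlinear claim concerns growth in $K$, not just a one-step inequality. I would settle this by stating the latent-variable monotone model as the definition of each case. With that definition the monotonicity of $\rho_K$ follows from the tilting argument above. I would note explicitly that the pairwise inequality $P(\bigcap_i E_i)\gtrless\prod_i P(E_i)$ alone already gives $\Delta S\lessgtr sK$ at each $K$, which is the weaker, always-valid reading.
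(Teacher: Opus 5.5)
Your proposal is correct and follows the paper's own reasoning. The paper gives that reasoning inline in the corollary: drop Assumption~\ref{as:indep} from the proof of Proposition~\ref{prop:additivity}, compare $P(\bigcap_i E_i)$ with $\prod_i P(E_i)$, and take logarithms. What you add is precise meaning for the informal notions. ``Common cause'' becomes conditionally independent parts with comonotone $f_i$, via Chebyshev/Harris; ``shared resource'' becomes negative association. You also add the increment bound on $\ln\rho_K$, which the paper does not state.

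Two small repairs are needed.

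\emph{Negative case.} Keep only the Joag-Dev--Proschan route. It yields both $\rho_K\le 1$ and $\rho_{K+1}/\rho_K = P(E_{K+1}\mid \bigcap_{i\le K}E_i)/P(E_{K+1})\le 1$. The ``opposite directions'' variant fails once $K\ge 3$. With a scalar latent variable, any three monotone $f_i$ include two of the same direction, so some pair is positively associated and the product inequality can go either way.

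\emph{Curvature.} What you establish is a signed \emph{deviation} of $\ln\tstar$ from the line $\mathrm{const}+\beta K$, not a signed curvature. Curvature does hold in the exchangeable case: there $\ln\rho_K=\ln\mathbb{E}[f(Z)^K]-K\ln\mathbb{E}[f(Z)]$ is convex in $K$ by log-convexity of moments. It does not hold in general.
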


\subsection{The clock law}

\begin{proposition}[Memoryless waiting and the precursor clock]
\label{prop:clock}
Under Assumption~\ref{as:mixing}, the normalized escape (formation) time is asymptotically exponentially distributed with mean $\propto 1/J$ \citep{day1983exponential,bovier2015metastability}. For a composition of $K$ exchangeable parts and its single-part precursor driven by the same stream, the exact ratio is
\[
\frac{\mathbb{E}\,\tstar_{\rm comp}}{\mathbb{E}\,\tstar_{\rm prec}}
\;=\; \frac{A_{\rm prec}}{A_{\rm comp}}\, e^{\beta (K-1)},
\]
where $A = N\nu$ collects each circuit's prefactor (sites $\times$ attempt frequency). Under the additional prefactor-cancellation assumption $A_{\rm prec} \approx A_{\rm comp}$ (A3, stated and tested below), the clock $C(K) = e^{\beta(K-1)}$ is a \emph{derived} constant: measuring the precursor's arrival dates the composition's arrival before the composition exists.
\end{proposition}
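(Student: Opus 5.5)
The proof has three steps. First, the memoryless waiting law is imported from metastability theory. Second, Proposition~\ref{prop:additivity} is applied to each of the two circuits separately. Third, the two means are divided.

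\emph{Step 1 (exponential law).} Under Assumption~\ref{as:mixing}, the pre-formation dynamics is a small-noise diffusion that is quasi-stationary in a metastable basin. We invoke the exit-time theorem of \citet{day1983exponential} (see also \citet{bovier2015metastability}). It says that the exit time $\tau$ from the basin satisfies two things as $\eps\to 0$. First, $\tau/\mathbb{E}\tau$ converges in law to an $\mathrm{Exp}(1)$ variable. Second, $\mathbb{E}\tau$ equals the inverse of the quasi-stationary escape rate, up to factors that are subexponential in $1/\eps$. We identify the formation event with exit through the part of the boundary where the order parameter crosses $\phi=1/2$ (Definition~\ref{def:circuit}). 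Then $\tstar$ is this exit time, and $\mathbb{E}\tstar = 1/J$ in the rate-theoretic sense. Here $J$ is the per-unit-time formation rate: the number of sites $N$, times the attempt frequency $\nu$, times the per-attempt success probability. The one point needing care is that the mean, not only the exponential order, is $1/J$. For that we take the prefactor-level version, i.e.\ the quasi-stationary rate. Attempts in disjoint windows are independent, so formation is a thinning of a renewal attempt process by a small probability. A thinned renewal process converges to a Poisson process, which yields the exponential law with mean $1/(N\nu P_{\rm att})$.

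\emph{Step 2 (per-circuit rates).} Proposition~\ref{prop:additivity} gives the composition's per-attempt success probability as $\prod_{i=1}^K p_i = e^{-\beta K}$ for exchangeable parts. The same proposition with $K=1$ gives the precursor's as $e^{-\beta}$. The hypothesis ``driven by the same stream'' is what makes the single-part cost $s$, and hence $\beta=s/\eps$, the same in both expressions. So we have $J_{\rm comp}=A_{\rm comp}e^{-\beta K}$ and $J_{\rm prec}=A_{\rm prec}e^{-\beta}$, where $A=N\nu$. These are exact statements once the barrier exponent is written in this normalized form. In other words, we read the $\asymp$ of Proposition~\ref{prop:additivity} as absorbing any $O(1)$ factors into $A$. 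I will state this convention explicitly so that ``exact ratio'' has a clear meaning.

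\emph{Step 3 (ratio and clock).} Dividing $\mathbb{E}\tstar_{\rm comp}=1/J_{\rm comp}$ by $\mathbb{E}\tstar_{\rm prec}=1/J_{\rm prec}$ gives $(A_{\rm prec}/A_{\rm comp})\,e^{\beta(K-1)}$. Under the cancellation hypothesis A3 the prefactor ratio is $1$, so the composition's mean time is the precursor's mean time multiplied by $C(K)=e^{\beta(K-1)}$. This is the clock. Because it involves only $\beta$ and $K$, measuring $\tstar_{\rm prec}$ dates $\tstar_{\rm comp}$.

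\emph{Main obstacle.} Step 1 is the delicate part. The Day/Bovier results give the exponential law and the Eyring--Kramers prefactor only for fixed landscapes and nondegenerate saddles. Proposition~\ref{prop:additivity} instead produces the rate as a product over attempt windows. To reconcile the two descriptions, I would present Step 1 through the Poisson thinning argument just described and use the Freidlin--Wentzell/Day theorem only to justify treating attempts as memoryless given mixing. Any residual subexponential discrepancy is absorbed into $A$. That discrepancy then cancels in the ratio precisely to the extent that A3 holds.
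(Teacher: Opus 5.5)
Your proposal is correct and follows the paper's own proof. The paper likewise imports the exponential exit law and the relation mean $\asymp$ inverse rate from Day and Bovier--den Hollander, gets $J_{\rm comp}=A_{\rm comp}e^{-\beta K}$ and $J_{\rm prec}=A_{\rm prec}e^{-\beta}$ from Proposition~\ref{prop:additivity}, divides, and invokes A3. Your convention of absorbing all $O(1)$ and subexponential factors into $A$, together with the Poisson-thinning gloss on the mean, just makes explicit the sense in which the paper calls a ratio built from an $\asymp$ relation ``exact,'' and it rightly places any residual mismatch on A3.
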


\begin{proof}
The exponential limit law for exit times from a metastable basin under small noise is classical \citep{day1983exponential}; mean exit time $\asymp$ inverse rate is the content of metastability theory in the same regime \citep{bovier2015metastability}. The ratio of means is then the ratio of inverse rates, which is $({A_{\rm prec}}/{A_{\rm comp}})\,e^{\beta(K-1)}$ by Proposition~\ref{prop:additivity}; A3 reduces it to $e^{\beta(K-1)}$.
\end{proof}

\begin{remark}[The prefactor ratio is counted, not assumed]
\label{rem:prefactor}
The prefactor is where a skeptic should push --- ``a single head and a multi-head composition have search spaces differing by orders of magnitude'' --- so we \emph{count} it rather than assume it. The prefactor of a rate-limiting step is $A = N_{\rm sites}(\text{missing structure}) \times \nu$, where $N_{\rm sites}$ counts eligible configurations \emph{of the missing structure only}: the missing-parts grid (EK-5, \S\ref{sec:premise-tests}) measures directly that built structure is kinetically invisible, so what enters is not the circuit's total configuration space but the remainder's. For the clock, the precursor's missing structure is one part among the $H_{\ell_1}$ heads of its layer, and the composition's post-precursor missing structure is one part among the $H_{\ell_2}$ heads of the reading layer:
\[
\frac{A_{\rm prec}}{A_{\rm comp}} \;=\; \frac{H_{\ell_1}}{H_{\ell_2}} \;=\; 1 \quad \text{for uniform architectures (equal heads per layer, as in Pythia and OLMo).}
\]
The pair-counting intuition ($A_{\rm comp} \propto H_1 H_2$, or more generally a $\binom{H}{K}$ combinatorial factor) prices forming \emph{both} parts jointly from nothing --- that is the $s{=}0$ column of the grid, and it is carried by the barrier $e^{-\beta K}$, not by the clock's prefactor. This is not a verbal distinction but a measured one, and the missing-parts grid is precisely the experiment that decides it: were the prefactor to carry a combinatorial factor over the \emph{whole} part set, cells with more already-built parts would search a systematically different space and rows of fixed missing count could not be flat --- yet they are, to $1.19$--$1.37$ at $K{-}s{=}3$ across $K = 3, 4, 5$ (\S\ref{sec:premise-tests}), with overlapping bootstrap intervals. Two further measurements support the counting model with no new assumptions: sites enter the rate linearly (EK-3, deep-barrier slope $-1.25$); and two different circuits ($K{=}2$, $C{=}1.24$; $K{=}4$, $C{=}2.1$) are consistent with one $\beta \approx 0.23$ --- two equations, one constant, impossible under freely varying prefactors. The model is falsifiable where it should be: architectures with \emph{unequal} per-layer head counts are predicted to shift the clock by $\ln(H_{\ell_1}/H_{\ell_2})$, a concrete out-of-family test. Residual imbalance is absorbed into the effective $\beta$, part of why $\beta$ is family-indexed.
\end{remark}

\begin{remark}[What is fit and what is not, and the range-of-$K$ question]
$\beta$ is a family-dependent \emph{effective} constant (the general $\sum_i s_i$ form explains its variation across capabilities and families); it is fit \emph{once, on-axis} and then frozen. The clock's \emph{form} is derived; its prospective, frozen-constant validation on held-out models (median 5\% error, 6/6, cross-family) is reported in \S\ref{sec:emergence}. We are explicit about the dynamic range: our own direct-$K$ measurements span $K = 2..4$, over which exponential, power, and linear fits are locally indistinguishable ($R^2$ 0.980/0.978/0.989) --- too narrow for model selection by fit alone, which is exactly why the premises, not the curve, carry the argument (\S\ref{sec:premise-tests}). Wide-range evidence exists in an adjacent literature: for $k$-sparse parity --- the canonical task in which \emph{no strict subset carries signal}, i.e.\ a pure joint-alignment problem --- SGD's time-to-learn scales as $d^{\Theta(k)}$ \citep{barak2022hidden}: $\ln \tstar$ linear in $k$ with per-part cost $\ln d$, the additive combinatorial barrier over a $k$-range far wider than ours. This is Proposition~\ref{prop:additivity} instantiated in an independent setting, not our data; extending our own ladder to $K \ge 5$ with a subset-signal-free design is the single highest-value additional experiment this theory admits.
\end{remark}

\subsection{Testing the premises, not just the conclusion}
\label{sec:premise-tests}

Each assumption above is the target of a dedicated experiment (design frozen before running; the EK suite):

Statistics are censoring-aware throughout: medians count censored runs as $+\infty$ (the Kaplan--Meier median here, since all censoring occurs at the budget end), and means use the shifted-exponential maximum-likelihood estimator with right-censoring; cells with $<$50\% formation are reported as undefined rather than estimated. The estimators were validated directly: rerunning every censored cell at doubled budget (to 92--100\% formation), the censoring-corrected KM medians predicted the extended-budget measurements \emph{exactly} in all three cases, and the MLE means to 0.2--10\%, where the naive median-of-formed had been 15--19\% low. $K{=}1$ cells form at the evaluation-grid floor ($\tstar \le 25$) and are therefore reported only as bounds, never entered into fits.

\textbf{(EK-1) Memorylessness of waiting times} [tests the premise of Prop.~\ref{prop:clock}]. At fixed $(K,\nu)$, the formation-time distribution over 60 seeds per cell spans the predicted crossover (Fig.~\ref{fig:ek}a): at shallow barriers the waiting is nearly deterministic (raw $\kappa \approx 220$, CV 0.07), and as the barrier deepens the shifted-tail shape parameter falls to $\kappa = 2.0$ and then \emph{plateaus at the memoryless limit} --- $\kappa = 1.2$ and $1.3$ at the two deepest barriers (60/60 formed at extended budget) --- the exponential regime, exactly where the clock law needs it, and stable once reached. (Cell labels: the deep-barrier cells here are the summed design's nominal $K{=}3$, which the discriminator below shows solves the task by a two-part shortcut. The measured claim is unaffected --- what is established is that waiting becomes memoryless once formation is barrier-dominated, a statement about the kinetics and not about the part count --- but we do not attach a part count to these cells.) Consistent prior evidence: $\kappa \approx 80$ at full batch $\to$ $\kappa \to 1$ under minibatch noise \citep{lei2026circuits}. \emph{Logical status:} exponential waiting is a \emph{necessary condition} of Assumption~\ref{as:mixing}(b), so EK-1 is a consistency test that could have falsified the assumption and did not --- it is not, and is not offered as, a proof of mixing. \emph{Regime diagnosis:} $\kappa$ also arbitrates between the two proven mechanisms available to the computational leg. \citet{barak2022hidden} show that at population scale SGD amplifies a deterministic Fourier-gap signal of size $\gamma = \Theta(n^{-(k-1)/2})$ --- progress hidden but not random --- while at batch sizes far below the $\sim n^k$ needed to resolve $\gamma$ per step, the search is fluctuation-dominated. The two mechanisms share the same $n^{\Theta(k)}$ cost (the SQ bound is mechanism-blind) but differ in waiting-time shape: deterministic amplification gives $\kappa \gg 1$, fluctuation-driven escape gives $\kappa \to 1$. Our measured $\kappa$ spectrum ($\approx$220 at shallow barriers $\to$ 1.2--1.3 at deep ones, at batch 64) locates the apparatus in the fluctuation-dominated regime precisely where the nucleation reading applies --- and simultaneously shows the crossover toward the deterministic regime at shallow barriers, reconciling the two literatures rather than contradicting either.

\textbf{(EK-2) Additivity, and a measured pathway change} [tests Prop.~\ref{prop:additivity} and Cor.~\ref{cor:dependence}]. The \emph{same nested part set} is assembled into $K = 2, 3$ at fixed drive across three widths (Fig.~\ref{fig:ek}b). We report this family in full, including a contamination we found by direct probing and which withdraws part of it.

In the summed design the target at an enabled position is the sum of the $K$ preceding tokens; when two consecutive positions are both enabled, subtracting the two relations gives $s_t = 2 s_{t-1} - s_{t-K-1}$ --- an \emph{algebraic shortcut requiring only two attention targets, independent of $K$}. It is a second solution to the same task: it agrees with the true $K$-sum wherever the recurrence holds and disagrees where it is broken. A discriminating probe (scoring positions whose predecessor is \emph{not} enabled, so that the shortcut is wrong and the true sum still right) settles which solution each cell learns. The verdict is sharp: at $K{=}2$ the network learns the true sum (broken/valid accuracy ratio $0.94$, $n{=}8$), while at $K{=}3$ and $K{=}4$ it learns the shortcut (ratios $0.11$ and $0.03$; broken-position accuracy $0.060$ and $0.015$ against a chance level of $0.016$). \emph{We therefore withdraw the summed family's $K \ge 3$ cells as barrier measurements}: $\hat\beta = \ln(\tstar_3/\tstar_2) = 1.75$ compares a shortcut-formation time to a true-sum formation time and is not a clean per-part increment. The trustworthy per-part cost is the parity family's ($\ge$11$\times$; \S\ref{sec:premise-tests}, EK-4), and the reason for the gap is now mechanistic rather than mysterious: \emph{the summed design admits a shortcut and the parity design provably does not}.

The pathway change is itself a measured result, and it disposes of what previously looked like an anomaly. $K{=}4$ forming \emph{faster} than $K{=}3$ (1175 vs.\ 1738 in replication; 1138 vs.\ 1725 originally) is not a violation of additivity: from $K{=}3$ upward both cells run on the same $K$-independent shortcut, whose cost does not grow with $K$, so the ladder flattens rather than climbing. This is textbook parallel-pathway competition --- kinetic versus thermodynamic product --- with the crossover located at $K{=}3$: below it the true solution is the cheaper route, above it the two-part shortcut is, and the system takes whichever barrier is lower, not whichever solution is more accurate. The picture makes a sharp prediction, which we then tested: if every cell from $K{=}3$ upward runs on the same $K$-independent route, formation times must \emph{plateau} rather than continue climbing. Measured (8 seeds per cell): $300$, $1738$, $1175$, $938$, $962$ steps at $K = 2, 3, 4, 5, 6$, with the discriminator ratio $0.94$ at $K{=}2$ and $0.11, 0.03, 0.03, 0.03$ thereafter. The ladder flattens exactly where the discriminator says the route changes, and the size of the effect is not marginal: an additive ladder continued from the true-sum cells would place $K{=}6$ near $3\times10^{5}$ steps, against $962$ measured --- a factor of $350$. A confirmed prediction of a pathway crossover is stronger evidence for parallel-route kinetics than the original clean ladder would have been for additivity. Our first hypothesis for this cell, a metastable partial-sum intermediate (Ostwald staging), was tested by probing every partial sum during training and \emph{refuted}: 0/8 seeds showed a partial-sum plateau (peak partial accuracy $0.03$ against chance $0.016$). We report that failure because it is what forced the correct explanation.

What the width sweep still shows, within the uncontaminated $K{=}2$ cells: shrinking width toward the capacity wall inflates the barrier ($\tstar_2$: $300 \to 900 \to 2000$ from $d{=}256$ to $96$ to $64$, a $6.7\times$ capacity penalty) --- the capacity-competition side of Cor.~\ref{cor:dependence}, measured on cells the discriminator certifies as true-sum solutions.

\textbf{(EK-3) Product law versus bottleneck} [tests Assumption~\ref{as:joint}]. Simultaneous joint alignment predicts rate $\propto$ sites in the rare-event regime; sequential/archivable assembly predicts a bottleneck (max) law whose site-dependence should \emph{strengthen} as the drive stops binding. A sites(heads) $\times$ drive($\nu$) grid decides (Fig.~\ref{fig:ek}c): at the memoryless point the log--log slope of $\tstar$ against head count is $-1.25$ (confirmed uncensored at extended budget, 8/8) --- rate proportional to sites, the product law; in the drift-limited cells the slope collapses to $\approx 0$ --- no site statistics where formation is not a rare event. (As in EK-1, the deep-barrier cells are the summed design's nominal $K{=}3$; the site-proportionality is a statement about the kinetics of a barrier-dominated formation event, with no part count attached.) The bottleneck law predicts the \emph{opposite} pattern (site-dependence strongest at high drive) and is refuted in both regimes. Consistent prior evidence: $J \propto N \cdot r$, $R^2$ 0.97 \citep{lei2026circuits}.

\textbf{(EK-5) The missing-parts grid} [tests additivity through its sharpest fingerprint]. If barriers add, formation time depends only on the number of \emph{missing} parts, never on the circuit's total size: a substrate holding $s$ of $K$ parts (pre-built by staged curriculum, each stage trained to completion --- partially formed parts do not count, a threshold measured in prior work \citep{lei2026circuits}) waits only for the remaining $K-s$ to align jointly. A $(K,s)$ grid over $K = 2..5$, $s = 0..4$ (46 cells, 8--16 seeds each, slot roles randomized per seed, criteria preregistered) confirms the fingerprint where it is cleanest: at fixed $K{-}s = 3$ the cell medians for $K = 3, 4, 5$ agree to 1.19--1.37 (4550/6225/5412 steps; bootstrap 95\% intervals share a common overlap, $[1275, 8050]$), and at $K{-}s = 4$ to 1.08--1.34 (common overlap $[4912, 14412]$) --- \emph{a five-part circuit missing three waits as long as a three-part circuit missing three}, though its total barrier is far larger. The alternatives fail characteristically: an energetic (cluster-size) barrier requires within-row growth with $K$ (none observed; row maxima are interior), and sequential accumulation requires arithmetic row spacing (measured spacings are not arithmetic). Two protocol artifacts were caught, mechanism-traced, and controlled: a copy-task maintenance stream interferes specifically with $s{=}1$ cells (removing it restores 8/8 formation and a $>$2$\times$ lower median), while zero maintenance lets built parts decay in high-$s$ cells, in proportion to their exposure time --- under per-cell clean protocols the shallow-row spread is 1.86, and the deepest measured row is flat under \emph{both} protocols (1.13 and 1.33). A window-width sweep closed the last gap: apparent rung-height saturation at deep rows vanishes when distractor slots are replenished (rung ratio $\times2.1 \to \times3.9$ from $W{=}5$ to $6$, exceeding the entire budget at $W \ge 7$) --- a designed-in ceiling of the smallest window, not a failure of additivity (Fig.~\ref{fig:ek5}).

\begin{figure}[htbp]\centering
\includegraphics[width=\linewidth]{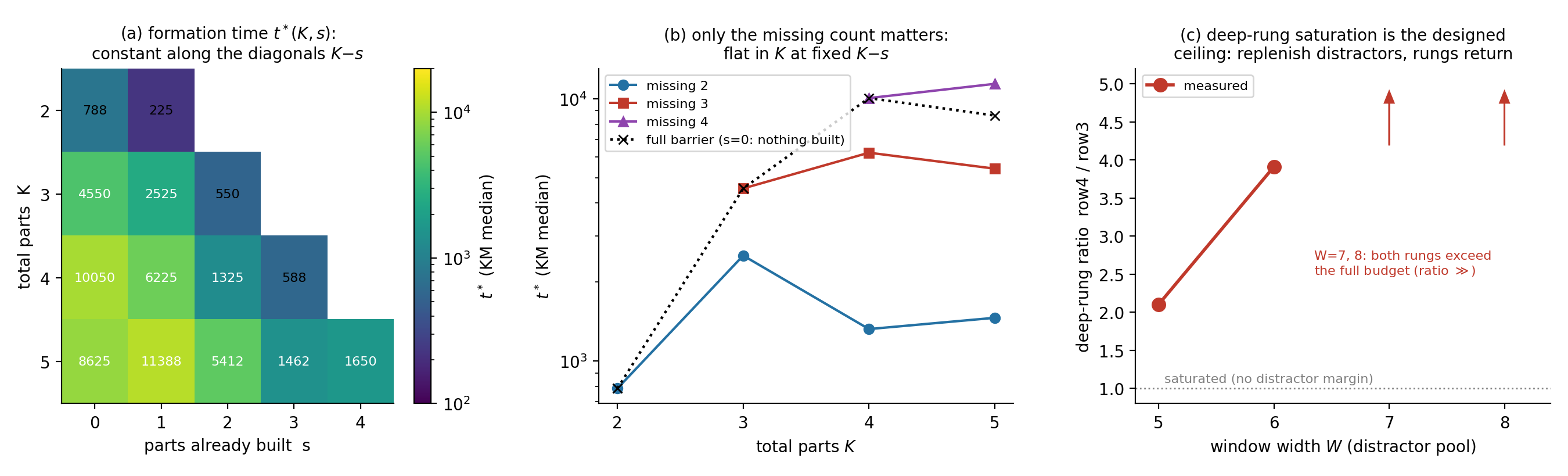}
\caption{\textbf{The missing-parts grid} (EK-5; preregistered; censoring-aware medians, uniform-protocol matrix). (a) Formation time $t^*(K,s)$ is constant along the diagonals of fixed $K{-}s$: what has been built is kinetically invisible. (b) The same data against total $K$: rows at fixed missing count are flat while the full barrier ($s{=}0$ column, dotted) climbs $\sim$13$\times$ --- the direct rate-determining-step measurement (\S\ref{sec:rds}). (c) The deep-rung saturation of the $W{=}5$ window is a designed-in ceiling: replenishing distractor slots restores the rung ($\times2.1 \to \times3.9$ at $W{=}6$; both rungs exceed the full budget at $W \ge 7$).}
\label{fig:ek5}
\end{figure}

\textbf{The regime map as a consistency check.} Three independent measurements draw the same boundary: waiting is memoryless ($\kappa \to 1$) precisely in the cells where the site-product law holds (slope $-1.25$), and deterministic ($\kappa \gg 1$) precisely where site statistics vanish (slope $\approx 0$). The nucleation theory's signatures appear together, in its claimed domain --- the barrier-dominated regime that real compositional circuits inhabit --- and switch off together outside it. \textbf{(A3, shared pace)} Across from-scratch ladder seeds, successive stage increments are strongly co-scaled ($\mathrm{corr}=+0.89$) --- precisely the common mode the ratio clock cancels.

\begin{figure}[htbp]\centering
\includegraphics[width=\linewidth]{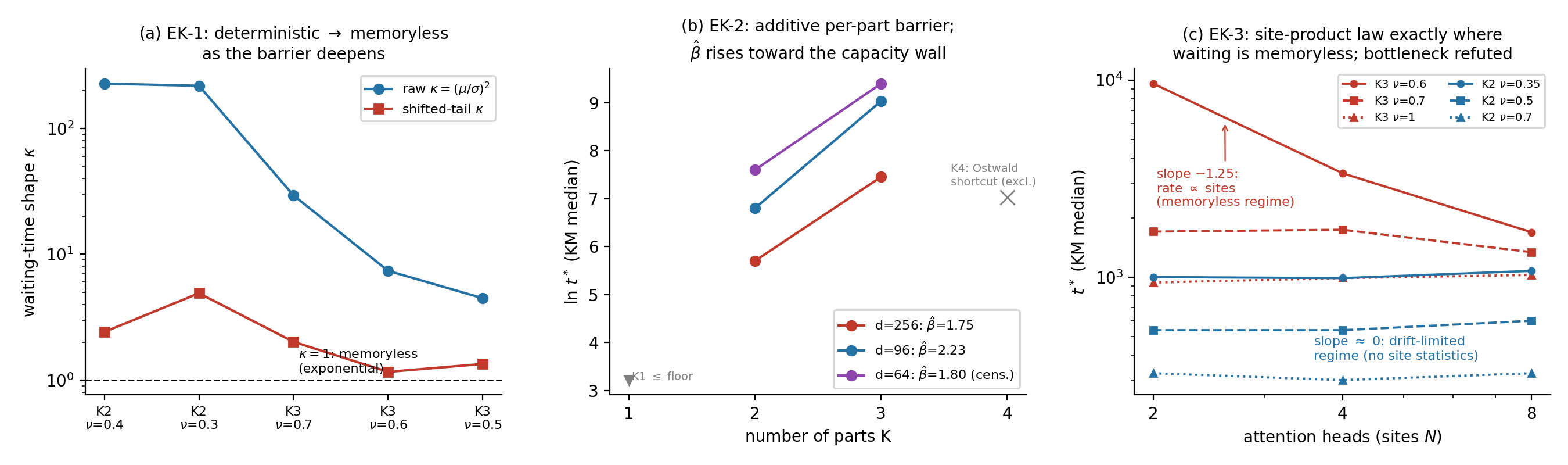}
\caption{\textbf{Testing the premises, not the conclusion} (EK suite; 46 preregistered cells including extended-budget reruns, censoring-aware statistics). (a) Waiting-time shape crosses from deterministic ($\kappa \approx 220$) to memoryless and plateaus there ($\kappa = 1.2, 1.3$ at the two deepest barriers) --- the premise of Prop.~\ref{prop:clock}. (b) The per-part barrier increment is additive at roomy width ($\hat\beta = 1.75$, both estimators) and rises toward the capacity wall --- Prop.~\ref{prop:additivity} and the superlinear dependence of Cor.~\ref{cor:dependence}. (c) Rate $\propto$ sites exactly where waiting is memoryless; no site statistics in the drift-limited regime; the bottleneck alternative predicts the opposite pattern and is refuted --- Assumption~\ref{as:joint}.}
\label{fig:ek}
\end{figure}

Near-wall breakdown of factorization (the 112\% point, error growing as a power of floor distance) is reported with the synthesis in \S\ref{sec:synthesis} --- by Cor.~\ref{cor:dependence} it is the predicted signature of part dependence at shared capacity, and it converts the law's boundary from a caveat into a falsifiable commitment.

\section{No intrinsic temperature: the fluctuation--dissipation qualification test}
\label{sec:notemp}

Equilibrium escape theory presumes a bath. Before Eq.~\eqref{eq:master} can be read as \emph{thermal} nucleation, one must ask which, if any, of training's native knobs supplies a temperature. We make the question operational.

\begin{definition}[Temperature qualification]
\label{def:fdt}
A noise source qualifies as a \emph{temperature} for the escape dynamics if it satisfies the fluctuation--dissipation structure of an equilibrium bath \citep{kubo1966fluctuation}: its covariance is (i) isotropic, (ii) independent of the current state (in particular, it does not vanish at loss minima), so that (iii) the stationary distribution it induces is the Gibbs measure of the loss at that temperature, and Kramers/Freidlin--Wentzell escape applies with $\eps = T$.
\end{definition}

This is a \emph{criterion}, cited, not a theorem of ours; our contribution is to apply it as a qualification exam. Two clarifications delimit the claim. First, it concerns the \emph{equilibrium} notion of temperature --- the one under which Gibbs sampling and Kramers escape are licensed; non-equilibrium effective temperatures (noise temperatures, FDT-violation ratios) can of course be \emph{defined} for SGD, and our statement is precisely that none of them earns the escape-theory role. Second, the verdict is empirical and unambiguous. Minibatch-gradient noise has covariance proportional to the per-sample gradient second moment: state-coupled (it dies where gradients die), highly anisotropic (covariance rank as low as $\sim$1\% \citep{chaudhari2018stochastic}), and therefore fails (i)--(iii) structurally --- the known result that SGD does not sample a Gibbs measure \citep{chaudhari2018stochastic}, sharpened recently to full thermodynamic irreversibility with the explicit warning that step size is not a temperature \citep{ziyin2026irreversibility}. We tested all six native knobs (minibatch noise, learning rate, batch size, dropout, weight decay, and their clock-rescaled combinations) against Definition~\ref{def:fdt} in the controlled apparatus of \S\ref{sec:control}: \emph{zero qualify}. Learning rate is not even ``partially'' thermal: converting its noise amplification into an effective clock leaves a monotonic $60\times$ degradation and zero net barrier assistance --- it is an \emph{anti-thermal amplifier}, and its noise power is strongly state-dependent (measured factor $2.5\times$ between formed and plateau states for SGD noise; $72\times$ for dropout). This is consistent, not in tension, with the real-LM finding that within the igniting region learning rate simply accelerates ignition (\S\ref{sec:emergence}): in both regimes the unified statement is that \emph{lr is a clock, never a temperature} --- it rescales time without ever supplying barrier assistance. Anti-thermal noise does not merely fail to help: because it is gradient-coupled, it selectively dismantles exactly the multi-part coincidences that Proposition~\ref{prop:additivity} prices --- measured as barrier-gated damage (high-$K$ circuits hurt $\approx 2.7\times$ more than low-$K$ on the dropout axis).

Two consequences organize the rest of the paper. First, \emph{the drive is data}: with no thermal term available, the only state-decoupled generation knob in Eq.~\eqref{eq:master} is the supersaturation $\sigma(c)$ --- the concentration of capability-relevant statistics in the stream. Formation in native training is \emph{athermal, driven nucleation}, and its control variable is the data mix (\S\ref{sec:emergence}). Second, \emph{the missing term can be installed}: an injected isotropic weight-noise bath passes the qualification by construction, and \S\ref{sec:control} shows that installing it buys exactly the control functions (anneal, dissolve, pin) that native training lacks. In prior work the injected bath was verified to restore a single $(U,D)$ pair reproducing drift, occupancy, and committor simultaneously \citep{lei2026circuits}.

\section{Reading the equation forward: emergence}
\label{sec:emergence}

Forward means: hold a substrate fixed, raise the drive $\sigma(c)$, and watch the generation term. The equation predicts a threshold (no nucleation below a supersaturation floor), a throttle (rate rising steeply above it), and a clock (the barrier ratio of Prop.~\ref{prop:clock}). Each is confirmed, the clock prospectively.

Probes, protocols, and preregistration are specified in \S\ref{sec:methods}; mechanism-level probes make ``emergence'' unambiguous, and constants were frozen before any held-out model was measured.

\subsection{The clock, prospectively validated}

Fit $C = 1.241$ on five Pythia models (leave-one-out median 3\%); froze constants; predicted six unseen models (five deduped-data variants and pythia-1b, outside the fitted scale range): 6/6 within the preregistered 25\% band, median 5.0\% (Fig.~\ref{fig:clock}). Cross-family: OLMo-1B --- different architecture, data, tokenizer --- measures 1.244 against the frozen 1.241. A cheap variant reads a single attention score at step 512 (0.36\% of training, target capability at baseline everywhere): 6/6, median 12.8\%. \emph{No circularity, twice over:} every prospective test used the human-known previous-token head from prior literature; the automatic engine (\S\ref{sec:engine}) is validated against these answers, not their source. And the prospective record concerns \emph{induction only}, with constants frozen before any held-out model was measured; the cross-circuit unification of $\beta$ (induction with IOI, \S\ref{sec:engine}) is a separate, explicitly post-hoc consistency check and is nowhere counted as a prospective result. \emph{No alternative signal:} pre-ignition, the target's own curve is flat (slopes $10^{-6}$--$10^{-5}$/step), and a placebo sweep shows 200 random heads predict at 42.5\% error (= guessing) versus the causal precursor's 8.7\%. Where \citet{lavie2026phase} conclude softmax-attention emergence is intrinsically unpredictable and \citet{baherwani2026random} that timing is initialization-random, these prospective results are a direct counterexample for the capability class covered here: the timing is predictable, from the precursor, because the barrier is combinatorial.

\begin{figure}[htbp]\centering
\includegraphics[width=\linewidth]{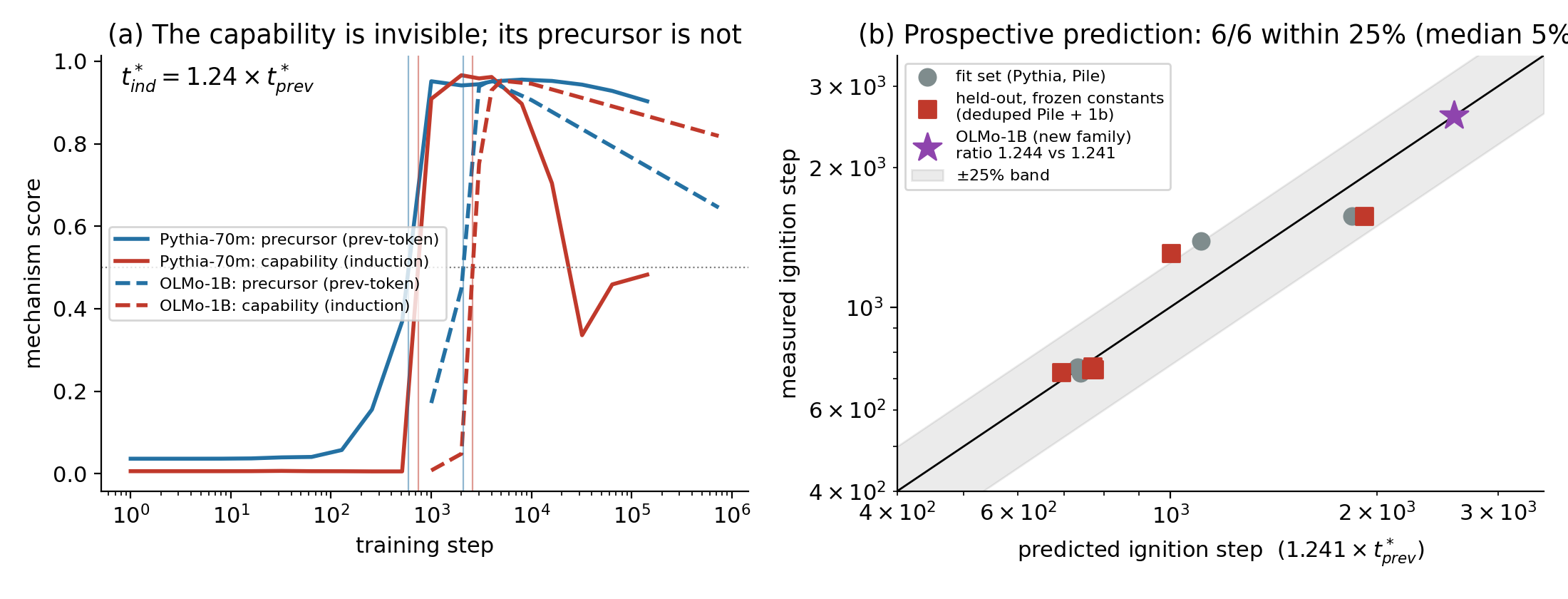}
\caption{\textbf{The clock.} (a) The capability is invisible while its precursor is not; the same two-stage shape reproduces on OLMo (dashed). (b) Prospective prediction with frozen constants: 6/6 held-out models within $\pm$25\% (median 5\%); OLMo-1B at 1.244 vs.\ frozen 1.241.}
\label{fig:clock}
\end{figure}

\textbf{Mechanism dates behavior.} Across 15/15 model--task pairs the mechanism ignites first; real-text in-context learning ignites simultaneously with the mechanism (ratios 0.93--0.97), strict exact-copy behavior lags $\times$1.27 (Fig.~\ref{fig:bridge}). A task-level few-shot skill (in-context label induction) shows the full ladder on public checkpoints: never below the capacity wall (70M flat through 143k steps), late and stretched at its edge (160M $\sim$12{,}400), scale-consistent above it (410M: 3{,}933; 1B: 3{,}822), $\sim$5$\times$ after the induction mechanism (Fig.~\ref{fig:task}). And on all choice-based probes the pre-ignition period is systematically \emph{below chance} --- an anti-phase reported as a robust correlational early-warning channel, \emph{not} a proven mechanism: the direct causal test was inconclusive because the precursor is redundant (\S\ref{sec:negative}).

\begin{figure}[htbp]\centering
\includegraphics[width=\linewidth]{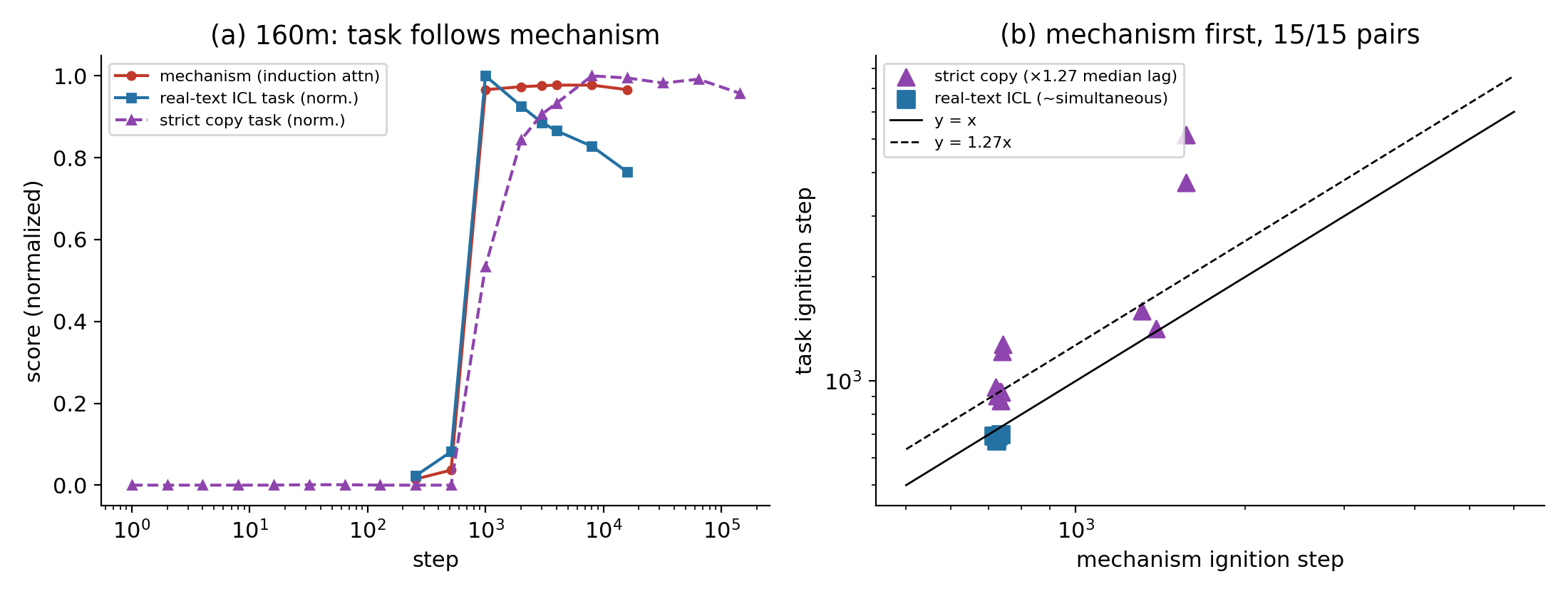}
\caption{\textbf{The bridge.} (a) At 160M, real-text in-context learning ignites with the mechanism; strict copy lags. (b) Mechanism-first in 15/15 pairs.}
\label{fig:bridge}
\end{figure}

\begin{figure}[htbp]\centering
\includegraphics[width=0.8\linewidth]{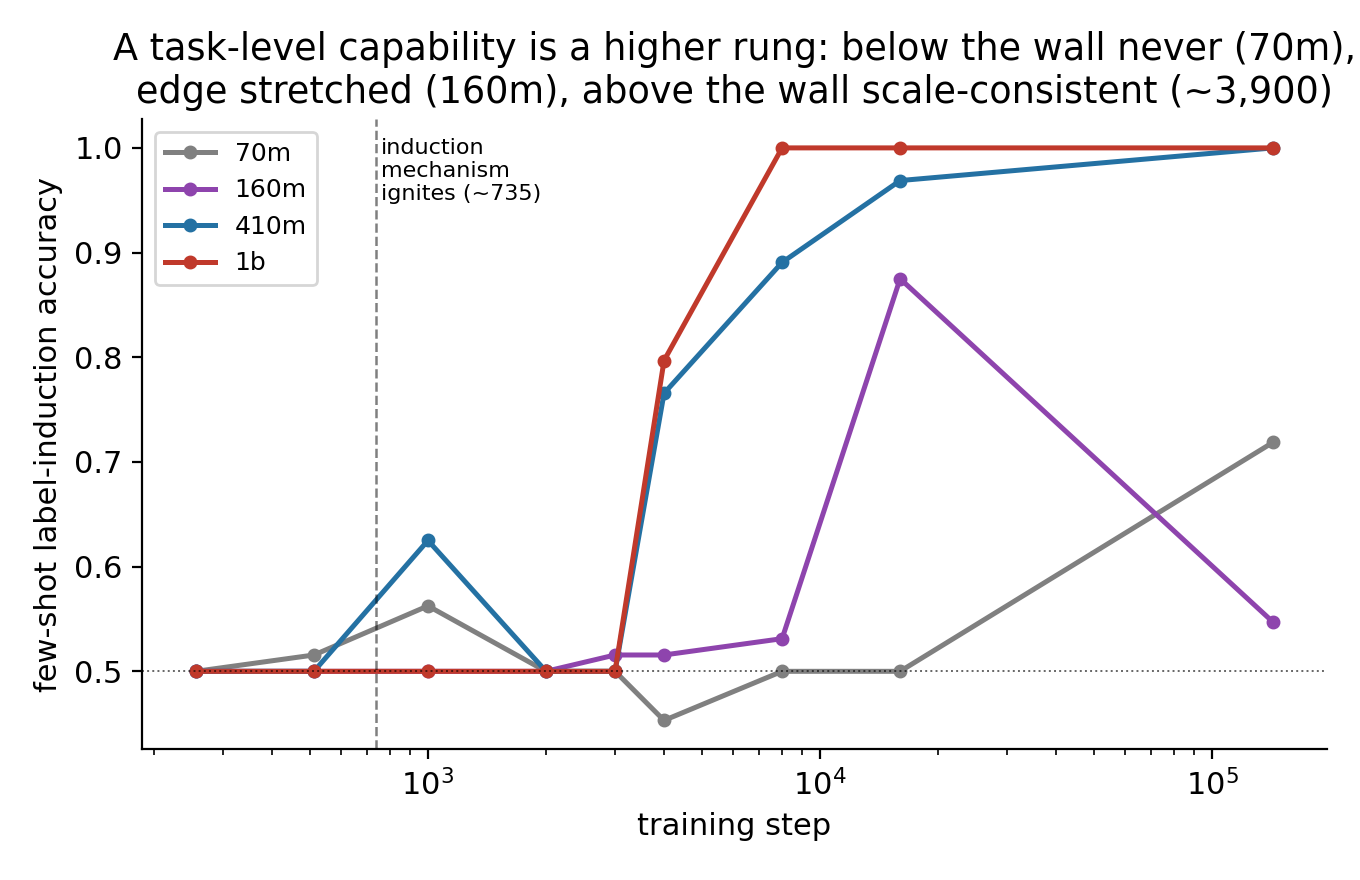}
\caption{\textbf{Task-level rung}: few-shot label induction never arrives below the wall, is stretched at its edge, and ignites scale-consistently above it.}
\label{fig:task}
\end{figure}

\subsection{The floor: a supersaturation threshold, proved intensive}

The floor is where the kinetic reading departs most sharply from the equilibrium one. \citet{gu2026mixing} document a data-mixing threshold and attribute it to capacity allocation --- an equilibrium answer (which phase wins the parameter budget). Equation~\eqref{eq:master} makes a stronger, kinetic claim: below a critical concentration the rate is \emph{suppressed beyond nucleation order} --- operationally zero for any subexponential token budget --- and the threshold is a property of the per-step stream composition, not of any total.

\begin{proposition}[The floor is a fold --- from structure, with no functional form assumed]
\label{prop:fold}
Let $\phi \in [0,1]$ be the alignment order parameter (the normalized product of attention masses on the $K$ required slots, so $\phi = 1$ iff every part is in place). Write the population drift as $\dot\phi = c\,G(\phi) - D(\phi)$, target minus competitor. Then the following three properties --- each \emph{derived}, none postulated --- suffice:
\textbf{(D1) the competitor's drift is linear in $\phi$}: $D(\phi) = r\phi$ with $r = K(1-c)\lambda' > 0$;
\textbf{(D2) the target's drift vanishes super-linearly at the origin}: $G$ is continuous on $[0,1]$ with $G(\phi)/\phi \to 0$ as $\phi \to 0^{+}$;
\textbf{(D3) it vanishes again at completion and is bounded between}: $G(1) = 0$, $G$ bounded, $G > 0$ on $(0,1)$.
Let $h(\phi) := G(\phi)/\phi$, extended by $h(0) := 0$; then $M := \max_{[0,1]} h$ is finite and attained at an interior $\phi^{*}$, and with $c_0 := r/M$:
\textbf{(a)} for $c < c_0$ the \emph{only} fixed point in $[0,1]$ is $\phi = 0$, and it is asymptotically stable --- the circuit direction has no resting place at all;
\textbf{(b)} for $c > c_0$ at least two positive fixed points $\phi_- < \phi_+$ exist, $\phi_-$ unstable and $\phi_+$ stable --- metastability with activated crossing over $\phi_-$;
\textbf{(c)} if $G \in C^1$ and $\phi^{*}$ is a non-degenerate maximum of $h$, the pair is born at $c = c_0$ through a tangency: a saddle--node (fold), not a transcritical exchange.
\end{proposition}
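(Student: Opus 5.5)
The plan is to reduce everything to the scalar function $h(\phi)=G(\phi)/\phi$ on $(0,1]$, since for $\phi>0$ the fixed-point equation $cG(\phi)=r\phi$ is equivalent to $h(\phi)=r/c$. We first establish that $M$ is finite and attained at an interior point. By (D2) and the convention $h(0):=0$, $h$ is continuous on $[0,1]$, because $G$ is continuous and $\phi>0$ away from the origin. So $h$ attains a finite maximum $M$. By (D3), $h(1)=G(1)=0$ and $h>0$ on $(0,1)$, so $M>0$ and the maximizer $\phi^*$ lies in the open interval $(0,1)$. The derivations of (D1)--(D3) from the attention-mass model are taken as given by the statement ("each derived"); we prove only the dynamical consequences (a)--(c).

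For (a), take $c<c_0=r/M$. Then $cG(\phi)\le cM\phi<r\phi$ for every $\phi\in(0,1]$, so $\dot\phi=cG-r\phi<0$ on $(0,1]$. Hence $\phi=0$ is the only fixed point, and every trajectory in $[0,1]$ decreases monotonically to a limit. That limit must be a fixed point, so it is $0$, which gives global asymptotic stability on $[0,1]$ (in particular local asymptotic stability). For (b), take $c>c_0$, so the level $r/c<M$. By the intermediate value theorem applied to $h-r/c$ on $[\varepsilon,\phi^*]$ (with $h(\varepsilon)<r/c$ for small $\varepsilon$, using $h\to0$) and on $[\phi^*,1]$ (with $h(1)=0$), we get crossings. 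Let $\phi_-$ be the largest zero of $h-r/c$ in $(0,\phi^*)$ and $\phi_+$ the smallest in $(\phi^*,1)$. Zero sets are closed, so these extremal crossings exist. Since $\operatorname{sign}\dot\phi=\operatorname{sign}(h-r/c)$ for $\phi>0$, we have $\dot\phi>0$ on $(\phi_-,\phi_+)$. We also have $\dot\phi\le0$ just left of $\phi_-$ and just right of $\phi_+$; we pick the crossings where the sign genuinely changes, which is possible because $h-r/c$ is negative near $0$ and near $1$. With those choices, $\phi_-$ repels to the right and $\phi_+$ attracts from both sides. The main subtlety is tangential or flat zeros of $h-r/c$. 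We handle them by selecting sign-change points, which exist because $h-r/c$ goes from negative to positive at $\phi^*$ and back to negative at $1$, and by stating stability in the one-dimensional (semi-stability-excluding) sense.

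For (c), assume $G\in C^1$ and $\phi^*$ is a non-degenerate maximum, so $h'(\phi^*)=0$ and $h''(\phi^*)<0$. Write $F(\phi,c)=cG(\phi)-r\phi=\phi\,(c\,h(\phi)-r)$. At $(\phi^*,c_0)$ we have $F=0$ and $\partial_\phi F=c_0\phi^*h'(\phi^*)=0$, so the fixed point is non-hyperbolic. For the fold conditions we check $\partial_cF=G(\phi^*)=M\phi^*\neq0$ and $\partial_\phi^2F=c_0\phi^*h''(\phi^*)<0$. These are exactly the saddle--node transversality and non-degeneracy conditions, so the normal form gives two branches $\phi_\pm\approx\phi^*\pm\sqrt{2M(c-c_0)/(c_0|h''(\phi^*)|)}$ for $c>c_0$ and none for $c<c_0$. (The $h''$ computation needs $G\in C^2$ near $\phi^*$; if only $C^1$ is available we instead use the strict local maximum directly, noting that $h-r/c$ has no zeros near $\phi^*$ for $c<c_0$ and exactly two for $c$ slightly above $c_0$.) To rule out a transcritical exchange, observe that the new pair is born at the interior point $\phi^*>0$, away from the persistent branch $\phi=0$. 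Moreover, $\partial_\phi F(0,c)=-r\neq0$ for all $c$ by (D1)--(D2), so $\phi=0$ never changes stability and no branch crosses it. We expect the main obstacle to be the regularity bookkeeping in (c) together with the choice of sign-changing roots in (b) when $h$ is not unimodal.
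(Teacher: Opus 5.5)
Your proposal is correct and matches the paper's proof of (a)--(c) essentially step for step: the extreme-value theorem for $h$, the bound $c\,h\le cM<r$, the intermediate value theorem on each side of $\phi^{*}$, and tangency at $\phi^{*}$ with $h''(\phi^{*})<0$. Your explicit fold conditions and the check $\partial_\phi F(0,c)=-r$ spell out what the paper only asserts, while the paper additionally sketches the origin of (D1)--(D3) in Lemmas~\ref{lem:scorediff}, \ref{lem:restoring}, \ref{lem:weakcoupling} and~\ref{lem:nopartial}, which you take as given. One tidy-up in (b): a root with a genuine two-sided sign change need not exist for general continuous $G$, but you do not need one, since the largest root below $\phi^{*}$ is already unstable ($\dot\phi>0$ on $(\phi_-,\phi^{*}]$) and $\inf\{\phi>\phi^{*}:\dot\phi<0\}$ is a Lyapunov-stable root.
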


\begin{proof}
\emph{The three properties are consequences of results already established.}
(D1) is the replicator geometry of softmax attention plus the restoring-force computation: the exact identity $\partial \mathbb{E}[L]/\partial s_j = a_j(\ell_j - \langle\ell\rangle_a)$ (Lemma~\ref{lem:scorediff}) makes every drift on an attention mass \emph{proportional to that mass}, and the competitor supplies a per-unit-mass restoring force bounded below by $\lambda' > 0$ uniformly over the basin (Lemma~\ref{lem:restoring}), weighted by its stream share $1-c$. A product of $K$ masses each decaying at rate $(1-c)\lambda'$ decays at $K(1-c)\lambda'$, so linearity survives the product. Linear decay is thus forced by the simplex geometry, not chosen for convenience.
(D2) is no-partial-credit in dynamical form, and the two structures compose. By Lemma~\ref{lem:weakcoupling} the target's pull \emph{per unit mass} at a required slot $r$ is controlled by the product over the \emph{other} required slots, $\phi/A_r$; the replicator factor of (D1) then gives $\dot A_r \propto A_r \cdot (\phi/A_r) = \phi$, and $\dot\phi = \phi\sum_r \dot A_r / A_r$ yields, in the symmetric configuration $A_r = \phi^{1/K}$,
\[
G(\phi) \;\propto\; K\,\phi^{\,2 - 1/K},
\]
whose exponent exceeds $1$ for every $K \ge 2$, so $G(\phi)/\phi \to 0$. A term linear in $\phi$ would be precisely partial credit, which Lemma~\ref{lem:nopartial} excludes exactly; note also that the exponent \emph{rises} with $K$, so deeper conjunctions have flatter origins and, by the argument below, higher floors.
(D3) is the cross-entropy's own saturation together with the simplex bound: masses lie in $[0,1]$, so $G$ is bounded; and when every part is in place the model predicts the target correctly, the error probability $1 - p_c \to 0$, and with it the label-dependent gradient --- $G(1) = 0$. Nothing is imposed by hand: the gain rises out of the origin, is capped by the simplex, and is switched off by being right.
\emph{The conclusion follows.} $h$ is continuous on the compact $[0,1]$ (continuity at $0$ is (D2), at $1$ it is $0$ by (D3)), hence attains a finite maximum $M$ at some $\phi^{*}$, interior because $h > 0$ on $(0,1)$ and $h(0) = h(1) = 0$. Positive fixed points solve $c\,h(\phi) = r$. (a) If $c < c_0$ then $c\,h(\phi) \le cM < r$ everywhere, so no positive root; and near the origin $\dot\phi = \phi(c\,h(\phi) - r) < 0$, giving asymptotic stability of $\phi = 0$. (b) If $c > c_0$ then $c\,h(\phi^{*}) > r$ while $c\,h(0) = c\,h(1) = 0 < r$, so the intermediate value theorem gives a root on each side of $\phi^{*}$; the sign of $\dot\phi = \phi(c h - r)$ changes $-,+,-$ across them, which is exactly $\phi_-$ unstable, $\phi_+$ stable. (c) At $c = c_0$ the equation $c\,h = r$ holds only at $\phi^{*}$, where $h'(\phi^{*}) = 0$: the curves are tangent, and non-degeneracy ($h''(\phi^{*}) < 0$) gives the standard saddle--node normal form, so the two branches are born together rather than crossing.
\end{proof}

\begin{remark}
Earlier versions of this argument \emph{assumed} a reduced drift $-r\phi + cg\,\phi^m/(1+\phi^m)$ and flagged the linear decay and the saturating form as modeling choices. They are choices no longer. (D1) is the replicator geometry of the softmax --- every drift on an attention mass is proportional to that mass, exactly (Lemma~\ref{lem:scorediff}) --- combined with a restoring force per unit mass that is bounded below uniformly in the basin (Lemma~\ref{lem:restoring}); (D2) is Lemma~\ref{lem:nopartial} in dynamical form, a linear term being precisely the partial credit the task class excludes; (D3) is the simplex bound plus the fact that a model which is right receives no gradient. Nothing about the shape of $G$ between its two zeros is used --- \emph{no functional form is assumed at all} --- which also means the floor's existence cannot be an artifact of the algebra chosen to express it. \textbf{What the derivation costs, stated plainly:} Lemma~\ref{lem:scorediff} is general to softmax attention, but Lemmas~\ref{lem:restoring} and \ref{lem:weakcoupling} are proved inside the controlled gated-attention model of App.~\ref{app:occupation}, so (D1)--(D2) inherit that model's idealizations (positional attention, readout locality, oracle gating). The gain is therefore not that the floor has become assumption-free --- it is that the assumptions have moved from a \emph{chosen formula} to a \emph{stated model}, where each is a consequence of an established lemma and each carries its own measurement. A reader who rejects the gated model may still read Prop.~\ref{prop:fold} as before, taking (D1)--(D3) as three structural hypotheses; that is strictly weaker than the previous version, which additionally fixed the functional form. Each ingredient additionally has its own measurement: a floor appears only against a \emph{persistent} competitor (the three-diluent sweep below: inert and quickly-mastered fillers give no floor, so a fixed $r > 0$ is the operative condition), and the reinforcement demonstrably does not run away but balances at finite $\phi$ (below $c_0$ formed circuits dissolve to chance within 200 steps; above it they survive $8/8$). The model also predicts the phenomenology above the floor: bistability with activated crossing --- the memoryless waiting measured in EK-1 --- and a barrier that shrinks with drive --- the throttle. The floor hypothesis of the next proposition is thereby derived, not free-standing.
\end{remark}

\begin{proposition}[Intensivity and the effective floor (suppression beyond nucleation order)]
\label{prop:floor}
Suppose (i) the drive enters the generation term of Eq.~\eqref{eq:master} only through the per-step concentration $c$ of capability-relevant statistics (an intensive quantity), and (ii) for $c < c_0$ the circuit configuration is not metastable under the drift: it is disjoint from the attractor set, the destruction drift exceeding the generation drift along every formation path. Then:
\textbf{(a)} (intensivity) at fixed $c < c_0$, increasing total tokens, steps, or batch size cannot induce formation --- the floor is independent of budget;
\textbf{(b)} (beyond-order suppression) the per-attempt probability of a \emph{sustained} crossing ($\phi > 1/2$ held for a window $\tau$) is bounded by $e^{-a\tau/\eps}$ for some $a > 0$, so formation is suppressed beyond any nucleation-rate order: unlike escape into a metastable target, which is a one-shot barrier crossing, holding an unstable configuration costs action \emph{linearly in the holding time}. (We do not claim a strictly zero probability for a stochastic process; the claim is the qualitative separation between one-shot barrier crossing above the floor and time-extensive holding cost below it --- and hypothesis (ii) is not circular bookkeeping but the floor's measured content: sub-critical embryos are directly observed stalling and dissolving below $c_0$.)
\end{proposition}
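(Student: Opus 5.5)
Part (a) follows from hypothesis (i) alone and is essentially a scaling observation; part (b) is a Freidlin--Wentzell action estimate built on hypothesis (ii) and the fold structure of Proposition~\ref{prop:fold}(a).

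\textbf{Part (a).} Write the reduced drift as $b_c(\phi) = c\,G(\phi) - D(\phi)$. By (i), the budget variables enter only through time. Total tokens and steps only lengthen the horizon. Batch size changes the noise covariance $\eps$ and the per-step time increment, but not $c$. Hence $b_c$, and with it the attractor set at fixed $c$, is the same for every budget. By (ii), the circuit configuration $\{\phi \ge 1/2\}$ lies outside that attractor set. So a longer run, or a larger batch, samples the same drift field for more time and never creates a stable resting point above $1/2$. In the deterministic limit, Proposition~\ref{prop:fold}(a) gives this directly: for $c < c_0$ the only fixed point is $\phi = 0$, and it is globally attracting on $[0,1]$. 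This settles the budget-independence of $c_0$. What remains is the stochastic content, which is part (b).

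\textbf{Part (b).} I would use the Freidlin--Wentzell action functional $S_{0,\tau}(\psi) = \tfrac12\int_0^\tau |\dot\psi - b_c(\psi)|^2_{\Sigma^{-1}}\,dt$ from Step~2, projected onto the order parameter. The key step is a uniform lower bound on the cost of staying in the set $U = \{\phi \ge 1/2\}$.
\begin{itemize}
\item By (ii) and the fold structure, $b_c(\phi) = \phi\,(c\,h(\phi) - r) \le -\delta < 0$ on $[1/2, 1]$. Here $\delta = \tfrac12\,(r - cM) > 0$, using $c < c_0 = r/M$.
\item Take any path $\psi$ that stays in $U$ on $[0,\tau]$. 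The deviation $u = \dot\psi - b_c(\psi)$ must then satisfy $\int_0^\tau u\,dt = \psi(\tau) - \psi(0) - \int_0^\tau b_c\,dt \ge -\tfrac12 + \delta\tau$.
\item By Cauchy--Schwarz, $\int_0^\tau |u|^2\,dt \ge (\delta\tau - \tfrac12)^2/\tau$, which is $\ge a'\tau$ for large $\tau$. This is where the action becomes linear in the holding time.
\item The Freidlin--Wentzell upper bound, applied to the closed set of paths confined to $\bar U$, then gives probability $\le e^{-(a\tau - o(1))/\eps}$ with $a = \delta^2/(2\sigma_{\max}^2)$, where $\sigma_{\max}^2$ bounds the noise covariance.
\end{itemize}
The contrast with the regime $c > c_0$ comes from comparing infima. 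Above the floor, the infimum of the action over escape paths is a fixed quasipotential barrier, independent of $\tau$, because the path can rest at $\phi_+$ at zero cost. Below the floor, the infimum over paths held in $U$ grows at least linearly in $\tau$.

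\textbf{Main obstacle.} The hard part is justifying the one-dimensional reduction with a bounded, non-degenerate noise coefficient. SGD noise is state-dependent, which Section~\ref{sec:notemp} emphasizes. If that noise vanishes somewhere on $U$, the action weighted by $\Sigma^{-1}$ only becomes larger, so a lower bound survives. The difficulty is the other direction: the noise may be unbounded. To control this, I would restrict to a compact basin on which $\Sigma \le \sigma_{\max}^2$, and invoke Assumption~\ref{as:mixing}(a) for the validity of the SDE there. Finally, I would state explicitly that the conclusion is an exponential bound and not an assertion of zero probability, as the statement itself stipulates.
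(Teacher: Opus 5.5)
Your part (b) is the paper's argument in substance: a Freidlin--Wentzell bound in which any path held in $\{\phi \ge 1/2\}$ pays action linear in the holding time. The paper gets it in one line. It cites the Ch.~4 sojourn estimate for neighbourhoods of non-attracting compact sets and applies it to the full dynamics with (ii) as the only input. You instead prove that estimate by hand in the one-dimensional reduction, using a uniform drift margin and Cauchy--Schwarz. Doing it by hand buys two things the citation hides:
\begin{itemize}
\item \emph{An explicit rate.} Since $r - cM = M(c_0 - c)$, your certified $a$ is of order $M^2(c_0-c)^2/\sigma_{\max}^2$. It degenerates at the floor, consistent with the transient formation the paper reports just below $c_0$.
\item \emph{The condition the estimate really needs.} That condition is strictly negative drift on all of $U$, i.e.\ no fixed point there at all. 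This is stronger than ``contains no attractor'', since a path can sit at an unstable fixed point at zero action. In (ii) it is supplied by the clause ``destruction exceeding generation along every formation path.''
\end{itemize}
For the existence of some $\delta > 0$ you do not need Proposition~\ref{prop:fold}: (ii), continuity, and compactness of $[1/2,1]$ suffice. The fold structure matters only for the explicit dependence on $c_0 - c$.

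Your main obstacle, the one-dimensional reduction, can be sidestepped. Suppose $\nabla\phi^{\top}\Sigma\,\nabla\phi \le \sigma_{\max}^2$ and $\nabla\phi\cdot b \le -\delta$ on $\{\phi \ge 1/2\}$. With $u = \dot\theta - b(\theta)$, one gets $\int_0^\tau \nabla\phi\cdot u\,dt \ge \delta\tau - \tfrac12$. Cauchy--Schwarz in the $\Sigma^{-1}$ norm then gives the same linear lower bound on the full-dimensional action, with no Markovian projection. The bounded, nondegenerate noise this requires is not a cost peculiar to your route; the paper's citation presupposes the same regularity of the diffusion coefficient.

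One step is missing in your part (a). Invariance of the attractor set is a deterministic statement, and in the stochastic process a longer run does raise the chance of a rare sustained excursion. The paper closes (a) by feeding (b) back in. The budget enters only as the number $n$ of attempts, and the per-attempt probability $p(c)$ depends on $c$ alone. Hence $P(\text{formation}) = 1 - (1-p(c))^n \le n\,e^{-a\tau/\eps}$, which vanishes as $\eps \to 0$ for any subexponential budget. You defer ``the stochastic content'' to (b) but never perform this union bound. That bound is what turns the per-attempt estimate into budget-independence, and it also shows the claim holds only for subexponential budgets.
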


\begin{proof}
(a) Under (i), the token budget enters only as the number of attempts $n$; the per-attempt success probability $p(c)$ depends on $c$ alone. If $p(c) = 0$ then $1 - (1-p(c))^n = 0$ for every $n$; more generally, by (b), $p(c) \le e^{-a\tau/\eps}$ uniformly in $n$, so any subexponential budget leaves the total probability vanishing as $\eps \to 0$.
(b) By hypothesis (ii) the set $\{\phi > 1/2\}$ contains no attractor of the drift for $c<c_0$. By the Freidlin--Wentzell estimates for occupation of neighborhoods of non-attracting compact sets \citep[Ch.~4]{freidlin2012random}, a trajectory that remains in such a neighborhood for time $\tau$ must follow a path whose action grows at least linearly in $\tau$ --- there is a per-unit-time cost $a>0$ to hold the state against the drift --- whence the probability bound $e^{-a\tau/\eps}$. (We use only the sojourn estimate, not the full exit theory. The drift hypothesis is no longer free-standing: Proposition~\ref{prop:fold} derives it from no-partial-credit plus saturation, and it is independently the measured content of the floor --- sub-critical embryos are directly observed stalling and dissolving below $c_0$.)
\end{proof}

The measured floor matches both clauses. Intensivity: a $4\times4$ concentration$\times$batch grid (Fig.~\ref{fig:floor}) shows $c=0.20$ never igniting at any batch (b64 = 8$\times$ the tokens of b8) while $c \ge 0.35$ always ignites; a finer sweep locates the 70M floor sharply (0/12 for $c\le0.26$; 4/5 at $c=0.28$). Beyond-order suppression (not mere slowness): the near-floor divergence $\tau \propto (c - c_0)^{-\gamma}$, $\gamma = 0.62\pm0.01$ ($R^2$ 0.999), with sub-critical embryos observed stalling and dissolving below the floor --- the driving-force law of nucleation, not a capacity story. The floor is intensive but \emph{not} scale-invariant: it falls $0.27 \to 0.20 \to {\sim}0.17$ from 70M to 410M, consistent with a weak power law, $c_0 \approx 0.26\,(N/70\mathrm{M})^{-0.26}$ ($R^2 = 0.96$ on three points --- reported as the measured trend and an extrapolation hypothesis, not an established law, since three points cannot fix an exponent). Within the igniting region, batch enters as $b^{-0.33\pm0.05}$ and learning rate only accelerates: \emph{at fixed mix, lr and batch are clocks; the mix is the drive}. Above the floor the throttle is steep: never / $\sim$1{,}400 / $\sim$470 steps at doses 0.15/0.30/0.50 (Fig.~\ref{fig:throttle}).

\textbf{What ``concentration'' has to mean: three diluents, one apparatus.} The real-model floor leaves a definitional question open --- concentration of \emph{what}, measured against \emph{what}? We settled it in the shortcut-free apparatus by holding the target task, the model, and the optimizer fixed and changing only the material the target is diluted \emph{into}, sweeping the target's share over up to $33\times$ in each case. \emph{Inert diluent} (uniform random tokens): formation time is flat --- log--log slope $0$ over an $8\times$ density change, no floor. The inertness is the point: unpredictable filler contributes no gradient, so the target's share of the \emph{gradient} never changes however its share of the \emph{tokens} falls; and a pure gain change is in any case normalized away by AdamW's per-parameter scaling. \emph{Saturating diluent} (a single competing copy skill): slope $-0.22$, still no floor, and at intermediate dilution formation is \emph{faster} than at full strength --- the competitor is learned within $\sim$100 steps, after which it stops competing, and the offset-selective head it built seeds the target (the cross-nucleation effect of \citet{lei2026circuits}, reproduced unbidden). \emph{Non-saturating diluent} (a family of $24 > H$ copy skills, so the background always has something left to learn): the slope steepens past the product law and diverges ($-0.55 \to -1.67$), and a floor appears. The lesson is a definition: \textbf{supersaturation is the target's share of the gradient against a persistent competitor, not its share of the token stream} --- which is exactly why natural text, never exhausted, produces a floor in real models while an inert or quickly-mastered filler cannot.

\textbf{The floor is a stability threshold, not an absence of events.} Pushing the target share down in the non-saturating apparatus, formation events do not simply stop: at $0.5\%$ the waiting times split into two clusters ($3.8$--$6.9$k versus $83$--$159$k steps), one seed forms and then \emph{dissolves}, and one is censored at $2\times10^{5}$ steps --- a flickering band, the signature of net rate crossing zero rather than of a hard kinetic wall. Measuring the destruction side directly resolves it. Seeding the circuit at $c = 0.25$ and then switching the stream to a maintenance concentration $c_{\rm hold}$ gives survival $8/8$, $8/8$, $8/8$, $5/8$, $2/8$, $0/8$ at $c_{\rm hold} = 0.25, 0.06, 0.02, 0.005, 0.001, 0$ (30k held-out steps; $2\%$ vs $0.1\%$ Fisher $p = 0.007$, vs $0$ $p = 1.6\times10^{-4}$), with the un-reduced arm surviving $8/8$ --- so the collapse is caused by the concentration drop, not by continued training. At zero drive a formed circuit dissolves to chance within 200 steps. The two measurements agree on where the floor sits ($0.1$--$0.5\%$), and together they say something sharper than the proposition assumed: \emph{below the floor the capability is not merely never built, it cannot be held} --- Proposition~\ref{prop:fold}'s ``no resting place'' read as an experiment. The two protocols are not identical, and this yields a prediction for real models: below their floor, budgets far longer than ours should show transient, flickering formation rather than a blank, since our $0/12$ was measured at a fixed budget.

\begin{figure}[htbp]\centering
\includegraphics[width=0.7\linewidth]{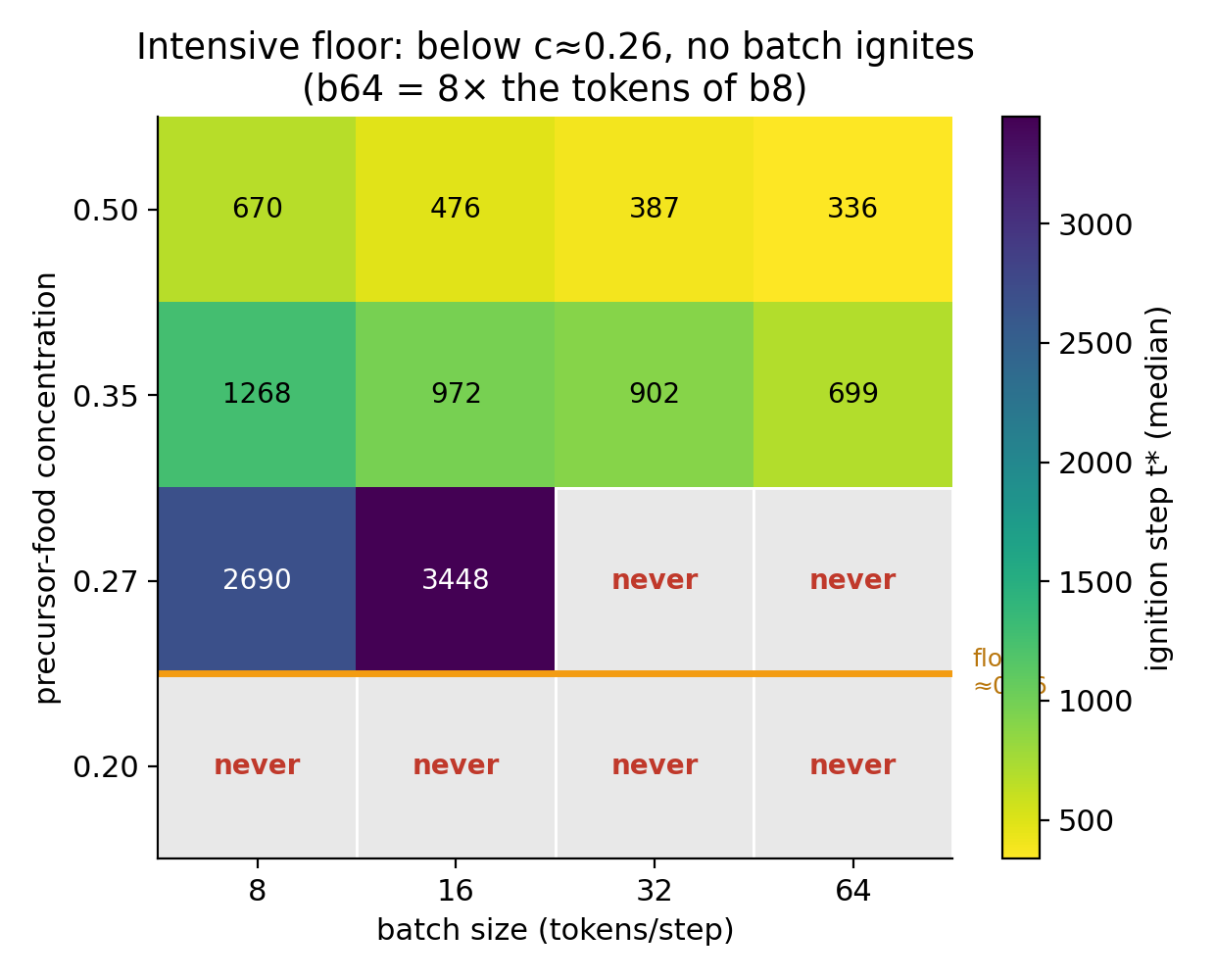}
\caption{\textbf{The intensive floor}: below $c\approx0.26$ (70M) no batch ignites --- token volume cannot substitute for concentration.}
\label{fig:floor}
\end{figure}

\begin{figure}[htbp]\centering
\includegraphics[width=0.62\linewidth]{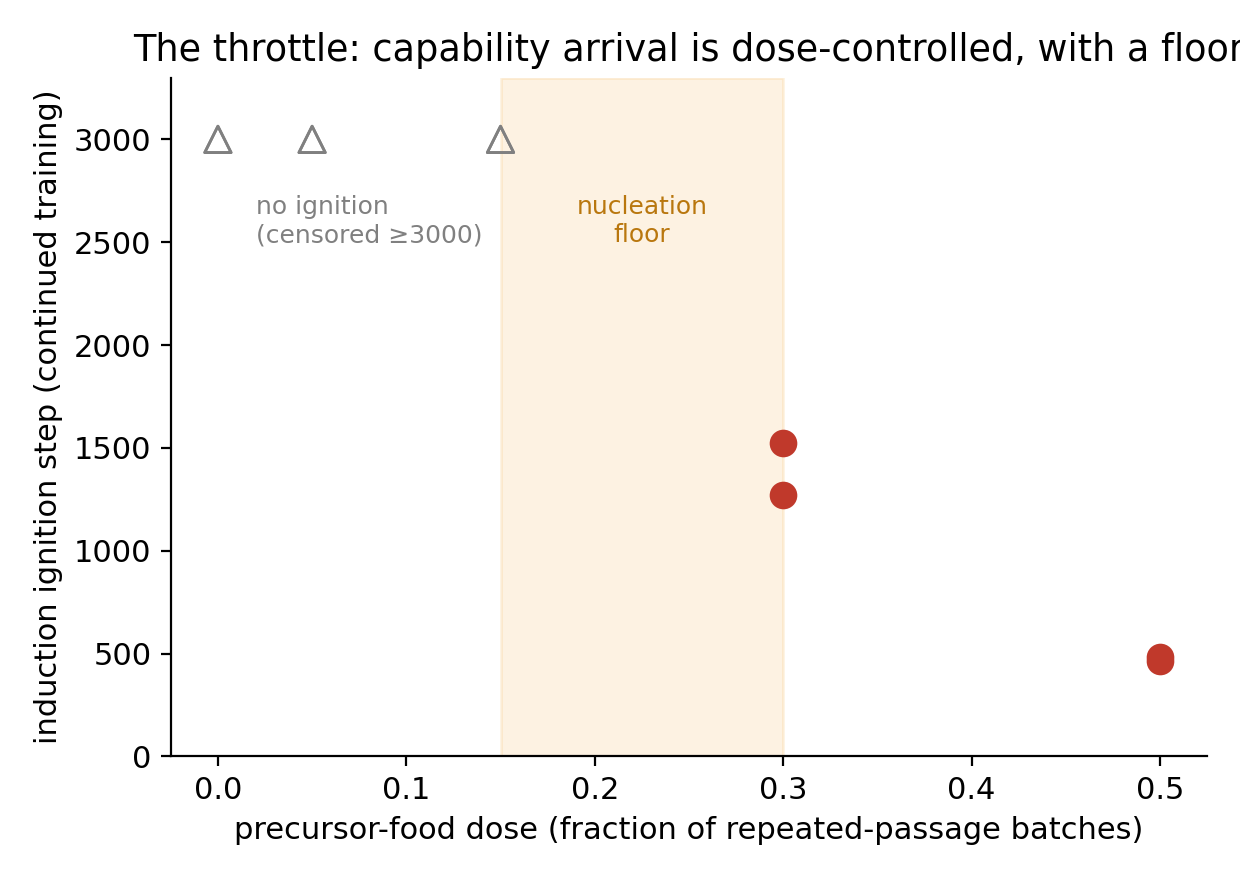}
\caption{\textbf{The throttle}: dose-controlled arrival with a nucleation floor.}
\label{fig:throttle}
\end{figure}

\textbf{The switch, including from scratch.} Ignition follows an experimenter-chosen switch of the data mix past the floor 10/10; without it, never. From random initialization --- no pretrained substrate --- the same three laws reproduce: plain data never ignites (0.02--0.03 through 16k steps); the switch ignites 2/2 ($\sim$700 steps); a delayed switch pays a growing tax; switching too late never ignites (Fig.~\ref{fig:scratch}; the tax and deadline belong to the backward reading, \S\ref{sec:plasticity}). Cross-family, on OLMo-1B: control 0/2 never, switch 2/2 at $\sim$247, delayed switch taxed ($\tau$ 471/799).

\begin{figure}[htbp]\centering
\includegraphics[width=0.9\linewidth]{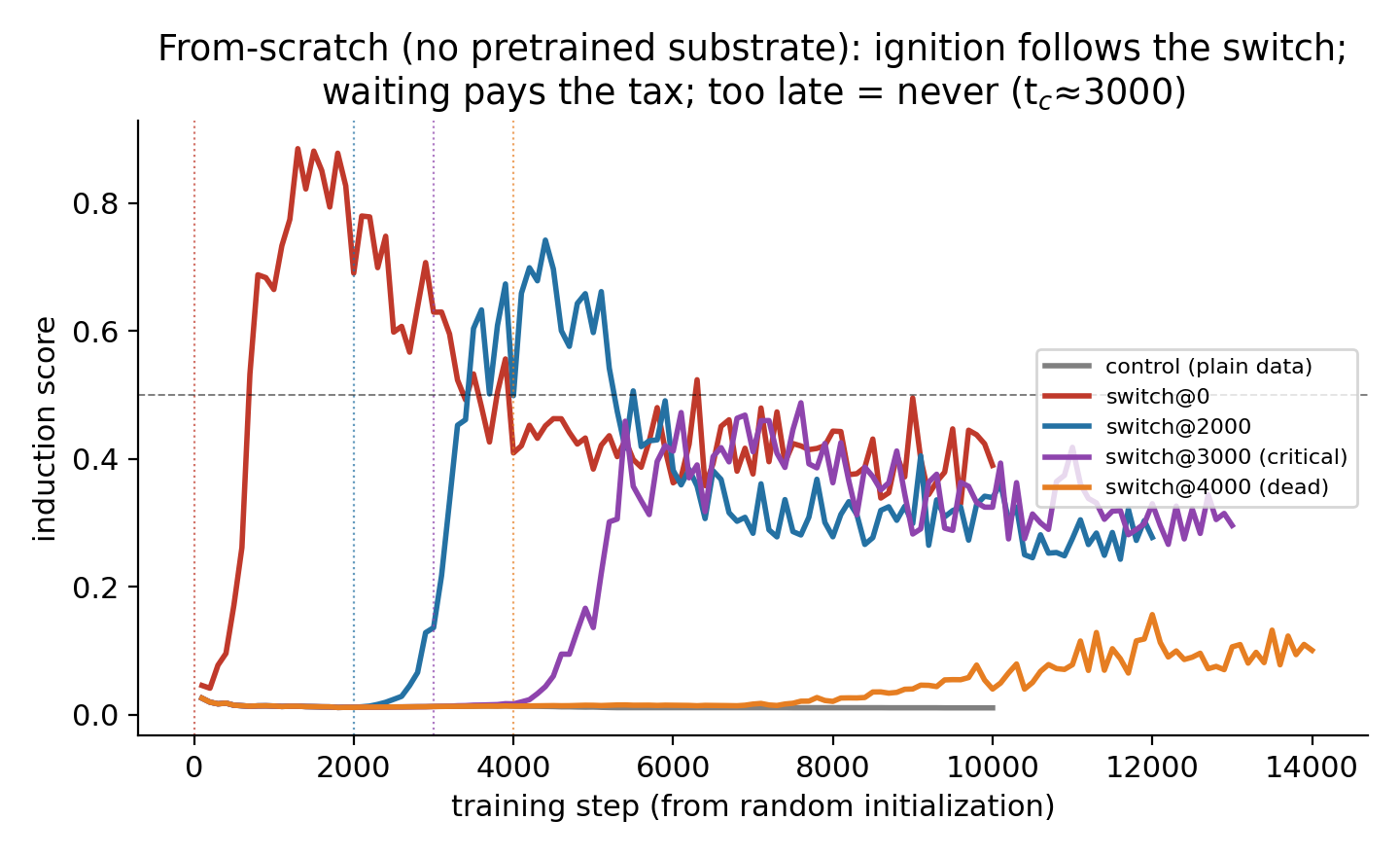}
\caption{\textbf{From random initialization}: ignition follows the switch; waiting pays the occupation tax; too late never ignites ($t_c\approx3000$).}
\label{fig:scratch}
\end{figure}

\section{Reading the equation backward: loss of plasticity}
\label{sec:plasticity}

Backward means: hold the drive available but let training itself consume the \emph{site} term $N$. The equation then predicts a specific, testable structure for how learnability dies --- and a discriminator separating it from every drive-side explanation.

\begin{proposition}[Site-limited versus drive-limited failure, and the horizon]
\label{prop:sites}
In Eq.~\eqref{eq:master} with factorized generation $G = N \nu\, \sigma(c)\, e^{-\beta K}$:
\textbf{(a)} $\partial G/\partial c = N \nu\, \sigma'(c)\, e^{-\beta K} > 0$ whenever $N > 0$ and $\sigma' > 0$, while if $N = 0$ then $G = 0$ \emph{for every} $c$. Hence raising the drive rescues a drive-limited substrate and cannot rescue a site-limited one: the response to a drive increase is a discriminating experiment between the two failure modes.
\textbf{(b)} Let $N_{\min}(c) := D / (\nu\,\sigma(c)\,e^{-\beta K})$ be the fewest sites at which generation clears destruction at drive $c$; it is decreasing in $c$. If the site count decays with occupation time $S$ (training on other data) as a non-increasing $N(S)$, the \emph{plasticity horizon at drive $c$} is $t_c(c) = \inf\{S : N(S) < N_{\min}(c)\}$, non-decreasing in $c$. Past $t_c(c)$, formation is impossible at that drive, and possible again only by one of exactly two routes: re-opening sites (raising $N$), or raising the drive past the \emph{escape threshold} $c_{\rm esc}(S) := \inf\{c : N_{\min}(c) \le N(S)\}$ --- the drive at which generation on the residual sites clears destruction again.
\end{proposition}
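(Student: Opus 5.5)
The proof is elementary calculus and order-theoretic bookkeeping on the factorized generation term. Throughout we take $\nu > 0$, $e^{-\beta K} > 0$, $D > 0$ fixed (the destruction term of Step 4, here the reversion drift that sets the floor), and $\sigma$ non-decreasing and positive on the igniting region. The measured form $\sigma(c) \propto (c - c_0)^{\gamma}$ above $c_0$ gives $\sigma' > 0$ for $c > c_0$.

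For \textbf{(a)}, I would differentiate $G = N\nu\,\sigma(c)\,e^{-\beta K}$ in $c$ at fixed $N$. Since $N$, $\nu$ and $e^{-\beta K}$ do not depend on $c$, this gives $\partial G/\partial c = N\nu\,\sigma'(c)\,e^{-\beta K}$, which is strictly positive when $N > 0$ and $\sigma' > 0$. If $N = 0$ the product vanishes identically in $c$. The discriminating-experiment reading follows directly. In a drive-limited substrate ($N > 0$, $G < D$ at the current $c$), $G$ is strictly increasing in $c$, so a large enough $c$ clears $D$, provided $\sigma$ is unbounded or its supremum is large enough; I would state this proviso explicitly. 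In a site-limited substrate ($N = 0$), $G - D = -D < 0$ for every $c$.

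For \textbf{(b)}, I would define $N_{\min}(c) = D/(\nu\sigma(c)e^{-\beta K})$ on $\{\sigma > 0\}$ and set it to $+\infty$ where $\sigma = 0$, which is consistent with the floor of Proposition~\ref{prop:floor}. Monotonicity is immediate: $\sigma$ non-decreasing gives $N_{\min}$ non-increasing, and strictly decreasing where $\sigma' > 0$. The identity $G \ge D \iff N \ge N_{\min}(c)$ is just rearrangement. For the horizon, $t_c(c) = \inf\{S : N(S) < N_{\min}(c)\}$. If $c_1 \le c_2$, then $N_{\min}(c_2) \le N_{\min}(c_1)$, so $\{S : N(S) < N_{\min}(c_2)\} \subseteq \{S : N(S) < N_{\min}(c_1)\}$, and taking infima gives $t_c(c_1) \le t_c(c_2)$. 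Because $N(S)$ is non-increasing, once $N(S) < N_{\min}(c)$ it stays below for all larger $S$. So past $t_c(c)$ we have $G < D$ at drive $c$. I must handle the boundary point at the infimum carefully, e.g.\ by noting the claim is for $S > t_c$, or by assuming right-continuity of $N$.

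The two-routes statement is where the care goes, and I expect it to be the main obstacle, mostly in wording. The inequality $G \ge D$ at $(S, c)$ is equivalent to $N(S) \ge N_{\min}(c)$. Besides the fixed parameters $\nu$, $\beta$, $K$ and $D$, the only variables are $N$ and $c$. So restoring $G \ge D$ at fixed architecture requires either raising $N$ (re-opening sites) or lowering $N_{\min}$, and $N_{\min}$ is lowered only through $c$. That exhausts the routes within the factorized model. The escape threshold $c_{\rm esc}(S) = \inf\{c : N_{\min}(c) \le N(S)\}$ is well defined whenever $N(S) > 0$ and $\sup_c \sigma$ is large enough. By monotonicity of $N_{\min}$, every $c > c_{\rm esc}(S)$ satisfies the inequality, so $c_{\rm esc}$ is exactly the drive at which residual-site generation clears destruction.

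I would close by linking back to (a): if $N(S) = 0$, then $c_{\rm esc}(S) = +\infty$ and only the site route remains. I would also flag that "formation impossible" here means "net generation non-positive in the factorized drift." The stronger probabilistic reading would inherit the beyond-order suppression of Proposition~\ref{prop:floor}(b), and I would cite that rather than re-prove it.
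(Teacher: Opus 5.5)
Your proposal is correct and follows the paper's proof essentially step for step. For (a) you differentiate the factorized $G$; for (b) you combine the monotonicity of $N_{\min}$ in $c$, the non-increase of $N(S)$, and the equivalence $G \ge D \iff N(S) \ge N_{\min}(c)$ to get the horizon's monotonicity and the two restoration routes. Your extra care tightens points the paper's proof passes over silently without changing the argument: the proviso that $\sup\sigma$ be large enough for a drive rescue, $N_{\min} = +\infty$ below the floor, the boundary case at the infimum, and $c_{\rm esc} = +\infty$ when $N(S) = 0$.
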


\begin{proof}
(a) is immediate from the factorized form (itself the mean-field structure whose validity domain is measured in \S\ref{sec:synthesis}): $c$ enters only through $\sigma(c)$, multiplied by $N$. (b) $N_{\min}(c)$ is decreasing because $\sigma$ is increasing; hence $t_c(c)$ is non-decreasing. For $S > t_c(c)$, $N(S) < N_{\min}(c)$ gives $G = N(S)\nu\sigma(c)e^{-\beta K} < D$: net rate negative, $\phi$ cannot cross. The net rate is restored exactly by raising $N$ above $N_{\min}(c)$ or by raising $c$ until $N_{\min}(c) \le N(S)$, i.e.\ $c \ge c_{\rm esc}(S)$; monotonicity of $N_{\min}$ makes $c_{\rm esc}$ a sharp threshold.
\end{proof}

\textbf{Site dynamics, closing the loop.} The forward reading has a rate law; symmetry demands the backward reading have one too, not an exogenous ``$N$ decreases.'' The closure is
\[
\frac{dN}{dS} \;=\; -\,\lambda(\theta(S))\, N \;+\; R,
\]
where $\lambda$ is the occupation rate --- mechanistically identified below as attention-subspace commitment --- and $R$ is site re-opening, zero in native training and \emph{exogenously actuated} by our reset and graft interventions (they are the $R$ term, performed by hand). We deliberately do not impose a parametric form for $\lambda$: with the occupancy protocol run at zero target drive ($c = 0$ during aging, so the depletion measured here is occupation by the base task alone, not consumption by target-crystal growth --- the coupled case is outside this protocol and stated as such), the equation integrates to $N(S)/N(0) = e^{-\int_0^S \lambda}$, and since formation time scales as $1/N$ at fixed drive (Prop.~\ref{prop:sites}), the \emph{measured, non-parametric} depletion curve is $N(S)/N(0) = \tau(0)/\tau(S)$: $1 \to 0.46 \to 0.45$ at $S = 0/600/1200$, crossing $N_{\min}(c{=}0.5)$ before $S = 3600$. Three points and a bound do not fix a functional form, and we decline to overfit one; what they do fix is the \emph{structure} --- depletion is real, monotone, drive-independent in this protocol, mechanistically located ($\lambda$ tracks commitment at $r = 0.93$), and reversible only through the $R$ term. The scale law enters as $d\lambda/d(\text{size}) < 0$ (aging slope $0.69 \to 0.16$ from 70M to 410M).

The proposition's content is that these are \emph{measurable alternatives}, and every measurement lands on the site side. \textbf{The discriminator}: on a substrate aged past its horizon, doubling the drive does not rescue (0/3), while \emph{opening new sites does} --- grafted fresh head pairs ignite with lightning kinetics (200--500 steps) at per-pair success $p \approx 0.15$ consistent with independent-site statistics, and sites must be opened \emph{in pairs} across both layers (single-layer openings: 0/6), exactly as a two-layer composition requires. Learnability death is a site-term failure, not a drive-term failure. \textbf{The tax and the horizon}: the post-switch delay grows with waiting ($n=5$ seeds: $\tau = 506\pm44 \to 1108\pm66 \to 1115\pm65$ at $S = 0/600/1200$); at $S = 3600$ no ignition in 4{,}400 further steps (tail slope zero --- not censoring). The horizon has a scale law: $t_c \approx 3200$ (70M), $>6000$ (160M), $>8000$ (410M); the aging slope falls $0.69 \to 0.16$ --- larger substrates age slower (Fig.~\ref{fig:shelf}). \textbf{Drive-relative death}: the dead substrate revives above a sharp escape drive --- 0/2 at 0.60--0.69, 2/3 at 0.71, 2/2 at $\ge$0.72; threshold $c_{\rm esc} = 0.705\pm0.005$ --- clause (b)'s second route, observed, including its predicted \emph{sharpness} (monotone $N_{\min}$ makes $c_{\rm esc}$ a threshold, not a gradient). \textbf{The located mechanism}: during waiting, every head's attention pattern grows more committed (peakedness $+45\%$, entropy $-20\%$), tracking the tax at $r = 0.93$. What ages is \emph{recruitable capacity}, not the existing parts: the precursor head itself keeps growing throughout the waiting period, and what is exhausted is the supply of uncommitted heads --- which is why the failure is site-limited (above) and why it is repaired by re-opening sites rather than by rebuilding parts. Targeted resets make this causal --- restoring only the \emph{attention} weights of a dead model rescues ignition 5/5 (vs.\ 0/5 un-reset controls), restoring MLP rescues weakly, restoring embeddings does not (0/2), replicating at 160M and 410M. The two reset arms separate with non-overlapping Wilson 95\% intervals ($5/5$: $[0.57, 1.00]$; $0/5$: $[0.00, 0.43]$; Fisher exact $p = 0.008$), as do the floor arms ($0/12$ below the floor: $[0.00, 0.24]$; $4/5$ above it: $[0.38, 0.96]$). \textbf{The active ingredient, isolated.} Theorem~\ref{thm:occupation}(iii) says occupation lives in the attention \emph{pattern}, so the score channel should carry the entire cure and the value channel none of it; the apparatus confirmed this (App.~\ref{app:occ-measured}), and the prediction was then tested on the real substrate by slicing the fused QKV projection head-wise. It separates completely (six seeds, one aged substrate per seed, all four arms run from it): un-reset $0/6$; re-initializing only the \emph{query--key} slices $6/6$ (Fisher $p = 0.002$ against control); re-initializing only the \emph{value} slices and the output projection $0/6$ --- indistinguishable from doing nothing (induction score $0.252$ vs $0.263$ un-reset, against $0.700$ for the QK arm); and the whole-block reset $6/6$, exactly equal to the QK arm. \emph{The block reset's entire therapeutic effect comes from the score channel.} This converts the repair from a coarse intervention into a prescription --- reset QK, keep OV --- and supplies the sharpest causal evidence in the paper that what ages is attention \emph{allocation} rather than the content the heads carry. (Note the distinction: this reset \emph{re-opens sites} for a capability that has never formed on this substrate; it does not act on formed structure. The melt hierarchy of \S\ref{sec:control}, in which formed compositions dissolve before their parts, is a different axis --- destruction of what exists versus recovery of the ability to build --- and the two results are consistent, not in tension.) \textbf{Loss-blindness}: throughout the aging that kills inducibility, validation loss falls smoothly ($5.46 \to 1.63$) --- standard monitoring is structurally blind to the site term, which is why the phenomenon has stayed invisible to ordinary telemetry (Fig.~\ref{fig:lossblind}).

\begin{figure}[htbp]\centering
\includegraphics[width=0.75\linewidth]{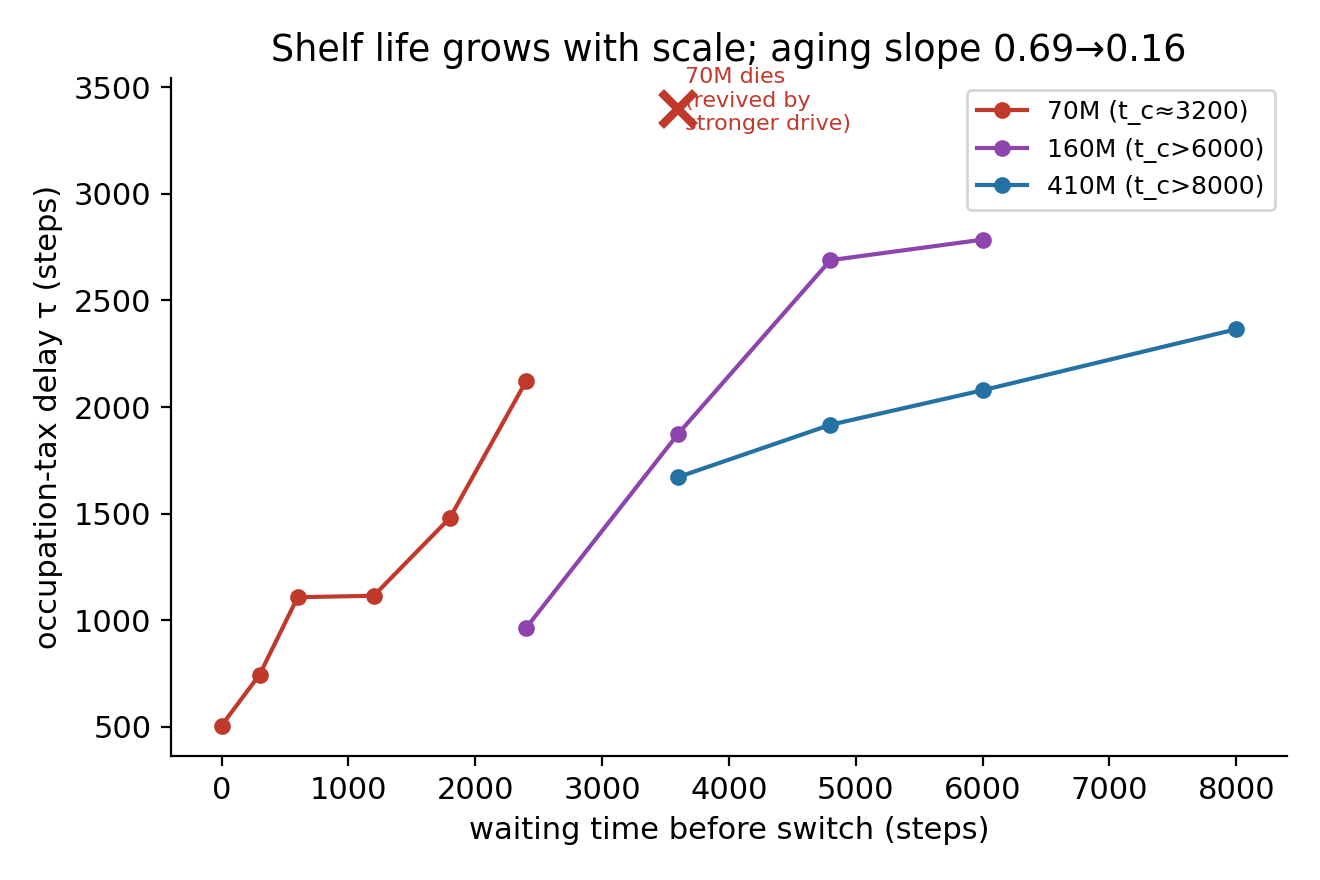}
\caption{\textbf{Shelf life grows with scale}; the occupation tax accumulates $\sim$4$\times$ slower at 410M than 70M.}
\label{fig:shelf}
\end{figure}

\begin{figure}[htbp]\centering
\includegraphics[width=0.75\linewidth]{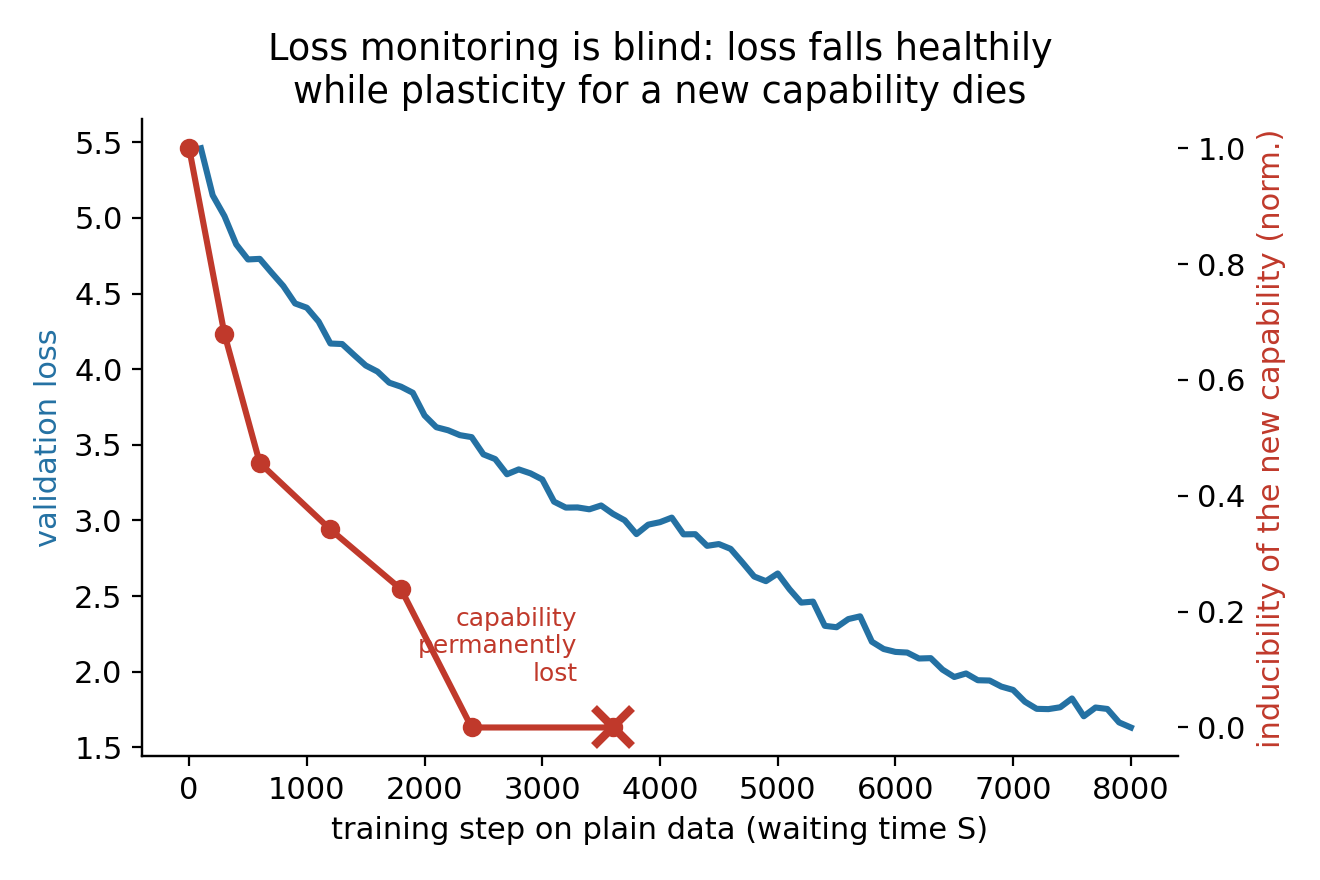}
\caption{\textbf{Loss monitoring is blind}: validation loss falls healthily during exactly the waiting period in which the capability becomes permanently uninducible at fixed drive.}
\label{fig:lossblind}
\end{figure}

\subsection{The deadline as a theorem: occupation in a controlled gated-attention model}
\label{sec:occupation}

Everything above is measured. This subsection reports that the site story is also \emph{provable} --- as a theorem in a controlled model --- with a division of labor that deserves emphasis: the deadline's \emph{consequences} (irrecoverability by data, repair by reset, invisibility in the loss) require \textbf{no ergodicity assumption at all}. Confinement is the easy direction of stochastic analysis --- a drift inequality plus bounded increments (Hajek's drift theorem \citep{hajek1982hitting}) --- in contrast to the rate law's upper-bound side, which carries Assumption~\ref{as:mixing}. The paper's backward reading thus stands on the unconditional side of the mathematics.

\emph{The model} (full detail in App.~\ref{app:occupation}): $H$ position-attention heads (softmax scores over $W$ window offsets); a data stream mixing \emph{background skills} (cue token $g_i \to$ copy the token at offset $o_i$; traffic share $\pi_i$), \emph{target queries} (share $c$; the $K$-slot no-partial-credit task of Lemma~\ref{lem:nopartial}), and neutral substrate; a readout \emph{gated by the current token}, with the substrate channel closed. \textbf{The gate is the model's most idealized ingredient --- it hands the model an oracle task identity.} The point of retaining it is sharp: in this model the difficulty of late learning is provably \emph{pure capacity occupation} --- even knowing exactly which task is which, a substrate short of $K$ uncommitted heads cannot form the capability. Each structural assumption has the empirical counterpart measured above (commitment at $r=0.93$; control ablations retaining 99\% --- the readout does approximate ``read only the carrier''; reset 5/5), which licenses the abstraction without claiming it as a theorem.

\begin{theorem}[Occupation: the silent deadline, in the gated model]
\label{thm:occupation}
Under assumptions (G1)--(G8) and feasibility/scale conditions C0--C4 of App.~\ref{app:occupation}, with formation requiring $K$ heads on distinct target slots:
\textbf{(i) (deadline; uses the recruitment assumption)} the number of uncommitted-or-loosely-pinned heads is non-increasing with high probability over the recruitment window, and the deadline $t_{\rm dead}$ (fewer than $K$ such heads) is a.s.\ finite, with $\mathbb{E}[t_{\rm dead}] \lesssim (H-K+1)/(\rho_{\min} p_0)$. \emph{We do not claim a constant per-head recruitment hazard}: the traffic-share structure of C0 ($\mu_i \propto \pi_i$) implies the opposite, and we measure it below.
\textbf{(ii) (data cannot rescue; no mixing)} past $t_{\rm dead}$, $P(\text{formation within } T \text{ steps}) \le HKT\, e^{-\theta \Delta}$ with $\theta = \Omega\!\big(\mu/(\eta G^2)\big)$; the target share $c$ enters $\mu$ only linearly, so rescue by data requires $c$ large enough to break the background's own pinning margin --- the catastrophic-forgetting dual, not a gentle scaling.
\textbf{(iii) (reset rescues; no mixing)} re-initializing the attention scores of any $K$ heads restores the fresh-substrate formation rate.
\textbf{(iv) (silence; no mixing)} pre-formation the target loss equals $\ln V$ plus a bounded junk term held near zero, and the background loss decreases through $t_{\rm dead}$ in steps of the same order at every pinning event: the deadline produces \emph{no discontinuity, kink, or necessary precursor in any loss-derived scalar}. The step that kills learnability looks, in the loss, like every other step of ordinary background learning.
\end{theorem}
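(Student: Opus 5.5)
The plan is to prove the four clauses separately, on a common foundation: the replicator identity for softmax scores (Lemma~\ref{lem:scorediff}), the per-unit-mass restoring force of a committed head (Lemma~\ref{lem:restoring}), and the weak coupling of target pull to the other required slots (Lemma~\ref{lem:weakcoupling}), all inside the gated model (G1)--(G8). The key intermediate object is a \emph{pinning margin}: for a head committed to background skill $i$, the score gap $\Delta$ between its committed offset and every other offset. The first step is to show that, on the pinned set $\{\text{gap}\ge\Delta\}$, the expected score-gap drift is at least $\mu\propto\pi_i$ minus a target contribution linear in $c$. Target pull is bounded using Lemma~\ref{lem:weakcoupling} and Lemma~\ref{lem:nopartial}: with fewer than $K$ free heads, $\phi$ stays near $0$, so the pull per unit mass is $O(\phi/A_r)$, which is negligible. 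Increments are bounded by $\eta G$, with $G$ the gradient bound.

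For \textbf{(ii)} I will apply Hajek's drift theorem to the negated gap of each pinned head. A supermartingale-type drift of at least $\mu$ and increments of at most $\eta G$ give an exponential moment with $\theta=\Omega(\mu/(\eta G^2))$. The Hajek hitting bound then says the probability of losing a margin $\Delta$ within one step is at most $e^{-\theta\Delta}$. Formation needs some head to leave its pin onto some target slot. A union bound over $H$ heads, $K$ slots and $T$ steps then gives $HKT\,e^{-\theta\Delta}$. The linearity of $c$ in $\mu$ comes from reading off the drift decomposition.

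For \textbf{(i)} the argument goes in two parts. \emph{Monotonicity} follows from (ii)'s estimate applied over the recruitment window with a union bound: once pinned, a head stays pinned w.h.p. For \emph{finiteness}, use the recruitment assumption: each uncommitted head is captured with per-step probability at least $\rho_{\min}p_0$. We need $H-K+1$ captures, each stage geometric with rate at least that much, so summing expectations gives $\mathbb{E}[t_{\rm dead}]\lesssim (H-K+1)/(\rho_{\min}p_0)$, and Borel--Cantelli gives a.s.\ finiteness.

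Clause \textbf{(iii)} is almost structural. Re-initializing the scores of $K$ heads returns them to the fresh distribution. By gating, those heads' values and the readout for the target channel are unchanged, and they are untouched by occupation because occupation lives only in the scores. The conditional law of the recruitable set is therefore identical to the fresh substrate's, which fixes the formation rate. Clause \textbf{(iv)} follows from Lemma~\ref{lem:nopartial}(b): pre-formation, target predictions are at chance over $V$ tokens. The junk term is bounded by gate leakage controlled in C1--C4. The background loss drops by $\Theta(\pi_i\cdot\text{gain})$ at each pinning event, uniformly across events. The $K$-th-from-last event is thus one of these same-order steps, with no distinguished signature.

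The hardest step will be the uniform drift lower bound on the pinned set, which requires controlling the target's cross-pull while $\phi$ is near $0$ (not exactly $0$) and under minibatch noise. I will first try to bound it crudely by $c\cdot O(\max_r \phi/A_r)$ plus the junk term, and absorb it into the feasibility condition C0.
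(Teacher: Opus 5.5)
Your architecture is the paper's. Restoring force minus target pull gives a score-gap drift, Hajek's theorem turns drift plus bounded increments into confinement, a union over $H$ heads, $K$ slots and $T$ steps gives (ii), and geometric domination of the recruitment stages gives (i). But the step you single out as the crux is false: there is no uniform drift lower bound $\mu$ on the pinned set $\{\text{gap}\ge\Delta\}$.

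By the replicator identity you cite (Lemma~\ref{lem:scorediff}), both terms of $\mathbb{E}[\Delta\xi_j\mid\mathcal F]$ carry a mass factor, $a_j$ and $a_{o_i}(1-a_{o_i})$ respectively (Lemma~\ref{lem:drift}: the drift is $\le -\eta\,a_{o_i}(1-a_{o_i})\,\mu_i$). On $\{\text{gap}\ge\Delta\}$ both $a_j$ and $1-a_{o_i}$ are at most $(W-1)e^{-\Delta}$. So the deeper a head is pinned, the weaker its restoring drift. With the state-independent increment bound you use, the escape exponent saturates at a $\Delta$-independent value, and a bound $e^{-\theta\Delta}$ with $\theta=\Omega(\mu/(\eta G^2))$ and $\Delta$ the pinning margin is unobtainable. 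Separately, Hajek needs drift on the region an escape must traverse, not on the set where the head currently sits.

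The missing idea is the paper's strip. Work on $a_j\in[\delta_0/e^2,\delta_0]$, which every path to $a_j\ge\delta_0$ must cross; reaching $\delta_0$ is a prerequisite of serving, since $\gamma_{\rm need}>\delta_0$. On the strip $1-a_{o_i}\ge a_j\ge\delta_0/e^2$. Once C4 ($\delta_0\Lambda\le\mu_i$) makes the $a_j(\ell_j-\langle\ell\rangle_a)$ term favorable, the drift is $\le-\eta\mu_{\rm strip}$ with $\mu_{\rm strip}=(1-\delta_0)(\delta_0/e^2)\mu_i$. The strip has $\xi$-width $2$, and gap increments are bounded by $2\eta G$ (not $\eta G$). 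Then $e^{\theta\xi}$ with $\theta=\min(\mu_{\rm strip}/(8\eta G^2),1/(2\eta G))$ is a supermartingale on the strip, giving $e^{-\theta(2-2\eta G)}$ per crossing attempt (Lemma~\ref{lem:confinement}). The $\Delta$ of (ii) is this fixed strip width minus overshoot, not the head's current margin.

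Second, your target-pull bound is mis-justified. From $\phi\approx0$ you infer that the per-unit-mass pull $\phi/A_r=\prod_{r'\neq r}A_{r'}$ is negligible. But $\phi$ is small because some $A_r$ is small, and for that slot the quotient need not be. Past $t_{\rm dead}$ the at most $K-1$ free heads can sit on the other $K-1$ slots, making $\prod_{r'\neq r}A_{r'}\ge\gamma_{\rm need}^{K-1}$, and the missing slot is exactly where a pinned head would have to move. The inference is also circular, since ``$\phi$ stays near $0$'' is what confinement is meant to deliver.

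What works, and what the paper does, is a state-wise bound valid on the pre-formation basin rather than a smallness claim:
\begin{itemize}
\item Lemma~\ref{lem:weakcoupling} bounds the full-support component of $\partial z/\partial s^{(h)}$ by $C_F4^K\prod_{r\in R}A_r$, with the full product, deficient slot included.
\item Lemma~\ref{lem:decomp} splits off the label-free uniformizing force $\nabla\Phi$. This force is not controlled by $\phi$ at all, and it is charged to $\mu_i$ as $c\beta_3$, alongside $\chi$ and $\beta_2$.
\end{itemize}
The target term is therefore linear in $c$ and must be beaten by $\pi_i\lambda'$ through C0. That, not negligibility, is why $c$ enters $\mu$ linearly; your two claims are in tension. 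Your fallback of absorbing a crude bound into C0 is the right move once stated this way.

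Two smaller corrections. In (i), do not posit a per-head capture probability: that is the constant-hazard clause the theorem disclaims, and (G8) is per skill. The stage argument needs only that, while $N_{\rm avail}\ge K$, at least one new firm pinning occurs per window with probability $\ge p_0$, on the no-escape event that C3 makes likely. In (iv), Lemma~\ref{lem:nopartial}(b) only identifies the minimizer over subset-measurable predictors. The paper instead obtains the plateau as $L_T=\ln V+\Delta_{\rm junk}$ with $\Delta_{\rm junk}=\Phi-\ln V\ge0$ by Gibbs' inequality, from the same decomposition.
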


\emph{Proof chain} (full proofs in App.~\ref{app:occupation}): pre-formation, the target exerts on any single head only a pull proportional to a product of attention masses on the required slots --- an iterated-oscillation bound on full-support Fourier coefficients, valid because the readout sees window contents only through head outputs --- together with an exact decomposition of the cross-entropy gradient into a \emph{uniformizing potential} (label-free) and a signal covariance; the attention simplex makes carried and target patterns exclusive; a linear-response computation, every block carrying the $(1-p_c)$ factor, gives a restoring drift toward the carried offset; an exact score-difference identity turns the force ledger into a drift inequality; Hajek's theorem converts drift into exponential confinement with \emph{no ergodicity input}; and stochastic domination of the recruitment process yields the deadline.

\textbf{The theorem's predictions, tested --- including one we had to withdraw.} We ran the theorem's own falsifiable consequences in the apparatus (App.~\ref{app:occ-measured}). \emph{Confirmed, and sharper than the real-model result that motivated it}: the repair is specific to the attention \emph{score} channel. From one aged substrate, re-initializing the QK slices recovers formation 4.3$\times$ (6/6 formed), MLP does little ($0.81\times$), and re-initializing the OV slices is \emph{worse than no reset at all} ($1.76\times$ slower) --- occupation lives in the attention pattern, so freeing the pattern is the repair, while resetting the value channel destroys learned structure and frees nothing. This turns \S\ref{sec:plasticity}'s block reset (5/5 vs 0/5) into a prescription --- reset QK, keep OV --- which we then confirmed prospectively on the real substrate: on aged Pythia-70M the QK arm ignites 6/6, the OV arm 0/6 (identical to no reset), and the block reset 6/6, i.e.\ exactly the QK arm and nothing more (\S\ref{sec:plasticity}). \emph{Falsified}: we had additionally assumed a constant per-head recruitment hazard, which would make occupation decay exponentially and the deadline scale as $\log H$. It does not: the hazard falls $\approx$15$\times$ across the sweep ($32 \to 18 \to 8 \to 5$ available slots), and extrapolating the initial rate misses by $\sim$1500$\times$. The failed clause was an extra simplification that \emph{contradicted our own traffic-share model} --- with pull $\mu_i \propto \pi_i$, high-traffic skills take heads first and the remainder recruit slowly, so the hazard must decline. We withdraw the clause and the $\log H$ prediction with it; the mechanism, the finiteness of the deadline, and clauses (ii)--(iv) are untouched. The correction also resolves a tension we had flagged: Pythia's saturating depletion profile ($1 \to 0.46 \to 0.45$) is the same declining-hazard family, not a counterexample.

\textbf{Position in the plasticity literature.} The phenomenon is established \citep{dohare2024loss,ash2020warm}, the remedies are empirical \citep{sokar2023dormant}, and the causes literature is explicitly partial --- no single proposed mechanism suffices \citep{lyle2024disentangling}. \citet{lyle2026grokking} conjecture, via effective learning rate, that the dynamics of emergence and of plasticity loss are linked; we cite that as the conceptual precursor and supply the quantitative form the conjecture lacks: the \emph{same} rate equation whose forward reading is \S\ref{sec:emergence}, with a law ($\tau(S)$), a horizon ($t_c$ and its scale law), a located and causally verified mechanism (attention-subspace commitment), a repair (targeted reset), and a discriminator (drive versus sites) that rules out the drive-side alternatives. Theorem~\ref{thm:occupation} adds the tier the causes literature is missing: a proof, in a controlled model, that occupation \emph{must} produce exactly this phenomenology --- deadline, data-irrecoverability, reset-repair, and silence.

\section{Completing the equation: injected temperature and process control}
\label{sec:control}

The knockdown component of $D$ in Eq.~\eqref{eq:master} is dormant in native training (\S\ref{sec:notemp}: nothing qualifies as a temperature, and at practical learning rates weight drift, not noise, is the only destroyer of formed structure). Installing the missing term --- an isotropic weight-noise bath of magnitude $T$, a few lines of code --- completes the equation and turns it into a control surface. Everything in this section is measured in a controlled apparatus (a conjunction-circuit toy with $K = 2..4$, plus real-Pythia induction circuits), with constants frozen and out-of-sample tests preregistered.

\paragraph{Why inject a temperature at all?}
Since emergence needs no bath (\S\ref{sec:emergence}: the drive is data), one may ask whether this section adds anything data cannot. It adds exactly the operations that a \emph{monotone} drive cannot express. Data can only push formation forward, in barrier order; it cannot \emph{pause} a capability below threshold and release it on command (pinning at $\phi_{ss}$), cannot \emph{selectively erase} one circuit while sparing its parts (the melt line --- the primitive machine unlearning lacks), and cannot \emph{accelerate} a high-barrier circuit past its own kinetics (the nose, where annealing beats a gradient-filter baseline). The bath is also the theory's \emph{instrument}: melt points, the hysteresis window, dose--damage bookkeeping, and $\phi_{ss}(T)$ are measurements of the destruction term and of basin depths that native training simply cannot perform --- no thermometer, no thermometry. We therefore present these schedules as existence proofs that every term of Eq.~\eqref{eq:master} is independently actuatable --- the completion of the program --- and as measurement instruments, not as a claim that production training should run with a bath on.

\subsection{The nose: barrier-gated annealing}

With a qualified temperature, generation gains a barrier-assistance factor (crossing aided by total noise $\eps_0 + T$) while destruction turns on above a damage onset. These two monotonicities force the shape of the formation-time curve:

\begin{proposition}[Existence of a TTT nose, if and only if barrier-gated]
\label{prop:nose}
Let the generation rate $g(T)$ be continuous and strictly increasing in $T$ (barrier assistance), and the destruction rate $d(T)$ be continuous, non-decreasing, $d \equiv 0$ on $[0, T_{\rm dam}]$ for some damage onset $T_{\rm dam} > 0$, with $g(T_{\rm ceil}) = d(T_{\rm ceil})$ at some finite ceiling $T_{\rm ceil} > T_{\rm dam}$. Define the formation time $\tau(T) = C/(g(T) - d(T))$ on $[0, T_{\rm ceil})$. Then:
\textbf{(a)} $\tau$ is strictly decreasing on $[0, T_{\rm dam}]$ and $\tau(T) \to \infty$ as $T \to T_{\rm ceil}^{-}$; hence $\tau$ attains an interior minimum (a \emph{nose}) in $[T_{\rm dam}, T_{\rm ceil})$.
\textbf{(b)} If instead the barrier is negligible --- $g$ constant in $T$ --- then $\tau$ is non-decreasing on all of $[0, T_{\rm ceil})$ and the minimum is at $T = 0$: cold is fastest, and no nose exists.
Thus an interior nose exists precisely when there is a barrier for temperature to assist.
\end{proposition}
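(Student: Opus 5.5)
The plan is a direct real-analysis argument on the net rate $n(T) := g(T) - d(T)$, since $\tau = C/n$ with $C>0$. The monotonicity of $\tau$ is then the reverse of the monotonicity of $n$ wherever $n>0$.

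The first step is to make the domain well posed. For $\tau$ to be finite and positive on $[0,T_{\rm ceil})$ we need $n>0$ there, and I would read $T_{\rm ceil}$ as the \emph{first} temperature at which $g = d$, i.e.\ $T_{\rm ceil} = \inf\{T : g(T) \le d(T)\}$, together with $g(0) > 0$ (a formation rate exists at native noise). If the statement is taken literally, with $T_{\rm ceil}$ only some crossing, this is the gap I would close explicitly by that convention. On $[0,T_{\rm dam}]$ we have $d\equiv 0$, so $n = g$ is strictly increasing and positive. By the infimum definition and the intermediate value theorem, $n>0$ on all of $[0,T_{\rm ceil})$, and $n(T)\to 0^+$ as $T\to T_{\rm ceil}^-$ by continuity of $g$ and $d$.

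For (a), monotonicity is immediate: on $[0,T_{\rm dam}]$, $\tau = C/g$ is strictly decreasing, and $\tau(T) = C/n(T) \to +\infty$ as $T\to T_{\rm ceil}^-$.

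The existence of the minimum needs a compactness step, because the domain is half-open. Pick $\delta>0$ with $\tau > \tau(T_{\rm dam})$ on $(T_{\rm ceil}-\delta, T_{\rm ceil})$; the divergence guarantees such a $\delta$. Then $\tau$ is continuous on the compact interval $[T_{\rm dam}, T_{\rm ceil}-\delta]$ and attains a minimum there, at some $T^*$ with $\tau(T^*) \le \tau(T_{\rm dam})$. This $T^*$ is a global minimizer on $[0,T_{\rm ceil})$, for two reasons:
\begin{itemize}
\item on $[0,T_{\rm dam}]$, strict decrease gives $\tau \ge \tau(T_{\rm dam})$;
\item on the excised tail, $\tau > \tau(T_{\rm dam})$.
\end{itemize}
Moreover $T^* > 0$, because $\tau(0) > \tau(T_{\rm dam})$. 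Hence the nose is interior to $[0,T_{\rm ceil})$ and lies in $[T_{\rm dam}, T_{\rm ceil})$.

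For (b), if $g\equiv g_0$ then $n = g_0 - d$ is non-increasing because $d$ is non-decreasing. Positivity on the domain, by the same first-crossing convention, then makes $\tau = C/n$ non-decreasing, so $\tau(0) \le \tau(T)$ for all $T$: cold is fastest. Any minimizer at $T>0$ would have to sit on a flat stretch, which is not a nose.

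The closing ``precisely when'' sentence is the contrapositive pairing of (a) and (b): strict barrier assistance forces the interior minimum, and its absence forbids one.

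The main obstacle is not any computation but the bookkeeping in the first step. I expect the positivity of $n$ on the whole half-open domain and the choice of $T_{\rm ceil}$ as the first crossing to need stating as the intended reading; everything else follows from the compactness and excision argument above.
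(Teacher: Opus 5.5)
Your proof is correct and is essentially the paper's argument: $\tau = C/g$ is strictly decreasing on $[0,T_{\rm dam}]$ and diverges as $T \to T_{\rm ceil}^{-}$, continuity plus divergence gives a global minimizer in $[T_{\rm dam},T_{\rm ceil})$, and (b) follows from the monotonicity of $g_0 - d$. Your first-crossing reading of $T_{\rm ceil}$ (left implicit in the paper's $g-d\to 0^{+}$), the $\delta$-excision compactness step, and your remark that $\tau$ is flat on $[0,T_{\rm dam}]$ in case (b) only spell out what the paper's one-line proof compresses.
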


\begin{proof}
(a) On $[0, T_{\rm dam}]$, $d = 0$ and $g$ is strictly increasing, so $\tau = C/g$ is strictly decreasing there; and $g - d \to 0^{+}$ as $T \to T_{\rm ceil}^{-}$, so $\tau \to \infty$. Since $\tau$ is continuous on $[T_{\rm dam}, T_{\rm ceil})$ and diverges at the right end, it attains a minimum on that interval at some finite $T^{*}$; by the strict decrease on $[0, T_{\rm dam}]$, $T^{*}$ is the global minimizer over $[0, T_{\rm ceil})$ and satisfies $T_{\rm dam} \le T^{*} < T_{\rm ceil}$ --- an interior point of the temperature domain, i.e.\ a nose. (b) With $g$ constant, $g - d$ is non-increasing, so $\tau$ is non-decreasing; the minimum is at $T = 0$.
\end{proof}

The measured curves realize both branches exactly as the proposition sorts them: $K = 2$ (barrier assist capped at $1.11\times$) shows no nose in six curves --- cold fastest, the (b) branch; $K = 3$ (assist $2.17\times$) shows a clean nose at $T \approx 0.003$, non-overlapping across seeds, depth 23--41\%; $K = 4$ confirms. The full C-curve formula (attempt $\times$ assistance $\times$ ceiling proximity), with constants frozen on-axis, predicts a held-out drive lane at median 13.9\% error and a prospective double-nose position hit at 14.3\% --- and yields the intrinsic noise scale of SGD itself, $\eps_0 \approx 5\times10^{-4}$, as a fit constant (the quantitative version of ``the cold arm does not freeze''). As an application, nose-temperature annealing accelerates a chosen high-barrier circuit $\sim$2$\times$ over the Grokfast baseline \citep{lee2024grokfast} --- a mechanism-guided schedule versus a gradient filter. And the construction is a genuine TTT diagram, not only a formation-time curve: post-nucleation growth follows Johnson--Mehl--Avrami--Kolmogorov kinetics with measured Avrami exponent $n \approx 1.16$ (single-front growth) \citep{avrami1939kinetics,lei2026circuits}, supplying the transformed-fraction axis $X(t,T)$ that the metallurgical analogy requires.

\subsection{The melt line: destruction is a weakest-link law}

Formation and destruction read the same part count $K$ through opposite logical gates, and this duality is provable.

\begin{proposition}[Weakest-link melting, and the build/break duality]
\label{prop:melt}
Let a circuit function if and only if all $K$ of its links function (the defining conjunction), and let link $i$ fail when the perturbation exceeds a threshold $\sigma_i$. Then:
\textbf{(a)} the circuit's melt threshold is $\sigma^{*} = \min_i \sigma_i$: it fails as soon as any link fails;
\textbf{(b)} for nested circuits $S \subset S'$, $\sigma^{*}(S') \le \sigma^{*}(S)$: composition depth can only lower, never raise, the melt point;
\textbf{(c)} (duality) over the same part set, the formation cost is a \emph{sum} (Prop.~\ref{prop:additivity}: all parts must co-occur --- an AND for building) while the destruction threshold is a \emph{min} (any part's failure suffices --- an OR for breaking). Compositional circuits are therefore simultaneously hard to build and easy to break, with both facts derived from the same $K$.
\end{proposition}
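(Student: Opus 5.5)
The plan is to formalize the setup as a deterministic threshold model and then read off each clause from elementary properties of the minimum. Fix a scalar perturbation magnitude $\sigma \ge 0$, applied uniformly to all links. Link $i$ is functional iff $\sigma < \sigma_i$; the convention at equality is immaterial. The circuit's indicator is the conjunction $F_S(\sigma) = \bigwedge_{i \in S} \mathbf{1}[\sigma < \sigma_i]$. Its melt threshold is $\sigma^{*}(S) := \inf\{\sigma : F_S(\sigma) = 0\}$.

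\textbf{(a).} Rewrite the conjunction of threshold indicators as a single threshold indicator: $F_S(\sigma) = 1$ iff $\sigma < \sigma_i$ for every $i \in S$, iff $\sigma < \min_{i\in S}\sigma_i$. Hence $F_S$ is the step function with jump at $\min_i \sigma_i$, and $\sigma^{*}(S) = \min_{i\in S}\sigma_i$. The minimum exists because $K$ is finite.

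\textbf{(b).} Apply (a) to the nested pair $S \subset S'$. The minimum over a larger index set cannot exceed the minimum over a subset, so $\min_{S'} \sigma_i \le \min_S \sigma_i$. Equality holds iff the added links are all at least as robust as the weakest link of $S$.

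\textbf{(c).} Place two results side by side over the same part set. On the formation side, Proposition~\ref{prop:additivity} gives the action $\Delta S = \sum_i s_i$: an intersection of events whose costs add. On the destruction side, (a) shows failure is the union $\bigcup_i \{\sigma \ge \sigma_i\}$, whose threshold is a minimum. The dichotomy is then explicit: adding a part strictly raises the sum, since $s_i > 0$, and weakly lowers the min, by (b). It remains to say why both the AND and the OR come from one conjunction. By De Morgan, the circuit works iff every link works, so it breaks iff some link breaks.

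The main obstacle is interpretive rather than technical, and it sits in (c). Proposition~\ref{prop:additivity} rests on Assumptions~\ref{as:mixing}--\ref{as:indep}, while (a)--(b) are assumption-free. I would therefore state (c) as conditional on those assumptions for its "sum" half only. I would also note that (a) assumes a deterministic, uniformly applied perturbation. If the $\sigma_i$ are instead random, the statement should be read pathwise: (a) and (b) hold almost surely, and monotonicity passes to any quantile of $\sigma^{*}$.
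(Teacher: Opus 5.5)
Your proposal is correct and follows the paper's proof essentially verbatim. Part (a) identifies circuit failure with the union of link failures, which is equivalent to your rewriting of the conjunction of threshold indicators as a single threshold at $\min_i \sigma_i$; part (b) uses monotonicity of the minimum as the index set grows; part (c) sets that minimum beside the additive action of Proposition~\ref{prop:additivity}. Your extra bookkeeping is accurate and consistent with how the paper grades its results elsewhere: defining $\sigma^{*}$ as an infimum makes the equality convention immaterial, and only the ``sum'' half of (c) inherits Assumptions~\ref{as:mixing}--\ref{as:indep}.
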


\begin{proof}
(a) Failure of the circuit is the union $\bigcup_i \{\text{link } i \text{ fails}\}$, which first occurs at perturbation $\min_i \sigma_i$. (b) The minimum over a superset is at most the minimum over the subset. (c) is (a) juxtaposed with Prop.~\ref{prop:additivity}: the intersection event prices formation as a sum of exponents; the union event prices destruction as a minimum of thresholds.
\end{proof}

Measured, at circuit grain and across scale: melt thresholds under injected weight noise order parts $\gg$ compositions in 20/21 model--pair tests across both families (2.5--6$\times$; one 8\% local inversion reported as such; Fig.~\ref{fig:lifo}); on real Pythia the previous-token head melts at 5.3--6.4$\times$ the induction head's threshold at every scale --- both are near scale-invariant \emph{material constants} (induction: 0.0017--0.0025 across 31M--410M synthetic; 0.00028--0.00063 real-text), and their ratio is a constant of the composition. The parts survive to perturbations where the composition's inter-layer \emph{wiring} has long failed --- the weakest link is the composition itself, clause (b) in the wild. Melting is first-order-like (cliff, not slope), sits above the formation ceiling (a hysteresis window, measured (0.03, 0.05) in the toy), and melt points scale with maintenance strength (3--8$\times$ between data regimes) --- a dynamic, driven melting point, as the non-equilibrium branch requires. Applications: \emph{selective dissolution} --- at $T = 0.001$, induction dies while the previous-token part survives, demonstrated on real Pythia at three scales; and a \emph{risk rule} for practitioners --- assess a model's fragility by its longest chain, not its parts.

\begin{figure}[htbp]\centering
\includegraphics[width=0.85\linewidth]{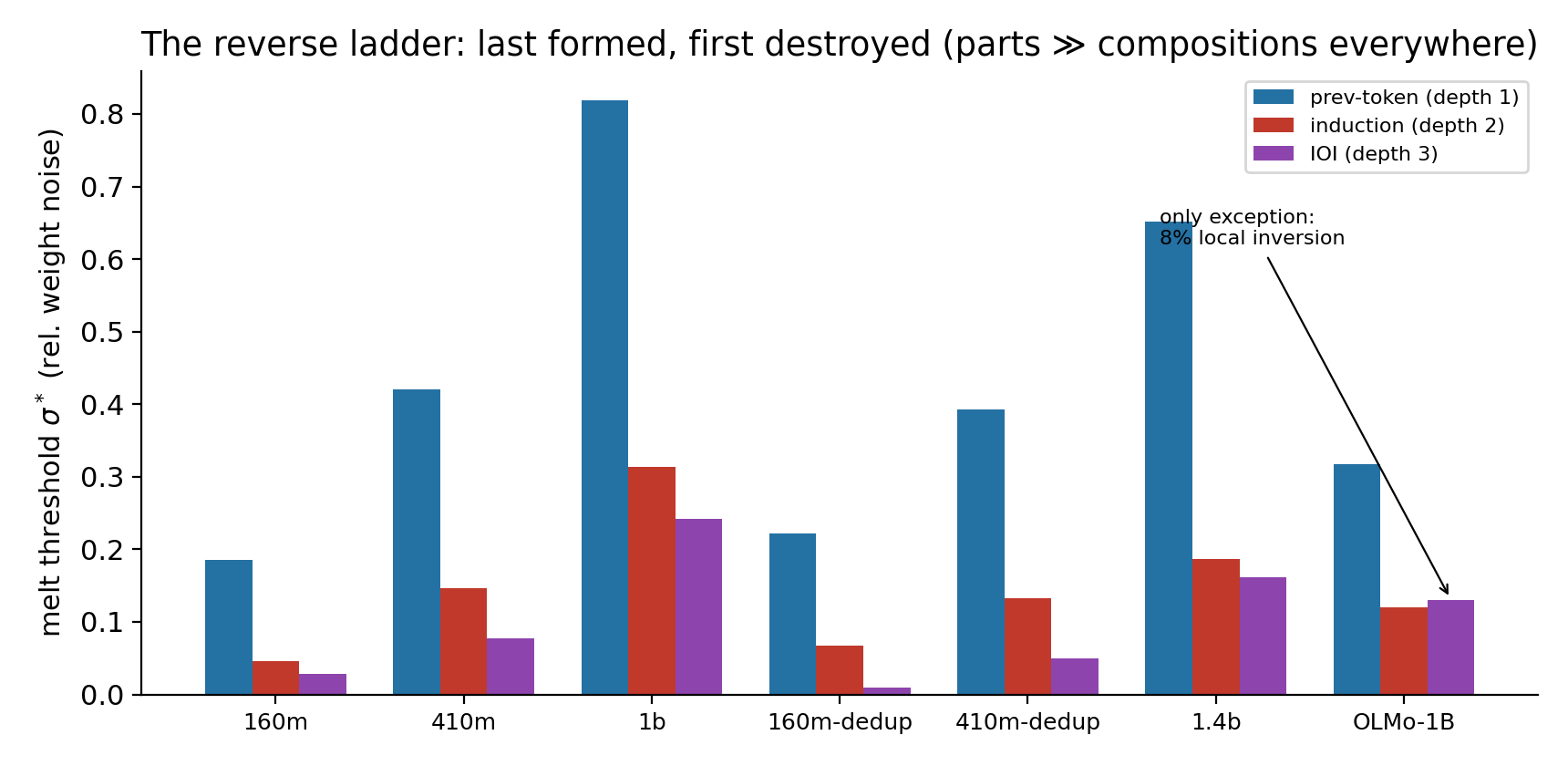}
\caption{\textbf{The reverse ladder}: melt thresholds order parts $\gg$ compositions at every scale and both families; last formed, first destroyed --- Prop.~\ref{prop:melt}(b) measured.}
\label{fig:lifo}
\end{figure}

\subsection{Glass, quench, and the release protocol}

Between the nose and the melt line lies the third control primitive: \emph{pinning}. Holding a bath at window temperature pins every sub-threshold structure at a steady-state order parameter $\phi_{ss}(T)$ --- measured as a clean decreasing spectrum ($0.373/0.303/0.204/0.164/0.106$ at $T = 0.003..0.02$, seed spread $\pm0.02$) with $\sim$100$\times$ retardation of ordering, the driven-steady-state picture of irradiated alloys \citep{martin1996driven}. Release is \emph{not} merely removing the bath: on real text, pure-temperature release fails (0/16) because quenching freezes the disorder in --- the burned state has weights inflated 5--24$\times$ and attention entropy collapsed to saturation. The working protocol is quench \emph{plus mechanism surgery}: rescaling attention-weight RMS to healthy values at release (direction preserved) recovers 6/6, and attention-only surgery re-ignites in $\sim$500 steps --- faster than a virgin substrate, the held heat converted into a head start. Damage bookkeeping is a dose law: below the knockdown line, zero accumulated damage; above it, damage $\propto$ knockdown rate $\times$ duration ($R^2$ 0.952) --- discrete Poisson-avalanche knockdowns, not diffusive wear, consistent with the jump-type destruction term.

The control palette, then: \emph{seeds and sites choose which} (a transplanted sub-critical seed re-orders barriers --- the only lawful way to invert the temperature-forced order, 8/8); \emph{drive chooses whether and how fast} (floor and throttle); \emph{temperature chooses when} (pin at $\phi_{ss}$, release by quench-plus-surgery); \emph{the melt line chooses what survives}. Each primitive is a term of Eq.~\eqref{eq:master}.

\section{The engine, the synthesis, and the boundary}
\label{sec:engine}
\label{sec:synthesis}

Two duties remain to the spine: measuring the equation's one non-obvious \emph{input} ($K$, the part count), and testing its \emph{structure} (the factorization) where all axes move at once.

\textbf{The engine.} Given only a behavioral probe, single-head ablation at a just-post-ignition checkpoint returns the circuit's constituent parts; tracking each constituent's attention-pattern crystallinity across checkpoints reveals which arrived first --- that part is the precursor, and its count is $K$ (Fig.~\ref{fig:engine}). Blind runs recover the known circuits in 10/11 capability--scale pairs. The count is physically load-bearing, not descriptive: induction ($K = 2$, $C = 1.24$) and IOI away from the wall ($K = 4$, $C = 2.1$) share $\beta \approx 0.23$, whereas the textbook depth assignment ($K = 3$ for IOI) is inconsistent with the clock --- the machine count is the one the physics accepts. Nor is the count circular: $K$ is measured by ablation at a \emph{single} checkpoint, with no access to formation times, and only then inserted into the clock --- an independently measured input that could have failed to unify the constants (the textbook $K{=}3$ \emph{does} fail), not a parameter tuned to make them agree.

\textbf{The count's domain of validity, measured.} We delimited it directly, by sweeping single-head ablation of the copy capability across four scales and two checkpoints (behavioural probe; chance $\approx 0$). Two facts emerge, and the second is a real limitation. \emph{First}, the count is right where the theory uses it: at a just-post-ignition checkpoint in the smallest model the individually-necessary set is exactly $\{$previous-token head, induction head$\}$ --- $K = 2$, both known parts recovered by a criterion that had no access to their identity --- and at 160M it is again of size 2. \emph{Second}, the criterion degrades with redundancy: by the final checkpoint at 410M and 1B \emph{no single head} clears the necessity threshold ($K \to 0$; ablating any one head leaves accuracy at $0.95$ against a $0.95$ baseline), because the capability has delocalized into interchangeable copies. A redundancy-robust variant --- the smallest \emph{group} whose joint ablation destroys the capability --- grows from 1--2 in the small/early models to 3 in the large/late ones, quantifying the redundancy, but it measures the \emph{destruction} side (a minimum over links, Prop.~\ref{prop:melt}) and therefore does not substitute for the formation-side count. The honest statement is thus sharper than before: ablation-based part counting is valid \emph{before delocalization} --- which is exactly the engine's stated operating rule --- and beyond it the formation-side $K$ must be carried by formation-time evidence (the clock's cross-circuit consistency), not by ablation. That the causal precursor (the previous-token head) remains in the necessary set at every scale and checkpoint where the criterion is defined is what the clock actually relies on, and it is measured to hold throughout.

Relatedly, the same sweep measures the AND-gate structure of real circuits directly: removing \emph{one} identified part collapses the capability to a median $5\%$ of baseline (range 2--42\%), whereas removing a randomly chosen non-part head leaves it at 99--102\% of baseline. Graceful, proportional degradation --- the signature of a partial-credit architecture --- is absent; the collapse is what a conjunction predicts, and it is not an artifact of ablation \emph{per se}, since the control ablations do nothing. The engine's one systematic failure is itself confirmatory: at late checkpoints, formed circuits delocalize into redundant copies and single-head ablation loses signal --- fixing the operating rule (dissect shortly after ignition) and independently evidencing the redundancy physics.

\textbf{The AND-gate test at scale, on the engine's own operating rule.} Applying that rule turns the test from anecdote into statistics. Sweeping three scales (70M/160M/410M) $\times$ five checkpoints (1k--16k steps, i.e.\ near ignition rather than at the end of training) $\times$ seven capabilities gives 120 cells, of which 32 are \emph{discriminating} --- capability present, and at least two individually necessary parts, so that conjunction and partial credit predict different things ($K{=}1$ cells are excluded a priori: with one part both models predict collapse). Removing one part leaves a median $16.7\%$ of the capability; removing a random non-part head leaves $100\%$. Against each cell's own partial-credit prediction ($1-1/K$, i.e.\ 50--83\% retained), \textbf{32 of 32 cells fall below}, sign test $p = 2.3\times10^{-10}$; the stricter preregistered criterion ($\le 25\%$ retained \emph{and} controls $\ge 80\%$) passes 22 of 32, the remainder landing at 25--48\% --- still below partial credit, not above it. Part counts span $K = 2$--$6$ across IOI, subject--verb agreement, greater-than, in-context learning, word-pair recall, counting, and copying. Six of the 32 were recovered by a preregistered secondary read-out: where baseline accuracy saturates at 100\% the instrument has no dynamic range, so we score the unthresholded margin (accuracy is the sign of the margin) --- a resolution fix of the same kind as measuring probes in fp32, recorded per cell so the two read-outs are never mixed.

\begin{figure}[htbp]\centering
\includegraphics[width=0.62\linewidth]{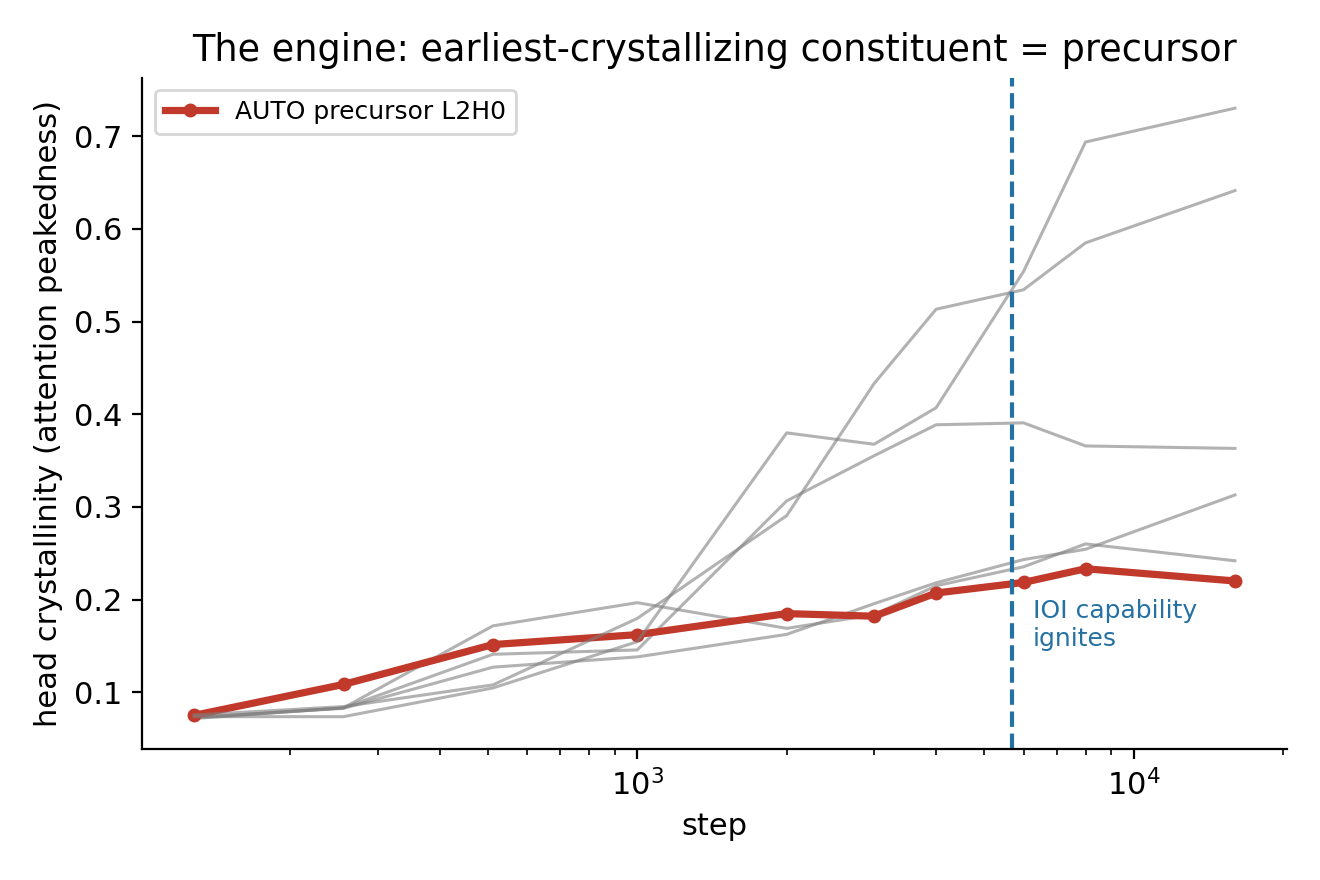}
\caption{\textbf{The engine}: ablation finds the parts; the earliest-crystallizing constituent is the precursor.}
\label{fig:engine}
\end{figure}

\textbf{The synthesis, and the boundary as a signed prediction.} Full frozen constants (continued-training regime): sites decay slope $0.61\pm0.09$; batch exponent $-0.329\pm0.046$ ($R^2$ 0.96); drive exponent $\gamma = 0.62\pm0.01$ ($R^2$ 0.999); barrier $\beta = 0.23\pm0.05$ (Pythia, off-wall); melt thresholds per Fig.~\ref{fig:lifo}. Freezing all single-axis terms and predicting three never-run combinations in which all three axes deviate at once: 8.4\% and 1.3\% error in the bulk; 112\% at the near-floor point, the error growing monotonically with floor proximity ($\mathrm{error} \propto (c - c_0)^{-3.2\pm0.9}$, $R^2$ 0.92 over three points, stated as such). By Corollary~\ref{cor:dependence} this is not an unexplained failure but the predicted signature of part dependence where capacity is shared: the factorized law holds where independence holds and couples at the boundary --- the standard structure of nucleation kinetics away from versus near a spinodal --- and it is a standing falsifiable commitment: any new combination's factorization error should be set by its floor distance.

\section{Why one law survives a complex model: the rate-determining step}
\label{sec:rds}

A fair objection runs: a language model contains many circuits, coupled far more richly than a bag of $K$ parts --- dataflow chains across layers, fan-in, cooperative and competitive interactions --- so why should a single exponential law with a single $K$ describe any real capability? The answer is not ours to claim. It is the \emph{rate-determining step} principle of chemical kinetics \citep{laidler1987chemical}: a multi-step mechanism, however elaborate, exhibits the rate law of its slowest step, and the complexity of the remaining steps collapses at the bottleneck. One Arrhenius form underwrites thousands of distinct mechanisms for exactly this reason. What is ours is the demonstration, measurement by measurement, that capability formation \emph{has} such a bottleneck, that the bottleneck is the subset-signal-free joint alignment, and that its dimensionality is observable:

\emph{(i) The cheap steps are cheap and happen early.} Circuit stages with independent utility --- parts that carry their own gradient signal --- form first and fast: the mechanism precedes the behavior in 15/15 model--task pairs (\S\ref{sec:emergence}), and an exact-depth from-scratch ladder ignites strictly in order, depths $1\to4$, in 6/6 seeds. \emph{(ii) The expensive step is the joint alignment --- and this dichotomy is itself a theorem.} Where no strict subset carries signal, a single added part multiplies formation time by $>$11$\times$ (the parity family), against 5.8$\times$ when subset signal exists (the summed family) --- the bottleneck's cost dwarfs the stages'. The staircase/leap theorems give this measured dichotomy its mathematical backbone: subset-signal-at-every-stage targets are SGD-learnable in $O(d)$ samples (an iff in the mean-field regime), while an $\ell$-wise joint step costs $\tilde\Theta(d^{\max(\ell-1,1)})$ where proven \citep{abbe2022merged,abbe2023leap} --- cheap stages and an expensive bottleneck are not our metaphor but the proven structure of SGD learning on this function class. \emph{(iii) Everything upstream of the bottleneck is kinetically invisible.} This is the direct rate-determining-step measurement: in the missing-parts grid (EK-5), waiting time depends only on the size of the remaining joint step, not on how much circuit already stands --- rows flat to 1.1--1.4 across total $K$. \emph{(iv) The wait is governed by one barrier, not a chain of comparable ones.} The waiting-time shape is exponential ($\kappa \to 1.2$, EK-1); a sequence of $k$ comparable activated steps would instead show an Erlang shape $\kappa \approx k$. \emph{(v) The bottleneck's dimensionality is measurable and load-bearing.} The engine's causally-counted $K$ is what unifies the clock across different circuits ($K{=}2$ and $K{=}4$ sharing one $\beta$) while the textbook count fails (\S\ref{sec:engine}).

Nor do circuit--circuit interactions reopen unbounded complexity: each measured interaction enters as a \emph{correction with a measured form}, not a new law. Cross-circuit catalysis makes the rate linear in neighbor density (secondary nucleation; $R^2$ 0.88 on real models in prior work \citep{lei2026circuits}); competition for shared capacity renormalizes constants rather than the form (in a two-skill furnace the loser's formation time is renormalized $\sim$20$\times$ while each skill alone obeys its own law); and the near-wall coupling carries its own scaling, $\mathrm{error} \propto (c - c_0)^{-3.2}$ (\S\ref{sec:synthesis}) --- the theory meters its own domain of validity.

The epistemic position this leaves us in is, again, not novel; it is the standard position of physical metallurgy \citep{porter1992phase}. Nobody derives a steel's TTT diagram from quantum mechanics --- steel is far too complex --- yet the framework fixes the diagram's \emph{structure}, a handful of measured points fix the alloy's constants, and the result runs industry. Here the structure (floor, clock, shelf life, melt line) transfers across families, the per-family effective $\beta$ is the material constant, and one cheap anchor --- a single attention readout at 0.36\% of training --- calibrates a family, after which frozen constants predict held-out models to median 5\% (\S\ref{sec:emergence}). The framework would break, and is built to break visibly, if the rate-limiting step showed non-exponential kinetics, if the precursor stopped dating the composition, or if the causal part-count stopped unifying the constants; none of these has happened in any system tested.

\section{Negative results and reproducibility}
\label{sec:negative}

\textbf{A design lesson worth more than the result it cost us.} Synthetic sequence tasks harbour \emph{algebraic shortcuts}: alternative solutions of much lower part count that agree with the intended target on the training distribution. We have now hit this family three times --- a closed-loop collapse in which a fully-driven conjunction degenerates into single-token copying; a look-up-table task solved by periodic orbits; and the summed design's two-term recurrence documented above, which silently replaced the intended $K$-part solution from $K{=}3$ upward and cost us a per-part barrier number we have withdrawn. The general prescription we now follow, and recommend: for any synthetic target, (i) search for algebraic identities that relate the target at nearby positions, since these generate low-part-count alternative solutions; (ii) build a \emph{discriminating probe} --- evaluation positions where the shortcut and the intended solution disagree --- and report both scores, never the aggregate alone; (iii) prefer designs that are provably shortcut-free (our query-parity design isolates each query behind a gap, so consecutive targets are statistically independent and no cross-position identity exists). A task whose intended solution is never verified against its cheapest alternative is not a measurement of that solution.

The complete failure log is part of the evidence. IOI is un-ignitable in every continued-training arm (its barrier exceeds the regime's power, consistent with its native $\tstar$); greater-than is unwithholdable (plain text already exceeds its floor --- capabilities split into withholdable and unwithholdable by food scarcity); a global entropy regularizer does not rescue aged substrates (it also suppresses the peaked circuit it should enable); newborn circuits are narrow-band (food at repetition period 128 does not serve a period-64 probe --- a trap hit twice, fixed, and itself evidence that young circuits are distance-specific before delocalization); four attempts at a precursor dose--response were all confounded by precursor redundancy (the causal case rests on the switch and the resets instead); pure-temperature release fails without surgery (\S\ref{sec:control}); three execution artifacts were caught by internal audits (a mirror resolving every revision to one commit --- detectable because step-0 weights scored as trained; \texttt{.bin}-only checkpoints refused under a torch CVE policy; an ignition-detector edge case), each with detection criteria for reproducers; and one publicly failed frozen prediction (the occupation tax was frozen as saturating and measured as divergent) is retained because a fitted story would not contain it. Every number is recomputable from released per-experiment JSON files and scripts; the EK premise-test suite (\S\ref{sec:premise-tests}) was likewise designed and its budgets frozen before running.

\section{Honest scope}
\label{sec:scope}

$\le$1.4B parameters; two families (the only ones with dense public checkpoints); causal constants belong to a continued-training regime whose token flux is far below native pretraining (the from-scratch reproduction removes the pretrained-substrate confound but not the flux gap); capabilities are mechanism-level plus one task-level demonstration --- no benchmark-suite prediction yet; switch/shelf/reset arms at $n = 5$, several others at $n = 2$--3; single hardware/framework stack; fp32 is required for \emph{probe evaluation only} (softmax scores collapse under bf16 underflow) --- models trained in low precision are probed in fp32, so the constraint is on the measurement instrument, not on training practice; the diffusion-approximation regime of Step 1 is itself a scope condition (production optimizer stacks deform the SDE); the engine's count for induction is less crisp than for IOI; direct-$K$ measurements span $K = 2$--$4$, and the wide-$K$ ladder (Remark~\ref{rem:prefactor} ff.) is the stated next experiment; $\beta$ is single-valued only away from the capacity wall; control results are toy-plus-real-Pythia, not production-scale; the alignment-fragility reading (safety training as the top rung --- last in, first out \citep{qi2023finetuning}) is a corollary of Prop.~\ref{prop:melt}(b) plus literature, not a direct experiment. The mathematical status is exactly as marked, in three grades. Proved: the no-partial-credit structure of the apparatus (Lemma~\ref{lem:nopartial}); the exponential-in-$K$ cost for the task class (query-complexity and leap theorems, \S\ref{sec:theory}A, with their stated regimes: SQ/CSQ algorithm classes, single-pass training, large ambient dimension --- and no such theorem yet exists for transformers themselves); and the occupation theorem for the backward reading (Theorem~\ref{thm:occupation}, App.~\ref{app:occupation}) --- deadline, data-irrecoverability, reset-repair, and loss-silence, its confinement side free of any mixing assumption, at the price of the model's stated idealizations (positional attention, oracle gating, the C0 feasibility condition), each idealization flagged in the appendix with its empirical counterpart. Conditional: the nucleation-form propositions, exact given Assumptions~\ref{as:mixing}--\ref{as:indep}; Assumption~\ref{as:mixing}(b) --- mixing for transformer SGD on formation timescales --- now burdens \emph{only the physical reading} (temperature, supersaturation, melting), is beyond present mathematics for anyone, and proving it would upgrade that reading to a theorem: an invitation, not a concession. Measured: everything else, preregistered.

\section{Related work: the equilibrium picture, and what a rate adds}
\label{sec:related}

\textbf{Equilibrium theories of training structure.} Singular learning theory and developmental interpretability characterize training stages as Bayesian free-energy transitions and detect them post hoc via the local learning coefficient \citep{watanabe2009algebraic,hoogland2024developmental,chen2023dynamical}; grokking has an equilibrium first-order model \citep{rubin2024grokking} and a Kramers-escape model in linear networks \citep{ersoy2026noise}. These supply the statics. None yields a formation-\emph{rate} law, a precursor clock, an aging horizon, or a control schedule --- a free energy is not a rate. \textbf{Predicting emergence.} \citet{aoyama2025predicting} predict induction-head arrival from data statistics with a throughput equation; the prediction is real but mechanism-free. Our clock is derived from a barrier law, reads an \emph{internal} precursor, and its $K$-dependence is forced, not fit; and where \citet{lavie2026phase} argue softmax-attention emergence is intrinsically unpredictable and \citet{baherwani2026random} that timing is initialization-random, the prospective clock results are a direct counterexample at the capability class we cover. \textbf{Emergence phenomenology.} \citet{wei2022emergent,schaeffer2023mirage,ganguli2022predictability} debate description; \citet{olsson2022context,elhage2021mathematical} identify the induction circuit and its phase change; \citet{reddy2024mechanistic,singh2023transient,hoogland2024developmental} model and detect its development. We add the quantitative ratio law, frozen-constant prospective validation, cross-family transfer, causal steering, and the plasticity connection. \textbf{Data mixing.} \citet{gu2026mixing} document a mixing-ratio threshold and attribute it to capacity allocation; our floor carries the nucleation signatures (intensivity, near-floor divergence, scale trend) in a mechanistic order parameter (Prop.~\ref{prop:floor}); \citet{xie2023doremi,liu2024regmix,ye2024mixing} optimize loss, not which capabilities form. \textbf{Loss of plasticity.} The phenomenon and remedies are established \citep{dohare2024loss,ash2020warm,sokar2023dormant}; the causes literature is explicitly partial \citep{lyle2024disentangling}; \citet{lyle2026grokking} conjecture the emergence--plasticity link via effective learning rate. We supply the quantitative form: one rate equation with law, horizon, located mechanism, repair, and the site/drive discriminator (Prop.~\ref{prop:sites}). Resets as a \emph{remedy} are standard in that literature \citep{dohare2024loss,sokar2023dormant}; what is new here is not the operation but the causal localization of \emph{which} weights (attention --- with MLP and embedding controls discriminated causally), \emph{why} (commitment, $r = 0.93$), and \emph{when} (the horizon and its scale law). \textbf{Non-equilibrium SGD.} Broken detailed balance and irreversibility are established \citep{chaudhari2018stochastic,ziyin2026irreversibility}; Yaida's fluctuation--dissipation relations are a measurement tool at stationarity \citep{yaida2019fluctuation}; we add the FDT \emph{qualification} test, its six-knob failed verdict, and the injected-bath prescription. \textbf{Control baselines.} Injected-noise training is itself classical --- simulated annealing, gradient noise \citep{neelakantan2015adding}, SGLD --- but is framed there as optimization or exploration; none applies a fluctuation--dissipation qualification, none selects \emph{which} circuit forms, and none demonstrates pinning or selective dissolution --- the operations, not the noise, are the contribution. Grokfast \citep{lee2024grokfast} accelerates grokking by gradient filtering without selecting circuits; dual-process steering \citep{singh2024dual} switches mechanisms by discrete forgetting; our schedules are continuous, bidirectional, and derived from the rate equation. \textbf{Fragility at coarse grain.} Layer- and module-level sensitivity to perturbation and quantization is known; the circuit-grain melt points, their cross-scale constancy, and the build/break duality (Prop.~\ref{prop:melt}) are new. \textbf{Metaphor.} A crystallization framing of training has been proposed qualitatively \citep{hallgren2025crystals}; this program makes it quantitative and falsifiable. \textbf{Our own prior work} develops single-circuit nucleation with measured Langevin structure \citep{lei2026circuits}; this paper is the generalization to the full life cycle and the control layer.

\section{Conclusion}

Training has had a phase diagram; this paper gives it a TTT diagram. One rate equation --- sites $\times$ attempts $\times$ supersaturation $\times$ barrier, minus destruction --- derived under stated assumptions and measured factor by factor, carries a capability from its first invisible precursor through ignition, aging, and, if one chooses, controlled dissolution. Emergence is the equation read forward: a floor, a throttle, and a clock that dates arrival to median 5\% before the capability exists. Loss of plasticity is the same equation read backward: a tax, a horizon, a located mechanism, a repair --- and, in a controlled model, a theorem --- the quantitative law a decade of remedies has been missing. Control is the equation completed: install the temperature SGD lacks, and circuits can be accelerated at the nose, pinned in the glass, released on command, and dissolved selectively at their melt points --- scheduled, in short, like the heat treatment of an alloy. The mathematics is exactly as strong as its premises, and the premises are the experiments: joint alignment is a product law (measured), the barrier is additive (measured, with its own deviation theory), the waiting is memoryless (measured), and the one unproved premise --- mixing for transformer SGD --- is stated as the field's next theorem to want.

\FloatBarrier
\appendix
\section{The occupation theorem: model, assumptions, and full proofs}
\label{app:occupation}

This appendix proves Theorem~\ref{thm:occupation}. Every step is graded: \emph{exact} (arithmetic or algebraic identity), \emph{in-model theorem} (proved under the stated assumptions), \emph{cited theorem} (an external result with its conditions verified), or \emph{explicit assumption}. The headline structural fact bears repeating: clauses (ii)--(iv) --- the consequences of occupation --- use only a drift inequality and bounded increments (Hajek \citep{hajek1982hitting}); no ergodicity or mixing enters. Only clause (i), the deadline's \emph{arrival}, uses the recruitment assumption (G8), whose empirical counterpart is the measured occupation of \S\ref{sec:plasticity}.

\subsection{Model}
\label{app:occ-model}

\textbf{Data.} Each sequence position belongs independently to one of: a \emph{background skill} $i \in \{1..M\}$ (share $\pi_i$): a cue token $g_i$ appears and the correct next token is $x_{t-o_i}$, a copy from fixed offset $o_i$; the \emph{target} (share $c$): a query token $Q$ appears and the correct next token is $Y = \big(\sum_{j \in R} x_{t-j}\big) \bmod V$ with $R \subset \{1..W\}$, $|R| = K \ge 2$, window contents i.i.d.\ uniform on $\mathbb{Z}_V$ (the apparatus of Lemma~\ref{lem:nopartial}); or \emph{neutral substrate} (remaining share): i.i.d.\ uniform tokens.

\textbf{Architecture.} One layer of $H$ heads. Head $h$ carries positional scores $s^{(h)} \in \mathbb{R}^W$ with attention $a^{(h)} = \mathrm{softmax}(s^{(h)})$, and outputs $u_h = \sum_j a^{(h)}_j e(x_{t-j})$ with $e$ the one-hot value embedding; thus $u_h \in \mathrm{conv}\{e_1,\dots,e_V\}$ and the joint domain $\mathcal{U} := \{(u_1..u_H)\}$ is convex and compact. The readout is \emph{gated by the current token}: $z = F(u;\, g)$, with a background channel (value matrix $B$, plus a bounded higher-order part $F_2$) on cue tokens, a target channel $F_Q$ on $Q$, and the substrate channel \emph{closed} ($z$ constant, independent of $u$).

\textbf{Training.} SGD with learning rate $\eta$, gradients clipped in norm at $G$, minibatches sampled unbiasedly from the stream.

\textbf{Definitions.} Head $h$ \emph{carries} skill $i$ if $a^{(h)}_{o_i} \ge 1 - \delta_0$, the open-channel value margin is positive (G4), and $h$ is the unique carrier (zero-ablating $h$ collapses the margin --- the operational, measured definition). With $\mu_i$ as in C0 below, a carrier is \emph{firmly pinned} if $\mu_i > 0$. $N_{\rm avail}(t) :=$ number of heads not firmly pinned; $t_{\rm dead} := \inf\{t : N_{\rm avail}(t) < K\}$. Serving a target slot $r$ requires $a^{(h)}_r \ge \gamma_{\rm need} > \delta_0$; \emph{formation} $=$ $K$ heads serving the $K$ slots of $R$. For a carrier, the score-gap coordinate is $\xi_j := s^{(h)}_j - s^{(h)}_{o_i}$ (so $a_j / a_{o_i} = e^{\xi_j}$), and $\ell_k := \partial\, \mathbb{E}[L] / \partial a_k$, $\langle \ell \rangle_a := \sum_k a_k \ell_k$.

\subsection{Assumptions and conditions}
\label{app:occ-assumptions}

\begin{assumption}[The gated model]
\label{as:gated}
\textbf{(G1) Readout locality:} logits depend on window contents only through $(u_1..u_H)$ --- no trainable bypass from raw tokens. \textbf{(G2) Positional attention:} one global pattern per head. \textbf{(G3) Disjointness:} $\{o_i\} \cap R = \emptyset$. \textbf{(G4) Carrying margin:} value-channel margin $\gamma := \min_x [B_{xx} - \max_{y \ne x} B_{yx}] \ge \gamma_0 > 0$ for carried skills. \textbf{(G5) Regularity:} row-mean balance $|\bar B_y - \bar B_{y'}| \le \gamma/4$ with $\bar B_y := \frac1V \sum_x B_{yx}$; each channel is $C^{K+1}(\mathcal{U})$ with mixed partials $\le C_F$; $\beta_2 := \sup_{\mathcal U} \|\nabla_u F_2\|$; $\beta_3 := \sup_{\rm basin} |\partial_\xi\text{-differential of } \nabla\Phi_Q|$; $\Lambda := \sup_{\rm basin} \max_k |\ell_k - \ell_{o_i}|$ (all finite: $\mathcal{U}$ compact). \textbf{(G6) Oracle gating:} the substrate channel is closed. \emph{This is the model's most idealized assumption: it presupposes perfect task identity. Removing it breaks the theorem. Its internal necessity: with the substrate channel open, the uniformizing force (Lemma~\ref{lem:decomp}) acts at the largest data share and penalizes large attention masses hardest --- in the score-gap coordinate an escape-\emph{assisting} force with no small factor --- and background skills themselves become unlearnable at equilibrium (junk penalty exceeds skill reward). Gating is what a substrate trained to equilibrium looks like; real transformers implement it by content routing. Empirical counterpart: trained models output near-uniform on substrate positions, and control ablations retain 99\%.} \textbf{(G7) Leakage bound:} on skill-$i'$ positions the readout's coupling to non-carrier heads yields a score-gap differential pull $\le \chi$. \textbf{(G8) Recruitment (ergodic-flavored; used only for clause (i)):} while $\ge K$ heads are available, each uncarried high-traffic skill has per-step recruitment hazard $\ge \rho_{\min} > 0$.
\end{assumption}

\textbf{Conditions.} With $\lambda'$ and $J$ defined in Lemmas~\ref{lem:restoring} and \ref{lem:weakcoupling}: \textbf{C0 (feasibility):} at least $H - K + 1$ skills have $\mu_i := \pi_i \lambda' - c(J + \beta_3) - \chi - \beta_2 > 0$. (Uniform special case $\pi_i = (1-c)/M$: C0 reads ``$M$ not so large as to dilute the pull-back.'' The constants inside $J$ are conservative; effective serving-head counts would tighten them.) \textbf{C2 (small step):} $2\eta G \le 1$. \textbf{C3 (timescale separation):} with tolerance $\delta_{\rm tol}$ and $T_{\rm rec} := (H{-}K{+}1)/(\rho_{\min} p_0)$: $T_{\rm rec} \le (\delta_{\rm tol}/2H)\, e^{\theta(2 - 2\eta G)}$ ($\theta$ from Lemma~\ref{lem:confinement}; automatic at small $\eta$ since the right side is exponentially large). \textbf{C4 (small basin):} $\delta_0 \Lambda \le \mu_i$.

\subsection{Pre-formation weak coupling}

\begin{lemma}[Conditional uniformity; exact]
\label{lem:condunif}
For any $S \subsetneq R$ and any window coordinates outside $R$, conditioning on them leaves $Y$ uniform on $\mathbb{Z}_V$.
\end{lemma}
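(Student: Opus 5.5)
The plan is to prove Lemma~\ref{lem:condunif} by the same splitting argument as Lemma~\ref{lem:nopartial}(a), transported from $\mathbb{Z}_2$ to $\mathbb{Z}_V$. Fix $S \subsetneq R$ and pick any $j^\star \in R \setminus S$, which exists because the inclusion is strict. Decompose
\[
Y \;=\; \Big(\sum_{j \in R,\, j \neq j^\star} x_{t-j}\Big) + x_{t-j^\star} \pmod V .
\]
Write $\mathcal{C}$ for the conditioning $\sigma$-algebra. It is generated by $(x_{t-j})_{j \in S}$ together with any window coordinates outside $R$, and, if wanted, the rest of the context $Z$.

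The key step is to condition one level finer than asked. Enlarge $\mathcal{C}$ to $\mathcal{C}'$, generated by $\mathcal{C}$ together with $(x_{t-j})_{j \in R \setminus \{j^\star\}}$. Under $\mathcal{C}'$ the first summand $U := \sum_{j \ne j^\star} x_{t-j}$ is fixed. Window contents are i.i.d.\ uniform on $\mathbb{Z}_V$ and independent of everything outside the window. Hence $x_{t-j^\star}$ is independent of $\mathcal{C}'$ and uniform.

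For a fixed $u$, the translation $x \mapsto u + x$ is a bijection of $\mathbb{Z}_V$, so $U + x_{t-j^\star}$ is uniform conditionally on $\mathcal{C}'$. Formally, for each $y$,
\[
P(Y = y \mid \mathcal{C}') \;=\; P(x_{t-j^\star} = y - U \mid \mathcal{C}') \;=\; 1/V \quad \text{a.s.}
\]
Finally, the tower property gives $P(Y = y \mid \mathcal{C}) = \mathbb{E}[1/V \mid \mathcal{C}] = 1/V$. This is the claim, and it is in fact stronger: $Y$ is uniform and independent of $\mathcal{C}$.

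The only point needing care, rather than a genuine obstacle, is bookkeeping the independence. The coordinates "outside $R$" may include offsets $o_i$ that background skills read (G3), and $\mathcal{C}$ may be allowed to contain cue or query tokens. I would state explicitly that the data model draws window contents i.i.d.\ and independently of the position-type label and of non-window context. That is the only fact used, and it ensures the single free coordinate $x_{t-j^\star}$ is untouched by any conditioning. No Fourier argument is needed, although one could alternatively observe that the characters $\omega^{k Y}$ with $k \neq 0$ have vanishing conditional mean.
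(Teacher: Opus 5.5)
Your proof is correct and follows the paper's argument. Pick $j^\star \in R\setminus S$, use that $x_{t-j^\star}$ is uniform and independent of the conditioning, and invoke translation invariance on $\mathbb{Z}_V$; your extra step of conditioning on the remaining $R$-coordinates and then applying the tower property makes rigorous what the paper compresses into the ``$\cdots$'' of $Y = (\text{const} + X_{j^*} + \cdots)$.
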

\begin{proof}
Pick $j^* \in R \setminus S$: $Y = (\text{const} + X_{j^*} + \cdots) \bmod V$ with $X_{j^*}$ uniform and independent of the conditioning; uniformity is translation-invariant. Hence any predictor reading a strict subset has cross-entropy $\ge \ln V$, with equality only for the constant uniform predictor.
\end{proof}

\begin{lemma}[Fourier support; exact]
\label{lem:fourier}
With $\omega = e^{2\pi i/V}$ and $\chi_m^R(x) := \prod_{j \in R} \omega^{m x_j}$: $\mathbf{1}_{Y=y} = \frac1V \sum_m \omega^{-my} \chi_m^R(x)$; for $m \ne 0$ the character's support is exactly $R$; and $\mathrm{Cov}(\mathbf{1}_{Y=y}, \varphi) \ne 0$ iff $\varphi$ has a Fourier component with support $\supseteq R$.
\end{lemma}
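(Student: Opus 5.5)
The plan is a direct computation in the character basis of $\mathbb{Z}_V^W$, using the i.i.d.\ uniform window contents. There are three steps: an inversion formula for the indicator, a support computation for the characters, and a covariance criterion via Parseval.

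\textbf{Step 1: expansion of the indicator.} I start from the discrete orthogonality relation $\frac1V\sum_{m=0}^{V-1}\omega^{m k} = \mathbf{1}_{k \equiv 0 \pmod V}$. Setting $k = Y - y$ and using $Y \equiv \sum_{j\in R} x_j$ gives $\omega^{mY} = \prod_{j \in R}\omega^{m x_j} = \chi_m^R(x)$. The first claim follows at once:
\[
\mathbf{1}_{Y=y} = \tfrac1V\sum_m \omega^{-my}\chi_m^R(x).
\]

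\textbf{Step 2: support of each character.} Each $\chi_m^R$ is a character of the group $\mathbb{Z}_V^W$ with frequency vector $k\in\mathbb{Z}_V^W$ given by $k_j = m$ for $j\in R$ and $k_j = 0$ otherwise. For $m \not\equiv 0$ every coordinate in $R$ is nonzero, so the support of $k$ is exactly $R$.

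\textbf{Step 3: the covariance criterion.} Take an arbitrary $\varphi\in L^2$ and expand it as $\varphi = \sum_k \hat\varphi(k)\chi_k$ in the orthonormal character basis. The inner product here is $\mathbb{E}[f\bar g]$ under the uniform product measure; if $\varphi$ also depends on $Z$, I would condition on $Z$ first, using independence, or work with complex conjugates as needed. The constant term $m=0$ drops out of the covariance. By Parseval,
\[
\mathrm{Cov}(\mathbf{1}_{Y=y},\varphi) = \tfrac1V\sum_{m\neq 0}\omega^{-my}\,\overline{\hat\varphi(k_m)}
\]
up to the conjugation convention, where $k_m$ is the frequency vector with value $m$ on $R$.

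\emph{Sufficiency (covariance nonzero implies an $R$-supported component).} If the covariance is nonzero, then some $\hat\varphi(k_m)$ with $m\neq 0$ is nonzero. That component has support $R\supseteq R$.

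\emph{Necessity (the converse), and the main obstacle.} The literal converse fails: $\varphi$ may have components supported strictly on supersets of $R$, or on $R$ itself but with coefficients chosen so that $\sum_m \omega^{-my}\overline{\hat\varphi(k_m)}$ cancels for this particular $y$. I will handle this in two ways.

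First, components whose support strictly contains $R$, or differs from $R$ in any way, are orthogonal to every $\chi_m^R$. So a nonzero covariance requires a component with support \emph{exactly} $R$, and of the special diagonal form $k_m$. This gives the sharpened "only if" direction.

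Second, for the "if" direction I will read the statement as quantified over $y$: $\mathrm{Cov}(\mathbf{1}_{Y=y},\varphi)\neq 0$ for some $y$ iff $\varphi$ has a nonzero component on some $k_m$ with $m\neq0$. This holds because the map $y\mapsto\mathrm{Cov}$ is the inverse DFT of the sequence $m\mapsto \overline{\hat\varphi(k_m)}$ for $m\ne0$, and the DFT is injective.

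Pinning the statement to this corrected form (exact support $R$ with diagonal frequencies, and existence of $y$) is the one delicate point. Everything else is routine orthogonality.
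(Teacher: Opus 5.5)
Your argument is correct and follows the paper's own proof: discrete inversion on $\mathbb{Z}_V$ gives the expansion of $\mathbf{1}_{Y=y}$, and orthogonality of characters under the i.i.d.\ uniform measure gives the covariance criterion. You are also right that the literal ``if'' direction is false: a component supported strictly on a superset of $R$, or on $R$ with non-diagonal frequency, is orthogonal to every $\chi_m^R$, and coefficients can cancel at a fixed $y$. This does not affect the paper, since Lemmas~\ref{lem:weakcoupling} and~\ref{lem:decomp} use only the expansion and the ``only if'' direction, which you prove in sharper form (exact support $R$, diagonal frequencies $k_m$).
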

\begin{proof}
Discrete inversion on $\mathbb{Z}_V$; character orthogonality under the i.i.d.\ uniform measure.
\end{proof}

\begin{lemma}[Iterated-oscillation bound; in-model theorem]
\label{lem:weakcoupling}
Let $A_r := \sum_h a^{(h)}_r$. Under (G1) and (G5), every Fourier coefficient of $z$ or of $\partial z/\partial s^{(h)}$ with support $\supseteq R$ is bounded by $C_F\, 4^K \prod_{r \in R} A_r$. Consequently, pre-formation (some slot $r^*$ has $A_{r^*} \le H\bar\alpha$), the label-dependent (signal) part of the target gradient on any head's scores is bounded by $c J$ with $J := V C_F 4^K H^K \bar\alpha$.
\end{lemma}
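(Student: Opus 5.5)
The plan is to view $z$, and each component of $\partial z/\partial s^{(h)}$, as a function of the window contents $x=(x_1,\dots,x_W)$ only through the head outputs $u_h=\sum_j a^{(h)}_j e(x_{t-j})$, which (G1) permits. We then expand $F$ around a reference point of $\mathcal{U}$ in the multilinear way that exposes how $F$ depends on each slot $r\in R$. The structural fact doing the work is that the slot-$r$ contribution enters every $u_h$ with weight exactly $a^{(h)}_r$. Write $u_h = u_h^{(-r)} + a^{(h)}_r\big(e(x_r)-\bar e\big)$, where $\bar e$ is the uniform mixture and $u_h^{(-r)}$ is independent of $x_r$ up to the centered correction. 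A Fourier coefficient with support $\supseteq R$ is obtained by applying the averaging difference operator $\Delta_r\varphi := \varphi - \mathbb{E}_{x_r}\varphi$ for each $r\in R$, followed by a character projection.

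The core estimate is an iterated-oscillation (mixed finite-difference) bound. For a single $r$, the mean value theorem gives $|\Delta_r F|\le \sup\|\partial_{u}F\|\cdot\sum_h a^{(h)}_r\|e(x_r)-e(x_r')\|_1\le 2\,C_F A_r$. Iterating over the $K$ distinct slots of $R$ and using the fundamental theorem of calculus along each of the $K$ directions in turn, the $K$-fold mixed difference is bounded by a $K$-th mixed partial (at most $C_F$, by the $C^{K+1}$ regularity in (G5)) times $\prod_{r\in R}\big(2\sum_h a^{(h)}_r\big)$. This yields a constant like $2^K\prod_r A_r$. One factor of $2$ per slot comes from $\|e(x)-e(x')\|_1\le 2$, and a second comes from the averaging operator, $|\Delta_r\varphi|\le 2\,\mathrm{osc}_r\varphi$. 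Together these produce $4^K$. Taking the character projection, an average of modulus-one weights, does not increase the bound.

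For the derivative $\partial z/\partial s^{(h)}$ we argue the same way. Since $\partial u_h/\partial s^{(h)}_j = a^{(h)}_j\big(e(x_{t-j})-u_h\big)$, the derivative is again a smooth function of $(u,x)$ with the same slot-weight structure, and the $C^{K+1}$ hypothesis covers the one extra derivative. I expect this step to be the main obstacle. If $j\in R$, the factor $e(x_{t-j})$ depends on $x_j$ directly rather than through $a_j$, which would lose one power of $A_j$. My fix is to note that this term carries the prefactor $a^{(h)}_j\le A_j$, which restores the missing factor.

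The consequence then follows. Pre-formation some $A_{r^*}\le H\bar\alpha$, and trivially $A_r\le H$ for the other slots, so $\prod_r A_r\le H^K\bar\alpha$. The label-dependent part of the target gradient is $\mathbb{E}\big[(\mathbf{1}_{Y=y}-\tfrac1V)\,\partial z_y/\partial s^{(h)}\big]$. By Lemma~\ref{lem:fourier}, only Fourier components with support $\supseteq R$ contribute to this covariance. Lemma~\ref{lem:condunif} removes everything else. Summing over the $V$ frequencies $m$, or equivalently over the labels $y$, contributes the factor $V$, and the target share contributes the factor $c$. This gives the bound $c\,V C_F 4^K H^K\bar\alpha = cJ$.
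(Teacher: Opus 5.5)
Your proposal follows the paper's proof. Averaging out each $x_r$, $r\in R$, leaves the Fourier coefficient unchanged, and the $K$-fold mixed difference is an iterated line integral along segments inside the convex $\mathcal{U}$, giving $C_F\prod_{r\in R}2A_r$. The prefactor $a^{(h)}_j\le A_j$ correctly absorbs the direct $e(x_j)$-dependence of $\partial z/\partial s^{(h)}_j$. The signal bound then follows from Lemma~\ref{lem:fourier}, the factor $V$, and $\prod_r A_r\le H^{K-1}A_{r^*}\le H^K\bar\alpha$.

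One bookkeeping correction: averaging costs no factor of $2$, since $|\varphi-\mathbb{E}_{x_r}\varphi|\le\mathrm{osc}_r\varphi$. In the paper, the second $2^K$ in $4^K$ instead pays for the $2^K$ terms of the discrete Leibniz rule applied to the product $\langle\nabla_{u_h}F,\,a^{(h)}_j(e(x_j)-u_h)\rangle$. Your ``argue the same way'' step for the derivative does not count those terms.
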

\begin{proof}
\emph{(Coefficient bound.)} For $m \ne 0$, $\mathbb{E}_{X_r}[\omega^{-mX_r}] = 0$, so replacing $g$ by $g - \mathbb{E}_{X_r}[g \mid \text{rest}]$ leaves $\langle g, \bar\chi_m^R\rangle$ unchanged while its sup-norm is bounded by the direction-$r$ oscillation; iterating over the $K$ coordinates of $R$ bounds the coefficient by the mixed $K$-fold oscillation. Changing $X_r$ displaces $u$ by $v_r := \sum_h a^{(h)}_r (e(x_r') - e(x_r))$, $\|v_r\|_1 \le 2A_r$, and the segment $[u, u + v_r]$ stays in the convex $\mathcal{U}$, so the mixed difference has the iterated line-integral representation $\int_0^1\!\!\cdots\!\int_0^1 D^K F[v_{r_1},..,v_{r_K}]\, d\lambda$, giving $|{\cdot}| \le C_F \prod_r 2A_r$ --- the mean-value step is legitimate precisely because $\mathcal{U}$ is convex. For $\partial z / \partial s^{(h)}_j = \langle \nabla_{u_h} F,\, a_j^{(h)}(e(x_j) - u_h)\rangle$, both factors depend on contents; the discrete Leibniz rule $\Delta_r(fg) = (\Delta_r f) g + (S_r f)(\Delta_r g)$ iterated over $K$ coordinates yields $2^K$ terms, each bounded by $C_F \prod_r 2A_r$, for a total $C_F 4^K \prod_r A_r$. \emph{(Signal bound.)} By Lemma~\ref{lem:fourier} the label-dependent part flows only through support-$\supseteq R$ components; summing $V$ frequencies and bounding $\prod_r A_r \le H^{K-1} \cdot A_{r^*} \le H^K \bar\alpha$ gives the claim. A linear readout ($K$-th mixed derivative zero) has \emph{exactly zero} signal drift.
\end{proof}

\begin{lemma}[Exact gradient decomposition; in-model theorem]
\label{lem:decomp}
On target positions, for any parameter block,
$\mathbb{E}[\nabla L_T] = \nabla \Phi + S$, where $\Phi := \mathbb{E}\big[-\tfrac1V \sum_y \ln p_y\big]$ (the \emph{uniformizing potential}, label-free, globally minimized exactly on the chance manifold $p \equiv \mathrm{uniform}$ by Gibbs' inequality) and $S := -\sum_y \mathrm{Cov}\big(P(Y{=}y \mid X),\, \nabla z_y\big)$ with $|S| \le cJ$ pre-formation.
\end{lemma}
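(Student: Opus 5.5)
Write the per-position cross-entropy on a target position as $L_T = -\ln p_Y = -\sum_y \mathbf{1}_{Y=y}\ln p_y$, with $p = \mathrm{softmax}(z)$ and $z$ depending on window contents $X$ only through the head outputs (G1). Differentiating in any parameter block gives $\nabla L_T = \sum_y (p_y - \mathbf{1}_{Y=y})\nabla z_y$. The plan is to add and subtract the chance label distribution $\tfrac1V$: write $\mathbf{1}_{Y=y} = \tfrac1V + (\mathbf{1}_{Y=y} - \tfrac1V)$. The first piece yields $\sum_y (p_y - \tfrac1V)\nabla z_y$. This is exactly the gradient of $-\tfrac1V\sum_y \ln p_y$, since $\partial(-\tfrac1V\sum_y\ln p_y)/\partial z_k = p_k - \tfrac1V$ after using $\sum_y \partial \ln p_y/\partial z_k = 1 - Vp_k$. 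Taking expectations and exchanging $\mathbb{E}$ and $\nabla$ (everything is bounded and smooth on the compact $\mathcal U$ by (G5)) gives $\nabla\Phi$. The key point is that $\Phi$ is label-free: it depends on the data only through the marginal law of $X$.

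The second piece is $-\sum_y \mathbb{E}[(\mathbf{1}_{Y=y} - \tfrac1V)\nabla z_y]$. Condition on $X$: since $\nabla z_y$ is $X$-measurable, this becomes $-\sum_y \mathbb{E}[(P(Y{=}y\mid X) - \tfrac1V)\nabla z_y]$. Because $Y$ is marginally uniform (Lemma~\ref{lem:condunif} with $S=\emptyset$), $\mathbb{E}[P(Y{=}y\mid X)] = \tfrac1V$. So each term equals $-\mathrm{Cov}(P(Y{=}y\mid X), \nabla z_y)$, which is the stated $S$.

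For the Gibbs claim, note $\Phi = \mathbb{E}[\mathrm{KL}(\mathrm{unif}\,\|\,p) + \ln V]$. Hence $\Phi \ge \ln V$, with equality iff $p \equiv$ uniform a.s. This is exactly the chance manifold.

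For the bound $|S| \le cJ$, expand $P(Y{=}y\mid X) = \mathbf{1}_{Y=y}$ by Lemma~\ref{lem:fourier}. Its nonconstant part consists only of characters $\chi^R_m$, $m\neq 0$, with support exactly $R$. By orthogonality, the covariance with $\nabla z_y$ picks out only the Fourier coefficients of $\nabla z_y$ (equivalently of $\partial z/\partial s^{(h)}$ for score blocks) whose support contains $R$. Lemma~\ref{lem:weakcoupling} bounds each such coefficient by $C_F 4^K\prod_r A_r \le C_F4^KH^K\bar\alpha$ pre-formation. Summing over the $V$ frequencies and weighting by the target share $c$ (the positions on which $L_T$ lives) gives $cJ$.

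The main obstacle is making this last step rigorous for a general parameter block rather than just the score block treated in Lemma~\ref{lem:weakcoupling}. For readout parameters $\nabla z_y$ is $\nabla_\theta F(u;Q)$. The same iterated-oscillation argument applies to $\nabla_\theta F$, because it is a $C^{K+1}$ function of $u$ on convex $\mathcal U$. The bound needs the mixed-partial constant $C_F$ to cover those derivatives, which I would absorb into (G5). I would also track that the Parseval/character sum contributes at most a factor $V$, already included in $J$.
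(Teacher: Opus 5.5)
Your proposal is correct and follows essentially the paper's own argument: pass from $\mathbf{1}_{Y=y}$ to $P(Y{=}y\mid X)$ by the tower property, and split off the chance value $\tfrac1V$ so that the softmax identity $\partial_{z_k}\bigl(-\tfrac1V\sum_y\ln p_y\bigr)=p_k-\tfrac1V$ produces $\nabla\Phi$. The covariance remainder is then bounded through the Fourier support of Lemma~\ref{lem:fourier} and the coefficient bound of Lemma~\ref{lem:weakcoupling}. Your explicit KL form of the Gibbs claim and your flag about general parameter blocks are genuine refinements: Lemma~\ref{lem:weakcoupling} is stated only for $z$ and score derivatives, so the $cJ$ bound for blocks such as readout parameters needs regularity of $\nabla_\theta F_Q$ beyond what (G5) literally provides, a step the paper's proof passes over by simply citing that lemma.
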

\begin{proof}
$\mathbb{E}[\mathbf{1}_{Y=y} \nabla z_y] = \mathbb{E}[P(Y{=}y|X) \nabla z_y]$ (tower property; the model's forward pass never sees $Y$). Split $P(Y{=}y|X) = \frac1V + (\text{full-support part})$ by Lemmas~\ref{lem:condunif}--\ref{lem:fourier}; the $\frac1V$ part joins $\sum_y \mathbb{E}[p_y \nabla z_y]$ to form $\sum_y \mathbb{E}[(p_y - \frac1V)\nabla z_y] = \nabla\Phi$ (softmax identity $\partial_{z_k}(-\frac1V\sum_y \ln p_y) = p_k - \frac1V$); the remainder is $S$, bounded by Lemma~\ref{lem:weakcoupling}. \emph{Level separation:} this lemma establishes the decomposition and the force's form only; \emph{where} the uniformizing force acts is governed by (G6)/(G7) and is book-kept in Lemma~\ref{lem:drift}.
\end{proof}

\subsection{Exclusivity and restoring drift}

\begin{lemma}[Exclusivity; exact]
\label{lem:exclusivity}
Under (G2), $a^{(h)} \in \Delta_W$ is a single global vector: carrying ($a_{o_i} \ge 1-\delta_0$) forces total target-slot mass $\le \delta_0$; serving a slot ($\ge \gamma_{\rm need} > \delta_0$) requires first shedding mass from $o_i$. \emph{(Rank-1 remark, supporting evidence only: content-dependent scores of rank one, $s_j(\mathrm{cue}) = \varphi(\mathrm{cue})\kappa_j$, admit at most two sharp patterns per head --- $\arg\max \kappa$ and $\arg\min \kappa$ --- so the theorem survives with $H \to 2H$. General rank is not claimed; extrapolation is carried by the experiments.)}
\end{lemma}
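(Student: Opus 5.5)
The claim is elementary once the simplex constraint imposed by (G2) is spelled out, so the plan is to argue directly from $a^{(h)} \in \Delta_W$. Under positional attention the head has a single global vector $a^{(h)} = \mathrm{softmax}(s^{(h)})$: it is the same at target and at cue positions. Its entries are nonnegative and sum to one. All statements will be read off this one vector, together with the disjointness (G3) of the carried offset $o_i$ from the target slot set $R$.

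\emph{Step 1 (carrying bounds target mass).} If $h$ carries skill $i$, then $a^{(h)}_{o_i} \ge 1-\delta_0$. By (G3), $o_i \notin R$, so the target slots are among the remaining coordinates. Hence
\[
\sum_{r\in R} a^{(h)}_r \;\le\; \sum_{j\ne o_i} a^{(h)}_j \;=\; 1 - a^{(h)}_{o_i} \;\le\; \delta_0 .
\]
\emph{Step 2 (serving requires shedding).} If $h$ serves some slot $r \in R$, then $a^{(h)}_r \ge \gamma_{\rm need}$. The same simplex identity gives $a^{(h)}_{o_i} \le 1 - \gamma_{\rm need} < 1-\delta_0$, because $\gamma_{\rm need} > \delta_0$. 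The carrying condition is therefore violated. Any trajectory from a carrying state to a serving state must pass through states where $a^{(h)}_{o_i}$ has dropped below $1-\delta_0$; in score-gap coordinates this means the gap $\xi_r = s^{(h)}_r - s^{(h)}_{o_i}$ rises from at most $\ln\!\big(\delta_0/(1-\delta_0)\big)$ to at least $\ln\!\big(\gamma_{\rm need}/(1-\gamma_{\rm need})\big)$. This is the ``shedding'' quantified, and it is the form Lemma~\ref{lem:drift} and the confinement lemma will consume. Both steps are exact and involve no probability.

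\emph{Rank-1 remark (supporting only).} Suppose instead the scores are content-dependent of rank one, $s_j(\mathrm{cue}) = \varphi(\mathrm{cue})\kappa_j$. Then as $\varphi \to +\infty$ the softmax concentrates on $\arg\max_j \kappa_j$, and as $\varphi \to -\infty$ on $\arg\min_j \kappa_j$. For bounded $|\varphi|$ the pattern is a fixed softmax of a scaled $\kappa$, which can place mass $\ge 1-\delta_0$ on a single coordinate only when $|\varphi|$ is large enough for one of the two extremal coordinates to dominate. So each head has at most two sharp patterns. Duplicating each head into two positional ones, Steps 1--2 apply with $H \to 2H$.

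The main obstacle is this last claim, not Steps 1--2. Ties in $\kappa$, and making ``sharp'' uniform in the threshold $\delta_0$, need care: I would first assume the extremal entries of $\kappa$ are unique, and absorb near-ties into $\delta_0$. Since the paper labels the remark supporting evidence only, I would state it at that level of rigor and make no general-rank claim.
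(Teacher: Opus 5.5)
Your proposal is correct and matches the paper's own argument. The paper gives no separate proof; its justification is exactly the simplex constraint on the single positional vector $a^{(h)}\in\Delta_W$ that you spell out, and your explicit appeal to (G3), $o_i\notin R$, is the step the paper leaves implicit. The tie-handling you flag in the rank-1 remark is unnecessary: if $\delta_0<1/2$, any coordinate with mass $\ge 1-\delta_0>1/2$ is the strict maximizer of $\varphi\kappa$, hence $\arg\max\kappa$ or $\arg\min\kappa$ according to the sign of $\varphi$, so the ``at most two sharp patterns'' count is exact.
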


\begin{lemma}[Restoring drift; in-model theorem]
\label{lem:restoring}
Let head $h$ carry skill $i$ (G4). On skill-$i$ positions, moving attention mass $\eps$ from $o_i$ to any other slot obeys
$dL_i/d\eps \ge (1 - \bar p_c)\,(3\gamma/4 - C_1\delta_0)$ with $C_1 := 16\|B\|_\infty^2$, under $4\delta_0\|B\|_\infty \le \tfrac14$. Since softmax confidence is strictly below 1, $(1-\bar p_c) \ge \kappa := \tfrac1V e^{-(2\|B\|_\infty + \|\zeta\|_\infty)} > 0$ (weight decay bounds $\|B\|$, making $\kappa$ a bona fide constant). Write $\lambda' := \kappa\,(3\gamma/4 - C_1 \delta_0)$.
\end{lemma}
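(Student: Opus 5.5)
The plan is a first-order perturbation of the skill-$i$ cross-entropy in the direction $a^{(h)} \mapsto a^{(h)} + \eps(e_k - e_{o_i})$ for an arbitrary slot $k \ne o_i$. The derivative then becomes a product of a softmax factor carrying $(1-\bar p_c)$ and a value-channel margin term. On a skill-$i$ position the gated readout is $z = B u_h + F_2(u) + \zeta$, and the correct label is $x_c := x_{t-o_i}$. The head output is $u_h = \sum_j a_j e(x_{t-j})$, so the perturbation moves $u_h$ by $\eps\,(e(x_{t-k}) - e(x_c))$. By the softmax identity, $\partial_{z_y} L = p_y - \mathbf{1}_{y = x_c}$, and therefore
\[
\frac{dL_i}{d\eps} = \mathbb{E}\Big[\sum_y (p_y - \mathbf{1}_{y=x_c})\,\big(B_{y,x_{t-k}} - B_{y,x_c}\big)\Big] + (\text{$F_2$ and cross-head terms}).
\]
I would first isolate the main term. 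This is $(1-p_c)\big(B_{x_c x_c} - B_{x_c x'}\big)$ minus $\sum_{y\ne x_c} p_y (B_{y x_c} - B_{y x'})$, where $x' := x_{t-k}$. Writing $\sum_{y \ne x_c} p_y = 1 - p_c$ lets the whole expression factor as $(1-p_c)$ times a weighted difference of $B$-entries.

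Second, I would bound that weighted difference below by $3\gamma/4$. Averaging over $x'$ (i.i.d.\ uniform and independent of $x_c$ by G3 and the data model) turns $B_{y x'}$ into the row mean $\bar B_y$. The carrying margin (G4) gives $B_{x_c x_c} - \bar B_{x_c} \ge \gamma$ up to the $1/V$ self-term. The competitor rows contribute $\bar B_y - B_{y x_c}$. Here I would use row-mean balance (G5), $|\bar B_y - \bar B_{x_c}| \le \gamma/4$, combined with the margin $B_{y x_c} \le B_{x_c x_c} - \gamma$, to keep the net loss at $\gamma/4$. The result is $(1-p_c)\cdot 3\gamma/4$ in the idealized configuration $a_{o_i} = 1$.

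Third, I would account for $\delta_0$. Since $a_{o_i} \ge 1-\delta_0$, the output $u_h$ differs from $e(x_c)$ by at most $2\delta_0$ in $\ell_1$. Both the logits $z$ and the probabilities $p$ therefore shift by $O(\delta_0 \|B\|_\infty)$. Linearising the softmax Jacobian gives a correction of size $\le 16\|B\|_\infty^2 \delta_0$, again multiplied by $(1-\bar p_c)$ because every mismatch term is weighted by off-target probability mass. The side condition $4\delta_0\|B\|_\infty \le 1/4$ keeps this linearisation valid, with the exponential's derivative bounded by a factor of $2$. The $F_2$ and leakage pieces are not charged here; they are handled in C0 through $\beta_2$ and $\chi$.

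Finally, I would prove the lower bound on $1-\bar p_c$. The logits are bounded by $\|B\|_\infty + \|\zeta\|_\infty$ in magnitude, since $u_h$ is a convex combination of one-hots. Hence every $p_y \ge \tfrac1V e^{-(2\|B\|_\infty + \|\zeta\|_\infty)} = \kappa$, and $1-\bar p_c \ge \sum_{y\ne c} p_y \ge \kappa$. This gives $\lambda' = \kappa(3\gamma/4 - C_1\delta_0)$.

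I expect the main obstacle to be the second step. Getting the constant $3\gamma/4$ requires the competitor-row term to be controlled by G5 rather than simply dropped, and the sign bookkeeping under the $p_y$-weighting must be handled carefully. There are two fallbacks if the clean factorization fails. One is to average $y$ against $p$ explicitly and use the worst case $p$ concentrated on a single competitor. The other is to accept a constant $\gamma/2$ and adjust C4.
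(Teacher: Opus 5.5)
Your outline is the paper's proof. It uses the identity $\sum_y (p_y - \mathbf{1}_{y=x_c})\,v_y = \sum_{y \ne x_c} p_y\,(v_y - v_{x_c})$ with $v_y = B_{yx'} - B_{yx_c}$, then a diagonal block from G4, a row-mean block from G5, a $\delta_0$ correction for the dependence of $p$ on $x'$, and a bounded-logit floor for $1-\bar p_c$. The step that fails as written is the first claim of your second step. G4 is a \emph{column} margin: $B_{xx} - B_{yx} \ge \gamma$ compares different output logits for the same input token. It says nothing along row $x_c$, so $B_{x_cx_c} - \bar B_{x_c} \ge \gamma$ (even up to a $1/V$ term) does not follow. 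To see this, add a constant $s$ to column $x_c$ of $B$. This shifts every logit equally, so no prediction changes, and it preserves both G4 and G5. Yet it moves $B_{x_cx_c} - \bar B_{x_c}$ by $s(1-1/V)$ and your competitor term $\bar B_y - B_{yx_c}$ by $-s(1-1/V)$. Neither piece is bounded on its own, so ``first term at least $\gamma$, competitor rows lose at most $\gamma/4$'' cannot be proved. This is also the source of the sign-bookkeeping worry you flagged.

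The repair is to pair the entries the other way before averaging, as the paper does. For $y \ne x_c$, write $v_y - v_{x_c} = (B_{x_cx_c} - B_{yx_c}) + (B_{yx'} - B_{x_cx'})$. The first bracket is $\ge \gamma$ pointwise by G4. The second averages over $x'$ to $\bar B_y - \bar B_{x_c} \ge -\gamma/4$ by G5, up to the covariance of $p_y$ with $x'$; that covariance is your step-3 correction and carries the factor $p_y$. The weights $p_y \ge 0$ sum to $1-p_c$, so this gives $(1-p_c)\,\tfrac{3}{4}\gamma$ directly, with no self-term, and neither fallback is needed. In fact, your combination of G5 with the margin $B_{yx_c} \le B_{x_cx_c} - \gamma$ does give $\bar B_y - B_{yx_c} \ge \tfrac{3}{4}\gamma - (B_{x_cx_c} - \bar B_{x_c})$, and adding the first term yields the right total. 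Only the separate claim about the first term is wrong. The rest matches the paper. One small nit: $|z_y| \le \|B\|_\infty + \|\zeta\|_\infty$ gives $p_y \ge \tfrac{1}{V} e^{-2(\|B\|_\infty + \|\zeta\|_\infty)}$. The stated $\kappa$ instead needs the logit \emph{spread} bounded by $2\|B\|_\infty + \|\zeta\|_\infty$, which is immaterial downstream.
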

\begin{proof}
With correct token $c^*$ and perturbation $dz_y = d\eps\,(B_{y,X'} - B_{y,c^*})$, the identity $\sum_y (p_y - \mathbf{1}_{y=c^*}) v_y = \sum_{y \ne c^*} p_y (v_y - v_{c^*})$ places the \emph{entire} first variation on error-probability weight. Three blocks: the diagonal block contributes $\ge \gamma \sum_{y\ne c^*} \mathbb{E}[p_y] = \gamma(1-\bar p_c)$ (G4); the row-mean block is bounded by $(\gamma/4)(1-\bar p_c)$ (G5 --- note it, too, carries the $(1-\bar p_c)$ weight, which is what forbids a sign flip at high confidence); the correlation error (the dependence of $p$ on $X'$ through the $a_{j'} \le \delta_0$ channel) is bounded along the logit-interpolation path by $C_1 \delta_0 (1 - \bar p_c)$. Summing gives the claim: the restoring force weakens with confidence but never reverses.
\end{proof}

\subsection{Score-difference dynamics and the force ledger}

\begin{lemma}[Exact score-gap dynamics]
\label{lem:scorediff}
(a) $\partial\,\mathbb{E}[L]/\partial s_j = a_j(\ell_j - \langle \ell\rangle_a)$ --- the softmax chain identity, with no approximation. (b) Score-gap updates are exactly linear in the stochastic gradient, so
$\mathbb{E}[\Delta \xi_j \mid \mathcal{F}] = -\eta\big[a_j(\ell_j - \langle\ell\rangle_a) - a_{o_i}(\ell_{o_i} - \langle\ell\rangle_a)\big]$, with \emph{no second-order remainder}.
\end{lemma}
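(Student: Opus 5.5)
Part (a) is a direct softmax computation, and part (b) follows from the fact that score gaps are differences of scores. For (a), I write $\mathbb{E}[L]$ as a function of the attention vector $a = \mathrm{softmax}(s)$ and apply the chain rule. With the notation of the model, $\ell_k = \partial\,\mathbb{E}[L]/\partial a_k$, so
\[
\frac{\partial\,\mathbb{E}[L]}{\partial s_j} = \sum_k \ell_k \frac{\partial a_k}{\partial s_j}.
\]
The softmax Jacobian is $\partial a_k/\partial s_j = a_k(\mathbf{1}_{k=j} - a_j)$. Substituting gives $a_j \ell_j - a_j \sum_k a_k \ell_k = a_j(\ell_j - \langle \ell\rangle_a)$. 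Two points need checking here. First, the positional-attention assumption (G2) makes $s^{(h)}$ a single global vector per head, so this is an identity of ordinary finite-dimensional calculus. Second, differentiation commutes with the expectation: the loss is smooth in $s$ with bounded derivatives on the compact domain $\mathcal{U}$, by (G5), so dominated convergence applies. No approximation enters at this step.

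For (b), the key observation is that $\xi_j = s_j - s_{o_i}$ is a \emph{linear} function of the scores. Under SGD, a single step updates $s \mapsto s - \eta\, \hat g$, where $\hat g$ is the (clipped) minibatch gradient with respect to $s$. Therefore $\Delta\xi_j = -\eta(\hat g_j - \hat g_{o_i})$ holds exactly, with no Taylor expansion. The contrast is with $a_j$ itself, which is a nonlinear function of $s$ and would produce a second-order remainder. Taking the conditional expectation given $\mathcal{F}$ and using unbiased minibatch sampling, $\mathbb{E}[\hat g \mid \mathcal{F}] = \nabla_s \mathbb{E}[L]$. Applying (a) at coordinates $j$ and $o_i$ then yields the displayed formula.

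The one genuine obstacle is gradient clipping at norm $G$. Clipping is a nonlinear map of the stochastic gradient, so $\mathbb{E}[\mathrm{clip}(\hat g)]$ need not equal $\nabla\mathbb{E}[L]$. My plan is to state (b) for the unclipped update, or equivalently on the event that the norm stays below $G$, where linearity is exact. I would then record the clipping bias as a separate term. That term is absorbed later, where only the bounded-increment property of the updates is used ($|\Delta\xi| \le 2\eta G$, together with C2). With that in place, the ``no second-order remainder'' claim refers precisely to the linearity of $\xi$ in $s$. This is what allows the force ledger of the next lemma to be written as an exact drift inequality in the score-gap coordinate, rather than an approximate drift in attention mass.
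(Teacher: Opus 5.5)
Your argument is correct and matches the paper's own proof: the softmax Jacobian $\partial a_k/\partial s_j = a_k(\delta_{kj}-a_j)$ gives (a), and (b) follows from the exact identity $\Delta\xi_j = -\eta(\hat g_j - \hat g_{o_i})$ followed by conditional expectation under unbiased minibatch sampling. Your clipping caveat is a fair point that the paper's two-line proof passes over, but close it by requiring clipping to be inactive almost surely, i.e.\ by taking $G$ to be at least a uniform bound on the stochastic gradient norm: restricting to the event $\|\hat g\|\le G$ does not preserve the conditional-mean identity, and any nonzero clipping bias would also have to be carried into the drift inequality of Lemma~\ref{lem:drift}, not only into the increment bound $|\Delta\xi_j|\le 2\eta G$.
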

\begin{proof}
$\partial a_k/\partial s_j = a_k(\delta_{kj} - a_j)$, so $\partial L/\partial s_j = \sum_k \ell_k a_k (\delta_{kj} - a_j) = a_j \ell_j - a_j \langle\ell\rangle_a$. $\Delta\xi_j = \Delta s_j - \Delta s_{o_i} = -\eta(\hat g_j - \hat g_{o_i})$; take conditional expectations.
\end{proof}

\begin{lemma}[Net drift; in-model theorem]
\label{lem:drift}
Under C0 and C4, inside the basin ($a_j \le \delta_0$ for all $j \in R$), for a firmly pinned head:
$\mathbb{E}[\Delta\xi_j \mid \mathcal{F}] \le -\eta\, a_{o_i}(1 - a_{o_i})\, \mu_i$ exactly; on the strip $a_j \in [\delta_0/e^2, \delta_0]$, $\mathbb{E}[\Delta\xi_j|\mathcal F] \le -\eta\,\mu_{\rm strip}$ with $\mu_{\rm strip} := (1-\delta_0)(\delta_0/e^2)\mu_i$, and $|\Delta\xi_j| \le 2\eta G$ (C2).
\end{lemma}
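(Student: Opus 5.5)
The plan is to start from the exact score-gap identity of Lemma~\ref{lem:scorediff}(b) and to bound each term by the force ledger. The identity gives
\[
\mathbb{E}[\Delta\xi_j \mid \mathcal F] = -\eta\big[a_j(\ell_j - \langle\ell\rangle_a) - a_{o_i}(\ell_{o_i} - \langle\ell\rangle_a)\big],
\]
so it suffices to show that the bracket is at least $a_{o_i}(1-a_{o_i})\mu_i$. Since $\ell = \sum_{\text{streams}} \ell^{(\cdot)}$ decomposes additively over the data stream (skill $i$ with share $\pi_i$, the target with share $c$, other skills $i'$, and the substrate, which contributes nothing by (G6)), I will book-keep the bracket stream by stream. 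I will also rewrite it via $\langle\ell\rangle_a = a_{o_i}\ell_{o_i} + \sum_{k\ne o_i} a_k\ell_k$. This puts the $o_i$ contribution in the form $a_{o_i}(1-a_{o_i})(\ell_k-\ell_{o_i})$-type differences, which is where the factor $a_{o_i}(1-a_{o_i})$ comes from.

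\emph{Skill-$i$ stream.} Lemma~\ref{lem:restoring} gives $\ell_k - \ell_{o_i} \ge \pi_i\lambda'$ for every $k \ne o_i$, in directional-derivative form. Substituting this into the rewritten bracket yields a contribution $\ge a_{o_i}(1-a_{o_i})\pi_i\lambda'$ from the $o_i$ term. The leftover $a_j(\ell_j - \langle\ell\rangle_a)$ piece contains cross terms of size $a_j \cdot \max_k|\ell_k-\ell_{o_i}| \le \delta_0\Lambda$ inside the basin. I will absorb these using C4, $\delta_0\Lambda \le \mu_i$. This needs care: C4 bounds the cross term by $\mu_i$, not by a fraction of it. I therefore expect to use the slack coming from $\pi_i\lambda'$ exceeding the other deductions, or to redefine the split so that the $a_j$-weighted term is dominated with the same prefactor. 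This is the step I would check most carefully.

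\emph{Target stream.} By Lemma~\ref{lem:decomp} the target gradient splits into the signal part $S$, bounded by $cJ$ pre-formation (Lemma~\ref{lem:weakcoupling}; the basin condition $a_j\le\delta_0$ on slots of $R$ guarantees pre-formation), and the uniformizing potential $\nabla\Phi$. The $\xi$-differential of the latter is bounded by $c\beta_3$ through (G5). Together they cost at most $c(J+\beta_3)$ per unit of the prefactor.

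\emph{Other skills $i'$.} (G7) bounds their differential pull by $\chi$, and the higher-order readout part $F_2$ contributes at most $\beta_2$.

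Summing the four streams gives exactly $\mu_i = \pi_i\lambda' - c(J+\beta_3) - \chi - \beta_2$, which is positive by C0. The main obstacle is making every adverse term carry the same prefactor $a_{o_i}(1-a_{o_i})$. The bounds in (G7) and (G5) are stated for score-gap pulls, so I will interpret them as already normalized per unit of that factor. I would verify that interpretation against Lemma~\ref{lem:scorediff}(a), which makes every drift proportional to the masses involved.

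The strip bound then follows by evaluating the prefactor. On $a_j\in[\delta_0/e^2,\delta_0]$ we have $a_{o_i}\ge 1-\delta_0$ by carrying. The factor $1-a_{o_i}\ge a_j\ge \delta_0/e^2$, because the mass outside $o_i$ includes $a_j$. Together these give $a_{o_i}(1-a_{o_i}) \ge (1-\delta_0)\delta_0/e^2$, hence $\mu_{\rm strip}$. Finally, $|\Delta\xi_j| = \eta|\hat g_j - \hat g_{o_i}| \le 2\eta G$ by clipping in norm at $G$, and C2 makes this at most $1$. This is the bounded-increment input needed for Hajek's theorem downstream.
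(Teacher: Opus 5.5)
Your setup matches the paper: the identity of Lemma~\ref{lem:scorediff}(b), the individual ledger contributions, the strip evaluation $a_{o_i}(1-a_{o_i})\ge(1-\delta_0)\delta_0/e^2$, and the increment bound from clipping. The step you flagged, however, is a genuine gap, and neither of your proposed fixes closes it.

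There is no slack to borrow. $\mu_i$ is \emph{defined} as $\pi_i\lambda'$ minus every deduction, so absorbing anything further would leave a coefficient strictly below $\mu_i$, contradicting the stated bound. Nor can an adverse term bounded only by $a_j\Lambda\le\delta_0\Lambda\le\mu_i$ be dominated by the main term, which in the basin is at most $(1-a_{o_i})\mu_i\le\delta_0\mu_i$. Even using $a_j\le 1-a_{o_i}$, absorption would need $\Lambda\le a_{o_i}\mu_i$, which C4 does not give. Treating $a_j(\ell_j-\langle\ell\rangle_a)$ as an error to be absorbed is the wrong reading of that term.

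The missing idea is that this term is \emph{favorable}, and C4 is exactly the condition that makes it so. First combine the ledger at the level of pairwise gaps: $\ell_k-\ell_{o_i}\ge\mu_i$ for every $k\ne o_i$. This also removes your normalization worry about (G5) and (G7), since they bound contributions to these gaps. The prefactor then appears on its own from $\langle\ell\rangle_a-\ell_{o_i}=\sum_{k\ne o_i}a_k(\ell_k-\ell_{o_i})\ge(1-a_{o_i})\mu_i$. The gap inequality holds in particular for $k=j$, because $j\in R$ and $o_i\notin R$ by (G3). Hence
\[
\ell_j-\langle\ell\rangle_a=(\ell_j-\ell_{o_i})-\sum_{k\ne o_i}a_k(\ell_k-\ell_{o_i})\;\ge\;\mu_i-(1-a_{o_i})\Lambda\;\ge\;\mu_i-\delta_0\Lambda\;\ge\;0,
\]
using carrying ($1-a_{o_i}\le\delta_0$) and C4. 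So $a_j(\ell_j-\langle\ell\rangle_a)\ge 0$ can simply be discarded, and the bracket is $\ge a_{o_i}(1-a_{o_i})\mu_i$ with no loss.

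Note that the small factor that makes C4 apply is $1-a_{o_i}\le\delta_0$ from carrying, not $a_j\le\delta_0$. The $a_j$ is common to both pieces of the first term and factors out. The positive piece $a_j(\ell_j-\ell_{o_i})\ge a_j\mu_i$, which your bookkeeping set aside, is what dominates the cross term.
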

\begin{proof}
The pairwise gaps $\ell_{j'} - \ell_{o_i}$ are book-kept by position class, with \emph{no symmetry-cancellation argument anywhere}: skill-$i$ positions give $\ge \lambda'$ for every $j' \ne o_i$ at share $\pi_i$ (Lemma~\ref{lem:restoring}); other skills contribute $|{\cdot}| \le \chi$ (G7); substrate contributes zero (G6); target positions contribute $\le c(J + \beta_3)$ (Lemmas~\ref{lem:weakcoupling}--\ref{lem:decomp}); the higher-order readout $\le \beta_2$ (G5). Hence $\langle\ell\rangle_a - \ell_{o_i} \ge (1 - a_{o_i})\mu_i$, making the second term of Lemma~\ref{lem:scorediff}(b) at most $-\eta a_{o_i}(1-a_{o_i})\mu_i$; for the first term, $\ell_j - \langle\ell\rangle_a \ge \mu_i - (1-a_{o_i})\Lambda \ge \mu_i - \delta_0\Lambda \ge 0$ by C4, so it is favorable and can be discarded.
\end{proof}

\subsection{Confinement, occupation, and the theorem}

\begin{lemma}[Exponential confinement; cited theorem, self-contained proof]
\label{lem:confinement}
Under C0, C2, C4, with $\theta := \min\!\big(\mu_{\rm strip}/(8\eta G^2),\, 1/(2\eta G)\big)$:
$P\big(\exists\, t \le T: a_j(t) \ge \delta_0\big) \le T\, e^{-\theta(2 - 2\eta G)}$, so the expected escape time is $\ge e^{\theta(2-2\eta G)}/2$.
\end{lemma}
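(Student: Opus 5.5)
We will prove this with Hajek's drift theorem, in the form of an exponential supermartingale built on the score-gap coordinate $\xi_j = s^{(h)}_j - s^{(h)}_{o_i}$ of a firmly pinned head. Since $a_j/a_{o_i} = e^{\xi_j}$, the event $a_j \ge \delta_0$ forces $\xi_j$ above a level $\xi_{\rm top} \ge \ln\delta_0$. The strip $a_j \in [\delta_0/e^2, \delta_0]$ corresponds, up to the bounded factor $a_{o_i} \in [1-\delta_0, 1]$, to a band of width about $2$ in $\xi_j$ just below that level. This width is the origin of the ``$2$'' in the exponent $\theta(2 - 2\eta G)$. An escape must cross the entire band, and it can overshoot the starting edge by at most one step of size $2\eta G$ (C2 and Lemma~\ref{lem:drift}). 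That overshoot accounts for the $-2\eta G$ correction.

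The key step is to show that, whenever $\xi_j$ lies in the strip, $Z_t := e^{\theta \xi_j(t)}$ satisfies $\mathbb{E}[Z_{t+1} \mid \mathcal F_t] \le Z_t$. Write $\Delta = \Delta\xi_j$. Lemma~\ref{lem:drift} gives $\mathbb{E}[\Delta\mid\mathcal F] \le -\eta\mu_{\rm strip}$ and $|\Delta| \le 2\eta G$. On the range $|x| \le 1$ we use $e^{x} \le 1 + x + x^2$. The condition $\theta \le 1/(2\eta G)$ guarantees $|\theta\Delta| \le 1$, so the bound applies and yields
\[
\mathbb{E}[e^{\theta\Delta}\mid\mathcal F] \;\le\; 1 - \theta\eta\mu_{\rm strip} + \theta^2 (2\eta G)^2 .
\]
This is at most $1$ exactly when $\theta \le \mu_{\rm strip}/(4\eta G^2)$. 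The stated $\theta = \min(\mu_{\rm strip}/(8\eta G^2), 1/(2\eta G))$ meets this with a factor-two margin, which we keep in reserve for the $a_{o_i}$ slack in the $\xi \leftrightarrow a$ conversion.

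With the supermartingale in hand, we handle one excursion at a time. An excursion starts when the process enters the strip from below. It therefore begins within $2\eta G$ of the lower edge $\xi_{\rm bot}$, and it must reach $\xi_{\rm top}$ before returning below $\xi_{\rm bot}$. We stop $Z$ at the first exit from the strip and apply optional stopping. Because $Z$ is bounded on the strip, this is legitimate. It bounds the probability that the excursion exits at the top by
\[
e^{\theta(\xi_{\rm start} - \xi_{\rm top})} \le e^{-\theta(2 - 2\eta G)} .
\]
Each time step starts at most one excursion. A union bound over $t \le T$ then gives $P(\exists t \le T: a_j(t) \ge \delta_0) \le T e^{-\theta(2-2\eta G)}$. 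For the expected escape time, set $T_0 = e^{\theta(2-2\eta G)}/2$. The bound makes the escape probability by time $T_0$ at most $1/2$, and Markov's inequality in reverse gives $\mathbb{E}[\tau] \ge T_0 \cdot \tfrac12$. We will match the stated constant either by choosing $T_0$ slightly differently or by absorbing the factor into the excursion count, which we are not yet sure how to do cleanly.

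We expect the main obstacle to be the conversion between $\xi_j$ and $a_j$. The strip is defined in $a_j$, but Lemma~\ref{lem:drift}'s drift is stated per coordinate $\xi_j$, and $a_{o_i}$ itself moves as other coordinates move. Our first attempt will handle this by proving confinement of all non-carried coordinates jointly. We will note that C4 and Lemma~\ref{lem:drift} keep $a_{o_i} \ge 1 - \delta_0$ throughout the basin, so $\ln a_j = \xi_j + \ln a_{o_i}$ differs from $\xi_j$ by at most $|\ln(1-\delta_0)|$. We will then shrink the band slightly and absorb this loss into the factor-two margin in $\theta$. A union over the $W$ slots and over heads is deferred to the theorem, where the $HK$ factor appears.
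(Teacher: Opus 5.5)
Your route is the paper's: the exponential supermartingale $e^{\theta\xi_j}$ on the strip, $e^x\le 1+x+x^2$ on $|x|\le 1$ (licensed by $\theta\le 1/(2\eta G)$), a $\xi$-width of $2$ minus an entry overshoot of $2\eta G$, optional stopping per crossing attempt, and a union over at most $T$ attempts. The genuine gap is the expected-time claim. Write $p := e^{-\theta(2-2\eta G)}$. A single-threshold bound $\mathbb{E}[\tau]\ge T_0\,P(\tau>T_0)\ge T_0(1-T_0p)$ is maximized at $T_0=1/(2p)$, with value $1/(4p)$. So no choice of $T_0$ reaches the stated $1/(2p)$. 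Use every tail bound at once instead: $\mathbb{E}[\tau]=\sum_{t\ge 0}P(\tau>t)\ge\sum_{t=0}^{N}(1-tp)$ with $N:=\lfloor 1/p\rfloor$. This sum equals $(N+1)(1-pN/2)\ge (N+1)/2>1/(2p)$. The paper states this consequence without derivation; this is the missing line.

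Your handling of the $\xi\leftrightarrow a$ conversion does not close as stated, and it is unnecessary. Spending the factor-two margin means running the argument with $2\theta$. That needs $2\theta\cdot 2\eta G\le 1$, which is not visible from the $\min$ defining $\theta$; it holds only because $\mathbb{E}[\Delta\xi_j\mid\mathcal F]\ge -2\eta G$ forces $\mu_{\rm strip}\le 2G$. Even then, $2\theta\,(2-|\ln(1-\delta_0)|-2\eta G)\ge\theta(2-2\eta G)$ requires $|\ln(1-\delta_0)|+\eta G\le 1$, a condition not in the statement. In fact there is no loss to absorb. Keep the strip in the $a$-variable, $a_j\in[\delta_0/e^2,\delta_0]$, where Lemma~\ref{lem:drift} is stated. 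At the step before an entry, $a_j<\delta_0/e^2$ and $a_{o_i}\ge 1-\delta_0$, so $\xi_j<\ln\delta_0-2-\ln(1-\delta_0)$. At a top exit, $a_{o_i}\le 1-a_j\le 1-\delta_0$, so $\xi_j\ge\ln\delta_0-\ln(1-\delta_0)$. The $a_{o_i}$ correction shifts both ends by the same amount, so the $\xi$-gap is at least $2-2\eta G$, and the stated $\theta$ yields exactly the stated exponent. Finally, C4 and Lemma~\ref{lem:drift} do not \emph{maintain} $a_{o_i}\ge 1-\delta_0$; they presuppose it. The paper takes carrying as the standing hypothesis on the strip (it is built into $\mu_{\rm strip}$), so joint confinement of all non-carried coordinates is not needed for this lemma. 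If you want it anyway, it needs its own drift argument.
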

\begin{proof}
The $\min$ makes $\theta \cdot 2\eta G \le 1$ always; at the boundary $\theta\, |\Delta\xi| = 1$ the inequality $e^x \le 1 + x + x^2$ still holds ($e \le 3$), so on the strip $\mathbb{E}[e^{\theta \Delta\xi}|\mathcal F] \le 1 - \theta\eta\mu_{\rm strip} + \theta^2 (2\eta G)^2 < 1$ and $e^{\theta\xi}$ is a supermartingale. Escape must traverse the strip, of $\xi$-width $\ln(\delta_0 / (\delta_0/e^2)) = 2$; entries overshoot the bottom by at most $2\eta G$ per step; optional stopping bounds each crossing attempt by $e^{-\theta(2 - 2\eta G)}$, and at most one attempt begins per step, giving the union factor $T$. This is Hajek's drift theorem \citep{hajek1982hitting}; \emph{no ergodicity or mixing is used} --- only drift and bounded increments. The bound is conservative: it certifies difficulty of reaching $\delta_0 < \gamma_{\rm need}$, which reaching $\gamma_{\rm need}$ requires first.
\end{proof}

\begin{lemma}[Occupation is monotone; the deadline is finite]
\label{lem:occupationmono}
(a) Under C3, on the window $[0,\, 2T_{\rm rec}]$, the probability that \emph{any} firmly pinned head escapes is $\le H \cdot 2T_{\rm rec}\, e^{-\theta(2-2\eta G)} \le \delta_{\rm tol}$: $N_{\rm avail}$ is non-increasing with probability $\ge 1 - \delta_{\rm tol}$.
(b) Under (G8) and C0: recruitment events are \emph{not} independent (skills compete for heads); the argument uses only ``while $N_{\rm avail} \ge K$, at least one new firm pinning occurs per window with probability $\ge p_0$,'' so $T_{\rm rec}$ is stochastically dominated by a sum of $H-K+1$ geometric variables: $t_{\rm dead}$ is a.s.\ finite with $\mathbb{E}[t_{\rm dead}] \le T_{\rm rec} + C$, inside the window of (a) by C3.
(c) \emph{(Constant-hazard clause: stated only to record that we tested and rejected it.)} If in addition each free head faced a constant hazard $\ge \rho$, one would get $\mathbb{E}[t_{\rm dead}] \le \rho^{-1}\ln(H/(K{-}1)) + C$ and exponential site decay. \textbf{This extra clause is inconsistent with the model's own traffic-share structure} ($\mu_i \propto \pi_i$: once the high-traffic skills have taken their heads, the remaining skills pull weakly, so the hazard must fall), \textbf{and it is falsified by measurement} (App.~\ref{app:occ-measured}): the hazard declines by $\approx$15$\times$ across the sweep. We therefore drop it; only (a) and (b) are used.
\end{lemma}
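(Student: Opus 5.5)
Part (a) is a union bound over heads of Lemma~\ref{lem:confinement}, and part (b) is a stochastic-domination argument that uses only the lower bound in (G8). For (a), I will apply Lemma~\ref{lem:confinement} to each firmly pinned head $h$ and each target slot $j \in R$, over horizon $T = 2T_{\rm rec}$. For a given head, any exit from pinned status has to pass through some $a_j(t) \ge \delta_0$. C0 and C4 keep the drift estimate of Lemma~\ref{lem:drift} valid throughout the basin, so the escape probability for that head is at most $2T_{\rm rec}\, e^{-\theta(2-2\eta G)}$. Summing over at most $H$ heads gives $H \cdot 2T_{\rm rec}\, e^{-\theta(2-2\eta G)}$, and C3 says exactly that this is $\le \delta_{\rm tol}$.

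Outside that failure event, a head that becomes firmly pinned stays pinned for the rest of the window. Hence $N_{\rm avail}$ can only decrease, with probability at least $1-\delta_{\rm tol}$. One point needs care: a head pinned at a random time $\tau$ must also remain confined. I will handle this by restarting the supermartingale of Lemma~\ref{lem:confinement} at the stopping time $\tau$ and using the strong Markov property. The per-step union factor in that lemma already absorbs the restart, because each pinning is a fresh entry into the basin.

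For (b), I define windows of fixed length and the success event ``at least one new firm pinning occurs in this window.'' By (G8), each success has conditional probability at least $p_0$ given the past, as long as $N_{\rm avail} \ge K$. Since the successes are not independent, I will avoid any product bound. Instead, I compare the successes to a Bernoulli$(p_0)$ sequence through a standard coupling: conditional lower bounds imply stochastic domination of the count of successes. On the good event from (a), the count of available heads starts at $\le H$ and never goes back up. It therefore drops below $K$ after at most $H-K+1$ successes. This bounds $t_{\rm dead}$ by a sum of $H-K+1$ geometric$(p_0)$ waiting times, scaled by the window length and by $\rho_{\min}$, which gives mean $T_{\rm rec}$. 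Almost-sure finiteness follows from the geometric tail. On the bad event, I use a crude bound and absorb it into the constant $C$. Markov's inequality together with C3 shows the deadline falls inside $[0,2T_{\rm rec}]$ with high probability, so (a) and (b) are consistent.

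Part (c) needs no proof, only a remark. Under a constant hazard, the available heads would evolve as a pure-death chain. The expected times $\sum_{n=K}^{H} 1/(n\rho)$ would then give $\rho^{-1}\ln(H/(K-1))$. Under C0, however, the pull satisfies $\mu_i \propto \pi_i$, so the hazard decreases as high-traffic skills are absorbed, and the clause is not used. The step I expect to be the main obstacle is the domination argument in (b). Conditioning on the past history of pinnings must not destroy the lower bound $p_0$. To guarantee this, I will state (G8) as a lower bound on the conditional probability given $\mathcal F_t$, and apply it only on the event where $N_{\rm avail} \ge K$.
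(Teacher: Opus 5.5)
Your decomposition matches the paper's, which gives only the sketch contained in the statement itself. Part (a) is Lemma~\ref{lem:confinement} on the horizon $2T_{\rm rec}$, union-bounded over heads and closed by C3. Part (b) dominates the recruitment time by $H-K+1$ geometric waiting times using only the conditional lower bound $p_0$; your coupling formulation of this is right. Part (c) is recorded, not used.

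The step that fails is the one joining (a) to (b). You claim that Markov's inequality and C3 place $t_{\rm dead}$ inside $[0,2T_{\rm rec}]$ with high probability. With $\mathbb{E}[t_{\rm dead}]\approx T_{\rm rec}$, Markov gives only $P(t_{\rm dead}>2T_{\rm rec})\lesssim\tfrac12$. The dominating negative-binomial time itself exceeds twice its mean with probability about $e^{-2}$ when $H-K+1=1$. C3 cannot help, since it compares $T_{\rm rec}$ with the escape time and says nothing about the spread of $t_{\rm dead}$. The paper's sketch only places the \emph{expected} deadline in the window.

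Your count of $H-K+1$ successes needs $N_{\rm avail}$ monotone all the way to $t_{\rm dead}$. So the domination is unsupported on an event of constant probability. The repair is cheap. The union bound in (a) holds on any horizon $T$ with value proportional to $Te^{-\theta(2-2\eta G)}$. Run it on $[0,\Lambda T_{\rm rec}]$, with $\Lambda$ taken from a negative-binomial Chernoff tail so that the dominating time exceeds $\Lambda T_{\rm rec}$ with probability at most $\delta_{\rm tol}$. Then strengthen C3 by the factor $\Lambda/2$, which its exponentially large right side absorbs.

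Relatedly, you have not shown $t_{\rm dead}<\infty$ on the bad event, so ``a crude bound absorbed into $C$'' is not yet an argument. Restart the comparison block by block, using that (G8) is a lower bound conditional on any history. This gives almost-sure finiteness and a bad-event contribution to the mean of order $\Lambda\delta_{\rm tol}T_{\rm rec}$.

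Part (a) also has bookkeeping slips.

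\emph{The missing factor $K$.} You union over heads \emph{and} over slots $j\in R$, which costs a factor $K$. The bound is then $2HKT_{\rm rec}e^{-\theta(2-2\eta G)}$, and C3 as stated yields $K\delta_{\rm tol}$, not $\delta_{\rm tol}$. Absorb the factor into C3, as the $HKT$ union in Theorem~\ref{thm:occupation}(ii) does.

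\emph{Losing carrier status.} The step from Lemma~\ref{lem:confinement} to ``stays pinned'' is asserted rather than proved. Losing carrier status means $1-a_{o_i}=\sum_{j\ne o_i}a_j>\delta_0$. That can happen with every individual off-carrier mass below $\delta_0$, and with mass on distractor slots outside $R$. Your sentence that any exit must pass through some $a_j\ge\delta_0$ is therefore false as stated. The drift estimate of Lemma~\ref{lem:drift} is derived for every $j\ne o_i$, so confine all off-carrier score gaps. Use a threshold such as $\delta_0/(W-1)$, so that per-slot confinement forces $a_{o_i}\ge 1-\delta_0$; the price is a smaller $\theta$.

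\emph{The restart at a pinning time $\tau$.} This is not ``a fresh entry'' in the lemma's sense. The lemma's per-attempt bound $e^{-\theta(2-2\eta G)}$ covers excursions that enter the strip $a_j\in[\delta_0/e^2,\delta_0]$ from below. A freshly pinned head sits at $a_{o_i}\approx 1-\delta_0$ and may start with an off-carrier mass near the top of the strip. From there, optional stopping gives nothing small. Introduce a hysteresis: call a head pinned only once its off-carrier gaps are below the strip, and escaped only when one reaches its top. Define a ``success'' in (b) the same way; the block restarts above need exactly this.
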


\begin{proof}[Proof of Theorem~\ref{thm:occupation}]
(i) is Lemma~\ref{lem:occupationmono}. (ii): past $t_{\rm dead}$, formation requires some firmly pinned head to reach $\gamma_{\rm need} > \delta_0$ on some slot; Lemma~\ref{lem:confinement} plus a union over $H$ heads, $K$ slots, $T$ steps gives the bound. The share $c$ enters $\mu_i$ only through $-c(J + \beta_3)$: an $8\times$ increase in $c$ shifts $\mu_i$ linearly and the exponent barely moves; overturning C0 by data requires $c$ so large that $\pi_i \lambda' < c(J+\beta_3)$ --- destroying the background margin itself. (iii): re-initialized scores give near-uniform attention (mass $\approx 1/W < 1 - \delta_0$ on every offset), so the head carries nothing; $N_{\rm avail} \ge K$ is restored and the fresh-substrate rate applies; only attention \emph{scores} need resetting. (iv): pre-formation, $L_T = \ln V + \Delta_{\rm junk}$ with $\Delta_{\rm junk} := \Phi - \ln V \ge 0$ (Gibbs); the uniformizing force pushes it toward zero while leakage (G7) and the target coupling ($\beta_3$) hold it in a bounded neighborhood of zero --- direction and boundedness are proved, a convergence rate is not claimed; empirically the neighborhood is below experimental resolution, hence the flat $\ln V$ plateau. The background loss is a function of the pinning count $n(t)$, decreasing by one same-order step per pinning event; $t_{\rm dead}$ is the hitting time of $n$, and its step is distributionally of the same order as every neighboring step: no discontinuity, kink, or necessary precursor in any loss-derived scalar. Gradient-level second-order statistics and the attention-occupation meter $N_{\rm avail}$ are \emph{not} covered by this clause --- the latter is precisely the early-warning quantity of \S\ref{sec:plasticity}.
\end{proof}

\textbf{Corollary (the waiting tax).} If the target is introduced at $t_0$, the available-site count is non-increasing in $t_0$, so the expected formation time rises with $t_0$ and saturates near the deadline --- the measured $506 \to 1108 \to 1115$ profile. The saturation is itself informative: it is what a \emph{declining} recruitment hazard produces, and is inconsistent with the constant-hazard clause dropped above.

\subsection{Testing the theorem's own predictions: two confirmed, one falsified}
\label{app:occ-measured}

The theorem fixes falsifiable predictions; we ran them in the apparatus of App.~\ref{app:occ-model} ($H = 16$ heads $\times$ 2 layers $=$ 32 head-slots, background skill family of $M = 24$ copy skills so that $M > H$, target drive $c = 0.06$, 6 seeds per cell). Aging occupies heads at $c = 0$; the target is then switched on and formation timed. Committed heads are counted directly from attention peakedness on background cue positions.

\textbf{Falsified --- exponential occupation decay (and with it the $\log H$ deadline).} Available head-slots fall $32 \to 18 \to 8 \to 5$ at aging $S = 0/1000/4000/16000$. The implied per-head hazard is not constant but falls $5.8 \to 2.7 \to 0.39$ (per $10^4$ steps), a $\approx$15$\times$ decline; extrapolating the initial rate exponentially predicts $0.003$ free slots at $S = 16000$ against 5 measured, a $\sim$1500$\times$ miss. \emph{The failure is of an extra simplifying clause, not of the mechanism, and its direction is the one the theorem's own traffic-share structure demands}: high-traffic skills take their heads first, leaving low-traffic skills whose pull $\mu_i \propto \pi_i$ is weak and whose recruitment is correspondingly slow. Dropping the clause makes the model more coherent, not less. It also resolves an apparent tension with the real-model measurement: Pythia's depletion profile ($1 \to 0.46 \to 0.45$) saturates rather than decaying exponentially --- the same declining-hazard family, measured on the other substrate.

\textbf{Confirmed and sharpened --- the repair is specific to the score channel.} From a single aged substrate ($S = 40{,}000$, so identical damage across arms), re-initializing different weight groups gives: attention \emph{scores} (the QK slices) $t^* = 1050$, 6/6 formed --- \textbf{4.3$\times$ faster} than no reset ($t^* = 4500$, 5/6); MLP $3650$ ($0.81\times$); and value/output (the OV slices) $7900$, i.e.\ \textbf{1.76$\times$ \emph{slower} than no reset at all}. Occupation lives in the attention \emph{pattern}, exactly as Lemma~\ref{lem:exclusivity} requires: freeing the pattern restores site availability, while resetting the value channel destroys learned structure without freeing any site and is therefore worse than doing nothing. This refines the real-model repair of \S\ref{sec:plasticity} (which re-initialized attention weights as a block, 5/5 vs 0/5) into a testable prescription --- reset QK, keep OV.

\textbf{Also observed.} The tax curve is non-monotone at its start: brief aging \emph{helps} ($S=0$: 4/6 formed, median $2850$; $S=1000$: 6/6, median $1500$) before it taxes, because the background first builds offset-selective heads that seed the target --- the cross-nucleation effect reported for real circuits, reproduced here without being asked for. \textbf{Not tested.} The deadline's dependence on head count: at a single aging depth ($S=16{,}000$, head dimension held fixed at 16) formation times are non-monotone in $H$ ($13400 / 9250 / 14500$ for $H = 4/8/16$), so a per-$H$ deadline sweep would be required; with the $\log H$ claim withdrawn, nothing in the paper now rests on it.

\textbf{Site mediation, causally.} Aging changes more than the site count, so the aging sweep alone cannot attribute the slowdown to sites (available slots $18 \to 8.5 \to 5$ against $t^* = 1500 \to 2000 \to 14500$ is a log--log slope of $-1.68$, steeper than the $-1$ a pure site-product law would give). The reset arms supply what the sweep cannot: at \emph{fixed} aging, restoring only the attention scores --- the one intervention that restores availability without undoing any other consequence of aging --- recovers formation speed 4.3$\times$. That is the causal statement the site reading requires.

\textbf{What remains open.} Whether gating/junk-suppression (G6/G7) can be \emph{proved} to be the training equilibrium of the substrate rather than assumed --- the sharpest remaining question; content-dependent attention of rank $\ge 2$ (the rank-1 case is covered as supporting evidence); and the conservative constants ($H^K$, $4^K$), which tighten under effective-serving-head counts.

\FloatBarrier
\bibliographystyle{unsrtnat}
\bibliography{references}

\begin{thebibliography}{56}
\providecommand{\natexlab}[1]{#1}
\providecommand{\url}[1]{\texttt{#1}}
\expandafter\ifx\csname urlstyle\endcsname\relax
  \providecommand{\doi}[1]{doi: #1}\else
  \providecommand{\doi}{doi: \begingroup \urlstyle{rm}\Url}\fi

\bibitem[Watanabe(2009)]{watanabe2009algebraic}
Sumio Watanabe.
\newblock \emph{Algebraic Geometry and Statistical Learning Theory}.
\newblock Cambridge University Press, 2009.

\bibitem[Hoogland et~al.(2024)Hoogland, Wang, Farrugia-Roberts, Carroll, Wei,
  and Murfet]{hoogland2024developmental}
Jesse Hoogland, George Wang, Matthew Farrugia-Roberts, Liam Carroll, Susan Wei,
  and Daniel Murfet.
\newblock The developmental landscape of in-context learning.
\newblock \emph{arXiv preprint arXiv:2402.02364}, 2024.

\bibitem[Chen et~al.(2023)Chen, Lau, Mendel, Wei, and
  Murfet]{chen2023dynamical}
Zhongtian Chen, Edmund Lau, Jake Mendel, Susan Wei, and Daniel Murfet.
\newblock Dynamical versus bayesian phase transitions in a toy model of
  superposition.
\newblock \emph{arXiv preprint arXiv:2310.06301}, 2023.

\bibitem[Gu et~al.(2026)Gu, Lyu, Li, and Zhang]{gu2026mixing}
Xinran Gu, Kaifeng Lyu, Jiazheng Li, and Jingzhao Zhang.
\newblock Data mixing can induce phase transitions in knowledge acquisition.
\newblock \emph{arXiv preprint arXiv:2505.18091}, 2026.

\bibitem[Rubin et~al.(2024)Rubin, Seroussi, and Ringel]{rubin2024grokking}
Noa Rubin, Inbar Seroussi, and Zohar Ringel.
\newblock Grokking as a first order phase transition in two layer networks.
\newblock In \emph{International Conference on Learning Representations}, 2024.

\bibitem[Ersoy and Wiesner(2026)]{ersoy2026noise}
Ibrahim~Talha Ersoy and Karoline Wiesner.
\newblock Noise-driven escape from metastable phases explains grokking in deep
  neural networks.
\newblock \emph{arXiv preprint arXiv:2606.17120}, 2026.

\bibitem[Dohare et~al.(2024)Dohare, Hernandez-Garcia, Lan, Rahman, Mahmood, and
  Sutton]{dohare2024loss}
Shibhansh Dohare, J.~Fernando Hernandez-Garcia, Qingfeng Lan, Parash Rahman,
  A.~Rupam Mahmood, and Richard~S. Sutton.
\newblock Loss of plasticity in deep continual learning.
\newblock \emph{Nature}, 632:\penalty0 768--774, 2024.

\bibitem[Ash and Adams(2020)]{ash2020warm}
Jordan~T. Ash and Ryan~P. Adams.
\newblock On warm-starting neural network training.
\newblock In \emph{Advances in Neural Information Processing Systems}, 2020.

\bibitem[Lyle et~al.(2024)Lyle, Zheng, Nikishin, Pires, Pascanu, and
  Dabney]{lyle2024disentangling}
Clare Lyle, Zeyu Zheng, Evgenii Nikishin, Bernardo~Avila Pires, Razvan Pascanu,
  and Will Dabney.
\newblock Disentangling the causes of plasticity loss in neural networks.
\newblock In \emph{International Conference on Machine Learning}, 2024.

\bibitem[Xie et~al.(2023)Xie, Pham, Dong, et~al.]{xie2023doremi}
Sang~Michael Xie, Hieu Pham, Xuanyi Dong, et~al.
\newblock Doremi: Optimizing data mixtures speeds up language model
  pretraining.
\newblock In \emph{Advances in Neural Information Processing Systems}, 2023.

\bibitem[Liu et~al.(2024)Liu, Zheng, Muennighoff, et~al.]{liu2024regmix}
Qian Liu, Xiaosen Zheng, Niklas Muennighoff, et~al.
\newblock Regmix: Data mixture as regression for language model pre-training.
\newblock \emph{arXiv preprint arXiv:2407.01492}, 2024.

\bibitem[Ye et~al.(2024)Ye, Liu, Sun, Zhou, Zhan, and Qiu]{ye2024mixing}
Jiasheng Ye, Peiju Liu, Tianxiang Sun, Yunhua Zhou, Jun Zhan, and Xipeng Qiu.
\newblock Data mixing laws: Optimizing data mixtures by predicting language
  modeling performance.
\newblock \emph{arXiv preprint arXiv:2403.16952}, 2024.

\bibitem[Ganguli et~al.(2022)Ganguli, Hernandez, Lovitt,
  et~al.]{ganguli2022predictability}
Deep Ganguli, Danny Hernandez, Liane Lovitt, et~al.
\newblock Predictability and surprise in large generative models.
\newblock In \emph{ACM Conference on Fairness, Accountability, and
  Transparency}, 2022.

\bibitem[Biderman et~al.(2023)Biderman, Schoelkopf, Anthony,
  et~al.]{biderman2023pythia}
Stella Biderman, Hailey Schoelkopf, Quentin Anthony, et~al.
\newblock Pythia: A suite for analyzing large language models across training
  and scaling.
\newblock In \emph{International Conference on Machine Learning}, 2023.

\bibitem[Groeneveld et~al.(2024)Groeneveld, Beltagy, Walsh,
  et~al.]{groeneveld2024olmo}
Dirk Groeneveld, Iz~Beltagy, Pete Walsh, et~al.
\newblock Olmo: Accelerating the science of language models.
\newblock \emph{arXiv preprint arXiv:2402.00838}, 2024.

\bibitem[Li et~al.(2017)Li, Tai, and E]{li2017stochastic}
Qianxiao Li, Cheng Tai, and Weinan E.
\newblock Stochastic modified equations and adaptive stochastic gradient
  algorithms.
\newblock In \emph{International Conference on Machine Learning}, 2017.

\bibitem[Mandt et~al.(2017)Mandt, Hoffman, and Blei]{mandt2017stochastic}
Stephan Mandt, Matthew~D. Hoffman, and David~M. Blei.
\newblock Stochastic gradient descent as approximate bayesian inference.
\newblock \emph{Journal of Machine Learning Research}, 18\penalty0
  (134):\penalty0 1--35, 2017.

\bibitem[Kearns(1998)]{kearns1998efficient}
Michael Kearns.
\newblock Efficient noise-tolerant learning from statistical queries.
\newblock \emph{Journal of the ACM}, 45\penalty0 (6):\penalty0 983--1006, 1998.

\bibitem[Blum et~al.(1994)Blum, Furst, Jackson, Kearns, Mansour, and
  Rudich]{blum1994weakly}
Avrim Blum, Merrick Furst, Jeffrey Jackson, Michael Kearns, Yishay Mansour, and
  Steven Rudich.
\newblock Weakly learning dnf and characterizing statistical query learning
  using fourier analysis.
\newblock In \emph{Proceedings of the 26th Annual ACM Symposium on Theory of
  Computing (STOC)}, pages 253--262, 1994.

\bibitem[Barak et~al.(2022)Barak, Edelman, Goel, Kakade, Malach, and
  Zhang]{barak2022hidden}
Boaz Barak, Benjamin~L. Edelman, Surbhi Goel, Sham Kakade, Eran Malach, and
  Cyril Zhang.
\newblock Hidden progress in deep learning: Sgd learns parities near the
  computational limit.
\newblock In \emph{Advances in Neural Information Processing Systems}, 2022.

\bibitem[Abbe et~al.(2022)Abbe, Boix-Adser{\`a}, and
  Misiakiewicz]{abbe2022merged}
Emmanuel Abbe, Enric Boix-Adser{\`a}, and Theodor Misiakiewicz.
\newblock The merged-staircase property: a necessary and nearly sufficient
  condition for sgd learning of sparse functions on two-layer neural networks.
\newblock In \emph{Conference on Learning Theory (COLT)}, 2022.

\bibitem[Abbe et~al.(2023)Abbe, Boix-Adser{\`a}, and
  Misiakiewicz]{abbe2023leap}
Emmanuel Abbe, Enric Boix-Adser{\`a}, and Theodor Misiakiewicz.
\newblock Sgd learning on neural networks: leap complexity and saddle-to-saddle
  dynamics.
\newblock In \emph{Conference on Learning Theory (COLT)}, 2023.

\bibitem[Glasgow(2024)]{glasgow2024sgd}
Margalit Glasgow.
\newblock Sgd finds then tunes features in two-layer neural networks with
  near-optimal sample complexity: A case study in the xor problem.
\newblock In \emph{International Conference on Learning Representations
  (ICLR)}, 2024.

\bibitem[Dandi et~al.(2024)Dandi, Troiani, Arnaboldi, Pesce, Zdeborov{\'a}, and
  Krzakala]{dandi2024benefit}
Yatin Dandi, Emanuele Troiani, Luca Arnaboldi, Luca Pesce, Lenka Zdeborov{\'a},
  and Florent Krzakala.
\newblock The benefit of reusing batches for gradient descent in two-layer
  networks: Breaking the curse of information and leap exponents.
\newblock In \emph{International Conference on Machine Learning (ICML)}, 2024.

\bibitem[Dong(2026)]{lei2026circuits}
Lei Dong.
\newblock Circuits crystallize: A nucleation theory of whether, when, and how
  conjunction circuits form, compete, and regenerate.
\newblock Zenodo, doi:10.5281/zenodo.21253471, 2026.

\bibitem[Freidlin and Wentzell(2012)]{freidlin2012random}
Mark~I. Freidlin and Alexander~D. Wentzell.
\newblock \emph{Random Perturbations of Dynamical Systems}.
\newblock Springer, 3rd edition, 2012.

\bibitem[Dembo and Zeitouni(1998)]{dembo1998large}
Amir Dembo and Ofer Zeitouni.
\newblock \emph{Large Deviations Techniques and Applications}.
\newblock Springer, 2nd edition, 1998.

\bibitem[Kramers(1940)]{kramers1940brownian}
Hendrik~A. Kramers.
\newblock Brownian motion in a field of force and the diffusion model of
  chemical reactions.
\newblock \emph{Physica}, 7\penalty0 (4):\penalty0 284--304, 1940.

\bibitem[Ben~Arous et~al.(2021)Ben~Arous, Gheissari, and
  Jagannath]{benarous2021online}
G{\'e}rard Ben~Arous, Reza Gheissari, and Aukosh Jagannath.
\newblock Online stochastic gradient descent on non-convex losses from
  high-dimensional inference.
\newblock \emph{Journal of Machine Learning Research}, 22\penalty0
  (106):\penalty0 1--51, 2021.

\bibitem[Chaudhari and Soatto(2018)]{chaudhari2018stochastic}
Pratik Chaudhari and Stefano Soatto.
\newblock Stochastic gradient descent performs variational inference, converges
  to limit cycles for deep networks.
\newblock \emph{International Conference on Learning Representations}, 2018.

\bibitem[Ziyin et~al.(2026)Ziyin, Ren, Levine, and
  Chuang]{ziyin2026irreversibility}
Liu Ziyin, Yizhou Ren, Herbert Levine, and Isaac Chuang.
\newblock Thermodynamic irreversibility of training algorithms.
\newblock \emph{arXiv preprint arXiv:2605.21933}, 2026.

\bibitem[Olsson et~al.(2022)Olsson, Elhage, Nanda, Joseph,
  et~al.]{olsson2022context}
Catherine Olsson, Nelson Elhage, Neel Nanda, Nicholas Joseph, et~al.
\newblock In-context learning and induction heads.
\newblock \emph{Transformer Circuits Thread}, 2022.

\bibitem[Day(1983)]{day1983exponential}
Martin~V. Day.
\newblock On the exponential exit law in the small parameter exit problem.
\newblock \emph{Stochastics}, 8\penalty0 (4):\penalty0 297--323, 1983.

\bibitem[Bovier and den Hollander(2015)]{bovier2015metastability}
Anton Bovier and Frank den Hollander.
\newblock \emph{Metastability: A Potential-Theoretic Approach}.
\newblock Springer, 2015.

\bibitem[Kubo(1966)]{kubo1966fluctuation}
Rep Kubo.
\newblock The fluctuation-dissipation theorem.
\newblock \emph{Reports on Progress in Physics}, 29\penalty0 (1):\penalty0
  255--284, 1966.

\bibitem[Lavie et~al.(2026)Lavie, Fischer, Lekov, Van~Maele, Ringel, and
  Helias]{lavie2026phase}
Itamar Lavie, Kirsten Fischer, Stefan Lekov, Ward Van~Maele, Zohar Ringel, and
  Moritz Helias.
\newblock Phase transitions in attention: A bayesian theory of copy head
  emergence.
\newblock \emph{arXiv preprint arXiv:2606.12058}, 2026.

\bibitem[Baherwani et~al.(2026)Baherwani, Chen, Qiu, Wilson, and
  Izmailov]{baherwani2026random}
Vedang Baherwani, Yifei Chen, Shikai Qiu, Andrew~Gordon Wilson, and Pavel
  Izmailov.
\newblock Emergent capabilities arise randomly from learning sparse attention
  patterns.
\newblock \emph{arXiv preprint arXiv:2606.25010}, 2026.

\bibitem[Hajek(1982)]{hajek1982hitting}
Bruce Hajek.
\newblock Hitting-time and occupation-time bounds implied by drift analysis
  with applications.
\newblock \emph{Advances in Applied Probability}, 14\penalty0 (3):\penalty0
  502--525, 1982.

\bibitem[Sokar et~al.(2023)Sokar, Agarwal, Castro, and Evci]{sokar2023dormant}
Ghada Sokar, Rishabh Agarwal, Pablo~Samuel Castro, and Utku Evci.
\newblock The dormant neuron phenomenon in deep reinforcement learning.
\newblock In \emph{International Conference on Machine Learning}, 2023.

\bibitem[Lyle et~al.(2026)Lyle, Sokar, Pascanu, and
  Gy{\"o}rgy]{lyle2026grokking}
Clare Lyle, Ghada Sokar, Razvan Pascanu, and Andr{\'a}s Gy{\"o}rgy.
\newblock What can grokking teach us about learning under nonstationarity?
\newblock \emph{arXiv preprint arXiv:2507.20057}, 2026.

\bibitem[Lee et~al.(2024)Lee, Kang, Kim, and Lee]{lee2024grokfast}
Jaerin Lee, Bong~Gyun Kang, Kihoon Kim, and Kyoung~Mu Lee.
\newblock Grokfast: Accelerated grokking by amplifying slow gradients.
\newblock In \emph{arXiv preprint arXiv:2405.20233}, 2024.

\bibitem[Avrami(1939)]{avrami1939kinetics}
Melvin Avrami.
\newblock Kinetics of phase change. i: General theory.
\newblock \emph{Journal of Chemical Physics}, 7\penalty0 (12):\penalty0
  1103--1112, 1939.

\bibitem[Martin and Bellon(1996)]{martin1996driven}
Georges Martin and Pascal Bellon.
\newblock Driven alloys.
\newblock \emph{Solid State Physics}, 50:\penalty0 189--331, 1996.

\bibitem[Laidler(1987)]{laidler1987chemical}
Keith~J. Laidler.
\newblock \emph{Chemical Kinetics}.
\newblock Harper \& Row, 3rd edition, 1987.

\bibitem[Porter and Easterling(1992)]{porter1992phase}
David~A. Porter and Kenneth~E. Easterling.
\newblock \emph{Phase Transformations in Metals and Alloys}.
\newblock Chapman \& Hall, 2nd edition, 1992.

\bibitem[Qi et~al.(2024)Qi, Zeng, Xie, et~al.]{qi2023finetuning}
Xiangyu Qi, Yi~Zeng, Tinghao Xie, et~al.
\newblock Fine-tuning aligned language models compromises safety, even when
  users do not intend to!
\newblock \emph{International Conference on Learning Representations}, 2024.

\bibitem[Aoyama et~al.(2025)Aoyama, Wilcox, and
  Schneider]{aoyama2025predicting}
Tatsuya Aoyama, Ethan Wilcox, and Nathan Schneider.
\newblock Predicting the emergence of induction heads in language model
  pretraining.
\newblock \emph{arXiv preprint arXiv:2511.16893}, 2025.

\bibitem[Wei et~al.(2022)Wei, Tay, Bommasani, et~al.]{wei2022emergent}
Jason Wei, Yi~Tay, Rishi Bommasani, et~al.
\newblock Emergent abilities of large language models.
\newblock \emph{Transactions on Machine Learning Research}, 2022.

\bibitem[Schaeffer et~al.(2023)Schaeffer, Miranda, and
  Koyejo]{schaeffer2023mirage}
Rylan Schaeffer, Brando Miranda, and Sanmi Koyejo.
\newblock Are emergent abilities of large language models a mirage?
\newblock In \emph{Advances in Neural Information Processing Systems}, 2023.

\bibitem[Elhage et~al.(2021)Elhage, Nanda, Olsson,
  et~al.]{elhage2021mathematical}
Nelson Elhage, Neel Nanda, Catherine Olsson, et~al.
\newblock A mathematical framework for transformer circuits.
\newblock \emph{Transformer Circuits Thread}, 2021.

\bibitem[Reddy(2024)]{reddy2024mechanistic}
Gautam Reddy.
\newblock The mechanistic basis of data dependence and abrupt learning in an
  in-context classification task.
\newblock \emph{International Conference on Learning Representations}, 2024.

\bibitem[Singh et~al.(2023)Singh, Chan, Moskovitz, Grant, Saxe, and
  Hill]{singh2023transient}
Aaditya~K. Singh, Stephanie C.~Y. Chan, Ted Moskovitz, Erin Grant, Andrew~M.
  Saxe, and Felix Hill.
\newblock The transient nature of emergent in-context learning in transformers.
\newblock \emph{Advances in Neural Information Processing Systems}, 2023.

\bibitem[Yaida(2019)]{yaida2019fluctuation}
Sho Yaida.
\newblock Fluctuation-dissipation relations for stochastic gradient descent.
\newblock In \emph{International Conference on Learning Representations}, 2019.

\bibitem[Neelakantan et~al.(2015)Neelakantan, Vilnis, Le, Sutskever, Kaiser,
  Kurach, and Martens]{neelakantan2015adding}
Arvind Neelakantan, Luke Vilnis, Quoc~V. Le, Ilya Sutskever, Lukasz Kaiser,
  Karol Kurach, and James Martens.
\newblock Adding gradient noise improves learning for very deep networks.
\newblock \emph{arXiv preprint arXiv:1511.06807}, 2015.

\bibitem[Singh et~al.(2024)Singh, Moskovitz, Hill, Chan, and
  Saxe]{singh2024dual}
Aaditya~K. Singh, Ted Moskovitz, Felix Hill, Stephanie C.~Y. Chan, and
  Andrew~M. Saxe.
\newblock What needs to go right for an induction head? a mechanistic study of
  in-context learning circuits and their formation.
\newblock \emph{International Conference on Machine Learning}, 2024.

\bibitem[Hallgren(2025)]{hallgren2025crystals}
Jonas Hallgren.
\newblock Have you tried thinking about it as crystals?
\newblock LessWrong, December 2025, 2025.

\end{thebibliography}

\end{document}